\documentclass[11pt]{article}
\usepackage[margin=1in]{geometry}
\usepackage[T1]{fontenc}

\usepackage[dvipsnames]{xcolor}
\usepackage{algorithm}
\usepackage[noend]{algpseudocode}

\algnewcommand{\LComment}[1]{\Statex \textcolor{gray}{\(\triangleright\) \textit{#1}}}

\usepackage[hyphens]{url}
\usepackage[sort]{natbib}
\usepackage{rcx}
\usepackage{amssymb}

\usepackage{enumitem}
\usepackage{footnote}
\usepackage[bottom]{footmisc}
\interfootnotelinepenalty=10000

\usepackage{tablefootnote}
\usepackage{booktabs}
\usepackage{makecell}
\usepackage{subcaption}
\newcolumntype{C}[1]{>{\centering\arraybackslash}p{#1}}

\makeatletter
\newcommand{\phantomfill}[2]{{\mathpalette\mask@{{#1}{#2}}}}
\newcommand{\mask@}[2]{\mask@@{#1}#2}
\newcommand{\mask@@}[3]{%
  \settowidth{\dimen@}{$\m@th#1#2$}%
  \makebox[\dimen@]{$\m@th#1#3$}%
}
\makeatother

\usepackage{siunitx}[=v2]
\sisetup{output-exponent-marker=\mathrm{e}, bracket-negative-numbers, open-bracket={\text{-}}, close-bracket={}}

\hypersetup{
  colorlinks   = true,
  urlcolor     = blue,
  linkcolor    = blue,
  citecolor    = blue
}


\newcommand{\LP}{\hyperlink{eq:primal}{\textnormal{LP}}}
\newcommand{\LPp}{\hyperlink{eq:primal}{\textnormal{Primal LP}}}
\newcommand{\LPd}{\hyperlink{eq:dual}{\textnormal{Dual LP}}}

\newcommand\dVC{d_\mathrm{VC}}
\newcommand\dP{d_\mathrm{P}}

\usepackage{authblk}

\begin{document}

\title{A Unified Post-Processing Framework for\\Group Fairness in Classification}
\date{}

\ifpdf
\renewcommand*{\Authand}{\qquad}
\author{Ruicheng Xian}
\author{Han Zhao}
\affil{University of Illinois Urbana-Champaign\protect\\{\small\texttt{\{\href{mailto:rxian2@illinois.edu}{rxian2},\href{mailto:hanzhao@illinois.edu}{hanzhao}\}@illinois.edu}}}
\else
\author[1]{Ruicheng Xian}
\author[2]{Han Zhao}
\affil[1]{University of Illinois Urbana-Champaign\protect\\{\small\texttt{\href{mailto:rxian2@illinois.edu}{rxian2@illinois.edu}}}}
\affil[2]{University of Illinois Urbana-Champaign\protect\\{\small\texttt{\href{mailto:hanzhao@illinois.edu}{hanzhao@illinois.edu}}}}
\fi

\maketitle

\begin{abstract}%
We present a post-processing algorithm for fair classification that covers group fairness criteria including statistical parity, equal opportunity, and equalized odds under a single framework, and is applicable to multiclass problems in both attribute-aware and attribute-blind settings.  Our algorithm, called ``LinearPost'', achieves fairness post-hoc by linearly transforming the predictions of the (unfair) base predictor with a ``fairness risk'' according to a weighted combination of the (predicted) group memberships.  It yields the Bayes optimal fair classifier if the base predictors being post-processed are Bayes optimal, otherwise, the resulting classifier may not be optimal, but fairness is guaranteed as long as the group membership predictor is multicalibrated.  The parameters of the post-processing can be efficiently computed and estimated from solving an empirical linear program.  Empirical evaluations demonstrate the advantage of our algorithm in the high fairness regime compared to existing post-processing and in-processing fair classification algorithms.\footnote{\label{fn:code}Our code is available at \url{https://github.com/uiuctml/fair-classification}.}
\end{abstract}

\section{Introduction}\label{sec:intro}
Algorithmic fairness is an important consideration in machine learning due to concerns that models may propagate social biases present in the training data against historically disadvantaged demographics~\citep{barocas2023FairnessMachineLearning}, especially when involving sensitive domains such as criminal justice, healthcare, and finance~\citep{barocas2016BigDataDisparate,berk2021FairnessCriminalJustice}. Work on this topic includes evaluation of unfairness, design of fair learning algorithms, and analysis of the inherent limitations in achieving fairness~\citep{kleinberg2017InherentTradeOffsFair,zhao2022inherent}.  To evaluate model unfairness, \textit{group fairness} metrics group individuals by pre-specified attributes (called \textit{sensitive attributes}, such as gender, race, etc.\@) and compute the difference between the aggregate statistics of the model output conditioned on the groups. Popular group fairness metrics include \textit{statistical parity}~(SP; \citealp{calders2009BuildingClassifiersIndependency}), \textit{equal opportunity}~(EOpp), and \textit{equalized odds}~(EO; \citealp{hardt2016EqualityOpportunitySupervised}); they differ in their specification of groups.

Fair learning algorithms can be broadly categorized into pre-processing, in-processing, and post-processing approaches, depending on where bias mitigation takes place in the training pipeline.  Pre-processing is applied to the (apparent) source of unfairness---the training data---by removing biased associations via data cleaning or reweighting techniques~\citep{kamiran2012DataPreprocessingTechniques,calmon2017OptimizedPreProcessingDiscrimination}; however, debiasing training data does not generally guarantee fairness for models trained on them.  In-processing modifies the model training objective to incorporate the fairness constraints, but necessitates specialized optimization procedures~\citep{zemel2013LearningFairRepresentations,zafar2017FairnessConstraintsMechanisms,agarwal2018ReductionsApproachFair}.  Post-processing~\citep{xian2023FairOptimalClassification,chzhen2020FairRegressionWasserstein,legouic2020ProjectionFairnessStatistical} is performed after the model is trained, by remapping model outputs to satisfy fairness (the remapping takes simpler form and is easier to train than the backbone model; in our case, it is a linear classifier on top of model outputs).  These approaches are not mutually exclusive and could be applied altogether, but options may be limited by the computational budget, or when parts of the training pipeline cannot be modified.

Among the three approaches above, post-processing is the most flexible and computationally lightweight as it is post-hoc, model-agnostic, and incurs minimal overhead. It is applicable to scenarios where the decision to perform bias mitigation is to be made after training and evaluating an initial iteration of the model, or if (re-)training subject to fairness constraints is expensive or not possible, such as when operating under the ``model as a service'' paradigm that utilizes large foundation models~\citep{sun2022BlackBoxTuningLanguageModelasaService,gan2023ModelasaServiceMaaSSurvey}.  Post-processing does not compromise on performance in the sense that it yields the Bayes optimal fair classifier when the predictors being post-processed are Bayes optimal score functions~\citep{zhao2022inherent,xian2023FairOptimalClassification} (which is not out of reach given the ability to train large models~\citep{ji2021EarlystoppedNeuralNetworks}).\footnote{We make the distinction between a score function and a classifier: the outputs of the former are probability estimates (scores, e.g., values in $[0,1]$ for binary classification), and the latter are class labels (e.g., $\{0,1\}$).}  Access to Bayes scores may appear to be a strong requirement for good post-processing performance. In particular, it is shown that if the hypothesis class $\calF$ is restricted and does not contain the Bayes scores, \citet{woodworth2017LearningNonDiscriminatoryPredictors} construct problem instances where post-processing underperforms in-processing.  But the empirical study of \citet{cruz2024UnprocessingSevenYears} on real-world data shows post-processing achieving better accuracy-fairness tradeoffs, suggesting that predictors learned in practice using off-the-shelf algorithms are adequate for post-processing.  Moreover, \citet{globus-harris2023MulticalibratedRegressionDownstream} show that post-processing performance matches in-processing as long as the base predictor satisfies a notion of \textit{multicalibration} with respect to $\calF$~\citep{hebert-johnson2018MulticalibrationCalibrationComputationallyIdentifiable,gopalan2022Omnipredictors}.

There is already a rich literature on post-processing algorithms for fair classification, but they are often stylized to specific fairness criteria and simplified problem settings, such as binary instead of multiclass, and \textit{attribute-aware} instead of \textit{attribute-blind} (i.e., whether the sensitive attribute is observed at test time).  We review some existing algorithms in \cref{sec:related}; notably among them, \citet{chen2024PosthocBiasScoring} recently propose a framework applicable to group fairness criteria covering SP, EOpp, and EO under both attribute-aware and blind settings, despite limited to binary classification.

Furthering this existing line of work, in this paper, we present a unified post-processing framework for fair classification that covers fairness criteria expressed as the difference of first-order group-conditional moments of the classifier output (\cref{def:parity}, which covers SP, EOpp, and EO), as well as all problem settings mentioned above (with attribute-blind and multiclass being the most general).  Our proposed algorithm maps the outputs of the base predictor (score functions) to class labels satisfying the required fairness criteria.

The paper is organized as follows. We provide problem setup and definitions, and summarize our main results in \cref{sec:summary}, followed by related work in \cref{sec:related}.  Then in \cref{sec:main}, we show that the optimal fair classifier can be expressed as a linear post-processing of Bayes optimal predictors (without fairness constraints).  Based on this, we present a fair post-processing algorithm in \cref{sec:standard}, and discuss best practices and how to instantiate it to SP, EOpp, and EO in attribute-aware and blind settings.  The sample complexity is analyzed, as well as the excess risk and unfairness of the classifier returned by our algorithm when the base predictors are not optimal.  We close by evaluating our algorithm on several datasets in \cref{sec:exp} and comparing to existing in-processing and post-processing algorithms.

\subsection{Preliminaries and Main Results}\label{sec:summary}

A classification problem is defined by a joint distribution over input features $X\in\calX$, labels $Y\in\calY$ (categorical), and group membership indicators $Z\in\{0,1\}^{K}$ for a set of $K$ groups, where $Z_k=1$ represents membership to group $k$ (the groups can be overlapping, that is, more than one group indicators can be active at a time).  Uppercase letters refer to random variables.

Let $r:\calX\times\calY\rightarrow[0,\infty)$ denote the non-negative \textit{pointwise risk}, where $r(x,y')$ is the risk of predicting class $y'$ on input $x$ (we may occasionally override notation and write $r(x)\equiv r(x,\cdot)\in[0,\infty)^\calY$). The overall risk of a classifier $h:\calX\rightarrow\calY$ is then $R(h)=\E[r(X, h(X))]$.  If the risk is computed with a loss function $\ell:\calY\times\calY\rightarrow [0,\infty)$, then the pointwise risk is $r(x,y')=\E[\ell(y',Y)\mid X=x]$.  For example, when $\ell$ is the 0-1 loss, then $r(x,y')$ is the (average) classification error on input $x$ from predicting $y'$:
\begin{equation}
  r(x,y')=\E[\1[Y\neq y']\mid X=x]=\Pr(Y\neq y'\mid X=x)=1-\Pr(Y=y'\mid X=x).\label{eq:0.1.loss}
\end{equation}
Evaluating the pointwise risk requires the conditional distribution of $\Pr(Y\mid X)$---called the \textit{Bayes optimal score function}---which is usually unknown and needs to be learned from the data, such as by fitting a predictor on labeled examples (without fairness constraints).

The Bayes optimal classifier, which achieves the minimum risk, is given by
\begin{equation}
  x\mapsto \argmin_{y\in\calY} r(x,y). \label{eq:optimal.unconst}
\end{equation}
However, this classifier may be unfair when biases are present in the underlying distribution (or the training data on which an $\hat r$ is learned), so instead we want to find the (potentially randomized) Bayes optimal fair classifier.
We consider fairness criteria that can be expressed as the difference of first-order group-conditional moments of the classifier output:

\begin{table}[t]
  \caption{Recovering commonly used group fairness criteria from \cref{def:parity} via the choice of the fairness constraints $\calC$ and group indicators $Z$. The sensitive attribute is denoted by $A\in\calA$.}
  \label{tab:equivalence}
  \centering
  \scalebox{0.85}{\begin{tabular}{lll}
      \toprule
      Criterion & Definition                                                           & \makecell[l]{Equivalent Choice of $\calC$ with\\$\phantomfill{\;\;Z_{a,y}}{\hfill Z_a}=\1[A=a]$\\$\;\;Z_{a,y}=\1[A=a,Y=y]$}                                                                    \\
      \midrule
      \makecell[l]{Statistical Parity
      }                  & $\begin{aligned}\displaystyle\begin{multlined}[t][3.65in]\max_{a,a'\in\calA} \big|\!\Pr(h(X)=y \mid A=a) \\[-1.7em] - \Pr(h(X)=y \mid A=a')\big|\leq \alpha, \; \phantomfill{\forall y,y'\in\calY}{\hfill\forall y\in\calY}\end{multlined}\end{aligned}$                                                     & $\{(y,\calA):y\in\calY\}$                    \\[1.7em]
      \makecell[l]{Equal Opportunity                                                                                                                                                                                                               \\(TPR Parity)
      }                  & $\begin{aligned}\displaystyle\begin{multlined}[t][3.65in]\max_{a,a'\in\calA} \big|\!\Pr(h(X)=1 \mid A=a,Y=1) \\[-1.7em] - \Pr(h(X)=1 \mid A=a',Y=1)\big|\leq \alpha\phantom{, \; \forall y,y'\in\calY}\end{multlined}\end{aligned}$ & $\{(1,\calA\times\{1\})\}$                                                                                           \\[1.7em]
      \makecell[l]{Multiclass\\Equal Opportunity
      }                  & $\begin{aligned}\displaystyle\begin{multlined}[t][3.65in]\max_{a,a'\in\calA} \big|\!\Pr(h(X)=y \mid A=a,Y=y) \\[-1.7em] - \Pr(h(X)=y \mid A=a',Y=y)\big|\leq \alpha, \; \phantomfill{\forall y,y'\in\calY}{\hfill\forall y\in\calY}\end{multlined}\end{aligned}$ & $\{(y,\calA\times\{y\}):y\in\calY\}$                      \\[1.7em]
      \makecell[l]{Equalized Odds
      }                  & $\begin{aligned}\displaystyle\begin{multlined}[t][3.65in]\max_{a,a'\in\calA} \big|\!\Pr(h(X)=y \mid A=a,Y=y') \\[-1.7em] - \Pr(h(X)=y \mid A=a',Y=y')\big|\leq \alpha, \; \forall y,y'\in\calY\end{multlined}\end{aligned}$                                                 & $\{(y,\calA\times\{y'\}):y,y'\in\calY\}$  \\
      \bottomrule
    \end{tabular}}
\end{table}

\begin{definition}[Group Fairness]\label{def:parity}
  Let the collection of fairness constraints be given in the form of $\calC=\{(y_1,\calI_1),(y_2,\calI_2),\dots,(y_C,\calI_C)\}$, where each set of constraints $(y_c,\calI_c)$ specifies a subset of (indices of) the group indicators $\calI_c\subseteq [K]\coloneqq\{1,\dots,K\}$ over which parity in output class $y_c\in\calY$ is evaluated/to be required.
  
  For tolerance $\alpha\in[0,1]$, a (randomized) classifier $h:\calX\rightarrow\calY$ satisfies $\alpha$-group fairness with respect to $\calC$ if
  \begin{align}
    \max_{k,k'\in\calI_c}\envert*{ \Pr(h(X)=y_c  \mid Z_k=1) - \Pr(h(X)=y_c \mid Z_{k'}=1) } \leq \alpha,\quad \forall c \in[C],
  \end{align}
  that is, the fraction of individuals belonging to group $k$ on which $h$ predicts $y_c$ differs from that of group $k'$ by no more than $\alpha$, for all $k,k'\in\calI_c$.
\end{definition}

\begin{remark}\label{rem:parity}
  \Cref{def:parity} covers the majority of commonly used group fairness criteria.  Let $A\in\calA$ denote the (categorical) sensitive attribute, which may represent gender, or race, etc.\ (to be distinguished from the generalized notion of group $Z$ in our definition).
  For instance, statistical parity, 
  \begin{align}
    \max_{a,a'\in\calA}\envert*{ \Pr(h(X)=y \mid A=a) - \Pr(h(X)=y \mid A=a') } \leq \alpha,\quad \forall y\in\calY,
  \end{align}
  is recovered by choosing $\calC=\{(y,\calA):y\in\calY\}$ with $Z_a=\1[A=a]$---indicators for $A$.  Equal opportunity and equalized odds, which are more complex, are recovered with group indicators of the form $Z_{a,y}=\1[A=a,Y=y]$: see \cref{tab:equivalence}.  However, \cref{def:parity} cannot recover the criterion of predictive parity (\citealp{chouldechova2017FairPredictionDisparate}; also considered as group-wise calibration), $Y\perp A\mid h(X)$, because here the classifier output $h(X)$ is in the conditioning part.
\end{remark}

\cref{def:parity} naturally handles overlapping groups and allows combining different fairness criteria.\footnote{Although, some criteria may not be compatible with each other.  For example, generally, exact SP and EO is only simultaneously achieved by constant classifiers~\citep{kleinberg2017InherentTradeOffsFair}.}

\paragraph{Main Results.}  Define the Bayes optimal group membership predictor $g:\calX\times[K]\rightarrow[0,1]$ by
\begin{equation}
  g(x,k)=\Pr(Z_k=1\mid X=x);\label{eq:constraint.fn}
\end{equation}
we may occasionally override notation and write $g(x)\equiv g(x,\cdot)\in[0,1]^K$ (or the $(K-1)$-dimensional simplex if each input belongs to exactly one group, which is the case for the fairness criteria in \cref{tab:equivalence}).

We show in \cref{thm:opt.fair} that, under a uniqueness assumption that can be satisfied via random perturbation, the Bayes optimal fair classifier---which achieves the minimal risk while satisfying the fairness constraints---takes the form of
\begin{equation}
    x\mapsto \argmin_{y\in\calY}\!\Bigg( r(x,y) + \underbrace{\sum_{k\in[K]} g(x,k)w(y,k) }_{\text{fairness risk}} \Bigg) \label{eq:0}
\end{equation}
for some weights $w$ depending on $\calC$ and $\alpha$.  This shows that the Bayes optimal fair classifier can be obtained from post-processing the Bayes optimal score: by offsetting the pointwise risk $r$ (which is a function of the score) with an additive ``fairness risk'' (similar in concept to the \textit{bias score} of~\citep{chen2024PosthocBiasScoring}).  
As a sanity check, when $w=0$, corresponding to not performing fair post-processing, it recovers the Bayes optimal classifier in \cref{eq:optimal.unconst}.  With post-processing, the weights $w(y,k)$ have values such that they discount the risk for individuals belonging to group $k$'s that would otherwise have a low rate of being labeled class $y$ to increase the rate, and vice versa. 

\Cref{eq:0} reveals two ingredients needed to perform fair post-processing: the group membership predictor $g$, and the appropriate weights $w$.  For SP in the attribute-aware setting, $g$ is given as the one-hot indicator for the explicitly provided sensitive attribute $A$, but in the attribute-blind setting, and for EOpp and EO where the group definition involves the label $Y$ which is not observed during test time, the group membership predictor needs to be learned.  As for the weights, we will show that they are given (and can be estimated from finite samples) by the optimal dual values of a linear program (\LP) representing the (empirical) fair post-processing problem.

Using these results,  in \cref{sec:standard}, we propose a fair classification algorithm based on post-processing.  The algorithm first estimates the pointwise risk $\hat r$ and group predictor $\hat g$ without fairness constraints using off-the-shelf learning algorithms---as plugins of the optimal $r,g$---then the weights $\hat w$ of the post-processing, and returns a classifier $\hat h$ of the same form of \cref{eq:0} but in terms of $(\hat r,\hat g,\hat w)$.  The excess risk and unfairness of $\hat h$ due to approximation and estimation error are analyzed in \cref{sec:sensitivity,sec:finite.sample}: notably, its unfairness is upper bounded by the multicalibration error of the predictor, and its excess risk by the $L^1$ distance of $\hat r$ from the optimal $r$. The excess risk is also more sensitive to the distance of $\hat r$ from $r$ when the fairness tolerance $\alpha$ is set to small values, providing another explanation for empirical observations that the post-processed classifier's accuracy tends to drop more rapidly in the high fairness regime, in addition to the inherent tradeoffs.

\subsection{Related Work}\label{sec:related}

This work considers achieving group fairness on classification problems with respect to fairness criteria that can be expressed as the difference of first-order group-conditional moments of the classifier output (\cref{def:parity}), including SP, EOpp, and EO\@.
There are other group-level criteria such as predictive parity~\citep{chouldechova2017FairPredictionDisparate} and ones defined using higher-order moments~\citep{woodworth2017LearningNonDiscriminatoryPredictors}, as well as individual-level metrics that look at how much model output varies on ``similar'' individuals~\citep{dwork2012FairnessAwareness}.  Further, \citet{hebert-johnson2018MulticalibrationCalibrationComputationallyIdentifiable} and \citet{kearns2018PreventingFairnessGerrymandering} consider the scenario where the groups are not pre-specified but only known to be generated from a class.\footnote{\label{fn:identify}That is, the indicators (or, in the attribute-blind setting, the Bayes optimal predictors) for the (unknown) groups of interest are assumed to be contained in a known (structured/finite) hypothesis class.}

\paragraph{In-Processing.}
Among in-processing algorithms for fair classification, two commonly used are fair representation learning~\citep{zemel2013LearningFairRepresentations,zhao2020ConditionalLearningFair} and the reductions approach~\citep{agarwal2018ReductionsApproachFair,yang2020FairnessOverlappingGroups}.  By training multilayer neural networks, fair representation learning algorithms learn feature representations at an intermediate layer that are distributionally invariant when conditioned on the groups (e.g., via adversarial training; \citealp{zhang2018MitigatingUnwantedBiases}), which is sufficient for satisfying the fairness constraint at the output layer.  The reduction approach analyzes the Lagrangian of the fair classification problem and turns it into a two-player game where the goal is to find the equilibrium; the algorithm returns a randomized ensemble of classifiers.

\paragraph{Post-Processing.}
Most existing post-processing algorithms for fair classification are stylized to specific fairness criteria and problem settings. The algorithms of \citet{jiang2020WassersteinFairClassification}, \citet{gaucher2023FairLearningWasserstein}, \citet{denis2023FairnessGuaranteeMulticlass}, and \citet{xian2023FairOptimalClassification} only support the SP criterion, whereas those of \citet{hardt2016EqualityOpportunitySupervised} and \citet{chzhen2019LeveragingLabeledUnlabeled} only support EOpp and EO in the attribute-aware setting.  Attribute-awareness is also required by the algorithms of \citet{menon2018CostFairnessBinary}, \citet{li2023FaiREEFairClassification}, and \citet{globus-harris2023MulticalibratedRegressionDownstream}. \citet{chen2024PosthocBiasScoring} and \citet{zeng2024BayesOptimalFairClassificationa} only consider binary classification.

Fair post-processing algorithms typically map the outputs of the base predictor directly to class labels. \citet{alghamdi2022AdultCOMPASFair} and \citet{tifrea2024FRAPPEGroupFairness} instead propose algorithms that output fair scores rather than class labels, via projecting the base predictor's output scores to satisfy fairness while minimizing the divergence between the projected and the original scores (e.g., KL divergence).  This objective can be shown to maximize the Brier score of the post-processed predictor~\citep{wei2020OptimizedScoreTransformation}, but it may not align with performance metrics such as accuracy.

\section{Optimal Fair Classifier as a Post-Processing}\label{sec:main}

Given pointwise risk $r:\calX\times\calY\rightarrow[0,\infty)$ (e.g., \cref{eq:0.1.loss}), Bayes optimal group predictor $g:\calX\times[K]\rightarrow[0,1]$ (see \cref{eq:constraint.fn}), fairness constraints  $\calC$ and tolerance $\alpha$ (\cref{def:parity}), we want to compute a (randomized) classifier $h:\calX\rightarrow\calY$ that is optimal for the problem of
\begin{equation}
  \min_{h:\calX\rightarrow\calY} \E[r(X,h(X))]\quad\textrm{subject to}\quad \textrm{$\alpha$-group fairness w.r.t.\ $\calC$}.  \label{eq:prob}
\end{equation}
This problem always has a (trivial) solution: any constant classifier (\cref{prop:constant}).

We show that, under a problem-dependent condition that can always be satisfied by randomly perturbing the pointwise risk (\cref{rem:continuity}; to be discussed in \cref{sec:continuity}), the Bayes optimal fair classifier can be represented as a post-processing of $r$ (\cref{ass:uniqueness} and \LPd\ will be presented in the next section):

\begin{theorem}[Representation]\label{thm:opt.fair}
  Under \cref{ass:uniqueness}, a minimizer of \cref{eq:prob} is given by
  \begin{equation}
    h(x) = \argmin_{y\in\calY}\!\Bigg(r(x,y) + \sum_{k\in[K]}  g(x,k)w(y,k)\Bigg),\quad w(y,k)= -\sum_{c\in[C]} \1[y_c=y, k\in\calI_c] \frac{\psi_{c,k}}{\Pr(Z_k=1)}
  \end{equation}
  (break ties to the minimizing class with the smallest index $y$), where $\{\psi_{c,k}:c\in[C], k\in\calI_c\}$ is a maximizer of $\LPd(r,g,\PP_X,\calC,\alpha)$.
\end{theorem}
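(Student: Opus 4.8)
The plan is to recognize \cref{eq:prob} as an infinite-dimensional linear program over the space of randomized classifiers and to read off the stated representation from its Lagrangian dual. First I would encode a randomized classifier by the conditional probabilities $p(x,y)=\Pr(h(X)=y\mid X=x)$, so that $p(x,\cdot)$ ranges over the simplex $\Delta^{\calY}$ and the objective $\E[\sum_{y}r(X,y)\,p(X,y)]$ is linear in $p$. The key observation is that the max-difference constraint in \cref{def:parity} is equivalent, for each $c$, to the existence of a scalar $\beta_c$ with $\beta_c\leq \Pr(h(X)=y_c\mid Z_k=1)\leq \beta_c+\alpha$ for all $k\in\calI_c$ (take $\beta_c$ to be the minimum group rate); introducing these auxiliary variables $\beta_c$ turns the fairness requirement into finitely many linear inequalities. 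Using the tower property, each group-conditional rate is itself linear in $p$, namely $\Pr(h(X)=y_c\mid Z_k=1)=\E[g(X,k)\,p(X,y_c)]/\Pr(Z_k=1)$, which is where the group predictor $g$ enters.

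Next I would attach nonnegative multipliers $\lambda_{c,k}$ and $\mu_{c,k}$ to the lower and upper bounds of the two-sided constraints and form the Lagrangian. Writing $\psi_{c,k}$ for the resulting signed combination of $\lambda_{c,k}$ and $\mu_{c,k}$, collecting the coefficient of each $p(x,y)$ yields a pointwise-separable inner problem with per-$x$ integrand $\sum_{y}\big(r(x,y)+\sum_{k}g(x,k)w(y,k)\big)\,p(x,y)$, where $w(y,k)=-\sum_{c}\1[y_c=y,\,k\in\calI_c]\,\psi_{c,k}/\Pr(Z_k=1)$, matching the statement; a single signed variable $\psi_{c,k}$ suffices because the lower- and upper-bound multipliers for a given pair $(c,k)$ are never simultaneously active (and stationarity in the free variables $\beta_c$ further ties the $\psi_{c,k}$ together within each group $c$). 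Minimizing this integrand over $p(x,\cdot)\in\Delta^{\calY}$ for each $x$ places all mass on an $\argmin_{y}$ of the bracketed cost, giving the claimed deterministic post-processing rule.

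The two things I need to justify are that there is no duality gap and that the candidate rule is primal-optimal and feasible. For the former I would invoke the existence of an always-feasible point---any constant classifier is feasible by \cref{prop:constant}---to get strong duality for this (infinite-dimensional) LP, so that the optimal dual $\psi$ from $\LPd$ certifies optimality. For the latter I would argue via complementary slackness: given the optimal $\psi$, the pointwise $\argmin$ rule attains the dual bound, hence is optimal provided it is feasible, and \cref{ass:uniqueness} is what guarantees feasibility by forcing the minimizer $y$ to be unique for $\PP_X$-almost every $x$. This pins down the induced group rates to the values dictated by complementary slackness (with the auxiliary $\beta_c$ recovered as the common active level), so no mass needs to be split across ties to meet the constraints.

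I expect the main obstacle to be the duality-gap and feasibility argument rather than the algebra of the Lagrangian. Concretely, the subtle point is showing that the unconstrained-looking $\argmin$ rule produced by the dual optimum actually lands in the feasible region: in a generic LP one might need to randomize on the tie set to hit the constraint values exactly, and it is precisely \cref{ass:uniqueness} (achievable by the perturbation of \cref{rem:continuity}) that removes this possibility by making ties a $\PP_X$-null event, so that the deterministic rule both satisfies $\alpha$-group fairness with the correct active constraints and matches the primal optimum guaranteed by strong duality.
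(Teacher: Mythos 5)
Your proposal is correct and follows essentially the same route as the paper: encode the randomized classifier as conditional probabilities $\pi(x,y)$, linearize the pairwise parity constraints via one auxiliary scalar per constraint group (your lower-endpoint $\beta_c$ with window $[\beta_c,\beta_c+\alpha]$ is an equivalent reparametrization of the paper's centroid $q_c$ with $\pm\alpha/2$, and after eliminating the redundant multiplier it yields the same dual with the tie condition $\sum_{k\in\calI_c}\psi_{c,k}=0$), then read off the $\argmin$ rule from complementary slackness and use \cref{ass:uniqueness} to conclude the deterministic rule agrees almost everywhere with the optimal randomized solution, hence is feasible and optimal. Your explicit attention to strong duality and to feasibility of the tie-free $\argmin$ rule is exactly the subtle point the paper handles (it treats the program as a finite LP where strong duality is automatic), so there is no substantive gap.
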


As discussed earlier on \cref{eq:0}, this classifier adjusts the scores computed by $r(x)$ with an offset that can be interpreted as a ``fairness risk'', then outputs the class with the minimum risk.  We can view the post-processing as a linear classifier on the outputs of $r$ and $g$ as features, $(r(x),g(x))\in \RR^{\calY\times K}$; this linear classifier has coefficients $(\be_{y'}, w(y',\cdot))$ for each output class $y'\in\calY$, where $\be_{y'}\in\RR^\calY$ is the one-hot vector for $y'$.

In subsequent \cref{sec:derivation,sec:continuity}, we derive the form of the optimal fair classifier in \cref{thm:opt.fair}, and discuss \cref{ass:uniqueness} and how to satisfy it via random perturbation.  Then in \cref{sec:standard}, using these results, we propose a fair classification algorithm based on post-processing, describe how to efficiently estimate the weights $\hat w$ from finite samples, and analyze the sample complexity as well as the error from using inaccurate $\hat r,\hat g$ in practice as plugins for $r,g$.

\subsection{Deriving the Optimal Fair Classifier}\label{sec:derivation}

The representation result for the optimal fair classifier in \cref{thm:opt.fair} follows from formulating and analyzing \cref{eq:prob} as a linear program in terms of the pointwise risk $r$ and the group predictor $g$.

First, we represent the randomized classifier $h:\calX\rightarrow\calY$ in tabular form by a lookup table (a row-stochastic matrix), $\pi\in[0,1]^{\calX\times\calY}$, where row $x$ is the output distribution of $h$ given $x$, namely,
\begin{equation}\label{eq:coupling}
  \pi(x,y)\equiv \Pr(h(X)=y\mid X=x).
\end{equation}
Then, we can express both the objective and constraints of \cref{eq:prob} as inner products between $\pi$ and $r,g$, respectively, leading to a linear program with $\envert{\calX}\envert{\calY}+C$ variables and $\envert{\calX}+\sum_c\envert{\calI_c}$ constraints:
\begin{flalign}
  \quad\hypertarget{eq:primal}{\textrm{Primal LP:}} &&&&  &  & \min_{\mathclap{\pi\geq 0,q}}        \;\;      & \E_{X}\!\Bigg[\sum_{y\in\calY}  r(X,y)\pi(X,y)\Bigg]        \\
                                             &&&&  &        & \textrm{subject to}           \;\;    & \sum_{y\in\calY}\pi(x,y)  = 1, &  & \forall x\in\calX,                                                                                                                     \\
                                             &&&&  &       &                    & \envert*{\E_X\sbr*{\frac{g(X,k)}{\Pr(Z_k=1)}\pi(X,y_c)} - q_c  } \leq \frac\alpha2, &  & \forall c\in[C],\, k\in\calI_c. &&&
\end{flalign}
The first set of constraints is for the row-stochasticity of $\pi$, and the second is the fairness constraint for $\alpha$-group fairness with respect to $\calC$.
The variable $q_c$'s represent the centroids of the group-conditional probabilities for predicting class $y_c$, and are introduced to replace the pairwise comparisons (polynomial in $\envert{\calI_c}$ in total) with a linear number of comparisons to the centroid, since $|u_i-u_j|\leq\alpha$,  $\forall i,j$, if and only if $\exists q$ s.t.\ $|u_i-q|\leq \alpha/2$, $\forall i$.
Then the equivalence between the second constraint to group fairness follows directly from Bayes' rule: by \cref{eq:constraint.fn,eq:coupling},
\begin{align}
  \E_X\sbr*{ \frac{g(X,k)}{\Pr(Z_k=1)}\pi(X,y_c) }
   & =\int_\calX \Pr(X=x\mid Z_k=1) \Pr(h(X)=y_c\mid X=x) \dif x      \\
   & =\int_\calX \Pr(X=x\mid Z_k=1) \Pr(h(X)=y_c\mid X=x, Z_k=1) \dif x \\
   & =\int_\calX \Pr(h(X)=y_c, X=x\mid Z_k=1)\dif x                   \\
   & =\Pr(h(X)=y_c \mid Z_k=1); \label{eq:constraint.eq}
\end{align}
the second equality is because the distribution of $h(X)$ is fully determined given $X$.  The equivalence for the objective follows similarly.

The proof of \cref{thm:opt.fair} requires analyzing the dual problem of \LPp\ (derivation deferred to \cref{sec:proof}, which follows similar steps in deriving the dual of the Kantorovich optimal transport problem):
\begin{flalign}
\quad\hypertarget{eq:dual}{\textrm{Dual LP:}} &&&& &  & \max_{\mathclap{\phi,\psi}}       \;\;        & \E_X[\phi(X)] - \frac{\alpha}{2} \sum_{c\in[C]}\sum_{k\in\calI_c} \envert{\psi_{c,k}}                 \\
   &&&&  &        & \textrm{subject to}        \;\;       & \phi(x) +  \sum_{c:y_c=y}\sum_{k\in\calI_c} g(x,k) \frac{\psi_{c,k}}{\Pr(Z_k=1)} \leq r(x,y), &  & \forall x\in\calX,\,y\in\calY,                                                                                                                      \\
  &&&&   &        &                                & \sum_{k\in\calI_c} \psi_{c,k}=0,                                                                   &  & \forall c\in[C]. &&&
\end{flalign}

\begin{proof}[Proof of \cref{thm:opt.fair}]
Let $(\pi,q)$ be a minimizer of $\LPp(r,g,\PP_X,\calC,\alpha)$ and $(\phi,\psi)$ its corresponding dual values (maximizer of \LPd).
  By definition of \LP\ and the problem \cref{eq:prob} it represents, the randomized classifier that outputs $y$ on each $x$ with probability $\propto\pi(x,y)$ is an optimal classifier satisfying $\alpha$-group fairness with respect to $\calC$, and its risk is the optimal value of \LP\@.

Define 
\begin{equation}
  r_\mathrm{fair}(x,y) = r(x,y) + \sum_{k\in[K]}  g(x,k)w(y,k),\quad w(y,k)= -\sum_{c\in[C]} \1[y_c=y, k\in\calI_c] \frac{\psi_{c,k}}{\Pr(Z_k=1)},
\end{equation}
then the first constraint of \LPd\ reads $\phi(x)- r_\mathrm{fair}(x,y)\leq 0$ for all $x,y$.  By complementary slackness, $\pi(x,y)>0\iff \phi(x) - r_\mathrm{fair}(x,y)= 0$~\citep{papadimitriou1998CombinatorialOptimizationAlgorithms}, and it follows that
\begin{equation}
  \pi(x,y)>0\implies y\in\argmin_{y'\in\calY}r_\mathrm{fair}(x,y').\label{eq:1}
\end{equation}
Suppose not, then $\exists y'\in\calY$, $y'\neq y$, such that $r_\mathrm{fair}(x,y') - r_\mathrm{fair}(x,y) < 0$.  By the first constraint in \LPd, we know that $\phi(x)-r_\mathrm{fair}(x,y')\leq 0$.  Adding the two inequalities together, we get $\phi(x)- r_\mathrm{fair}(x,y) < 0$, which contradicts complementary slackness.

The function on the right-hand side of \cref{eq:1}, which has a simple form as a linear post-processing of $r$ and $g$, is what we want to show to represent the optimal fair classifier.  
\Cref{eq:1} says that if $\pi$---the minimizer of \LPp\ that represents the (randomized) optimal fair classifier in tabular form---has a non-zero probability of predicting class $y$ on input $x$, then $y$ is among the ``best'' classes that minimize $r_\mathrm{fair}(x,\cdot)$.
When $\pi$ randomizes its prediction among multiple classes on $x$, then there will be multiple classes that simultaneously minimize $r_\mathrm{fair}(x,\cdot)$, and in this case the $\argmin$ should break ties randomly according to $\pi(x,\cdot)$.  But because the values of $\pi$ cannot be recovered from $r_\mathrm{fair}$ alone, we do not know the distribution for randomizing the classes in cases of ties, so the r.h.s.\ of \cref{eq:1} does not generally contain enough information to represent $\pi$.

Yet, the r.h.s.\ of \cref{eq:1} is able to represent the optimal fair classifier $\pi$ if the ``best'' class is always unique $\forall x$, since there will be no ambiguity in $\argmin$, and the relation ``$\Longleftrightarrow$'' will hold in \cref{eq:1}.
We state this as an assumption for now---whether it holds or not depends on the data distribution and the fairness constraints (\cref{rem:uniqueness.violation})---and in the next section we show that this assumption can always be satisfied by randomly perturbing the pointwise risk $r$ (with incurring a small risk). This can be thought of as a way to inject randomness into an otherwise deterministic classifier, and is a standard practice as it is well-known that there are problem instances where (exact) group fairness is non-trivially achievable only by randomized classifiers.

\begin{assumption}[Uniqueness of Best Class]\label{ass:uniqueness}
  Given $r$ and $g$, for all $w$, the set of minimizers $\envert{\argmin_{y} \rbr{r(X,y) - \sum_{k} g(X,k)w(y,k)}}=1$  almost surely with respect to $X\sim\Pr_X$ (i.e., the set has cardinality one, which means that the minimizer is almost always unique).
\end{assumption}

By \cref{ass:uniqueness}, we have $\envert{\argmin_{y}r_\mathrm{fair}(x,y)}=1$ (almost surely), which together with \cref{eq:1} imply that $x\mapsto\argmin_{y}r_\mathrm{fair}(x,y)$ (break any possible ties to the smallest index) agrees with $\pi$ almost everywhere, and is therefore an optimal fair classifier.
\end{proof}

\subsection{Satisfying Assumption~\ref{ass:uniqueness} via Random Perturbation}\label{sec:continuity}

\Cref{ass:uniqueness}, required by \cref{thm:opt.fair} to represent the optimal fair classifier as a post-processing, can be satisfied by randomly perturbing the pointwise risk $r$ (proof is deferred to \cref{sec:proof}):
\begin{proposition}\label{rem:continuity}
  Let $\xi\in\RR^{\calY}$ denote an independent random variable with a continuous distribution and independent coordinates, then \cref{ass:uniqueness} is satisfied for $(r+\xi,g)$ with respect to the joint distribution of $(X,\xi)$.
\end{proposition}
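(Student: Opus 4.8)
The plan is to fix an arbitrary weight vector $w\in\RR^{\calY\times K}$ and show that, with probability one over the joint draw of $(X,\xi)$, the minimizer over $y\in\calY$ of the perturbed objective is a singleton; since $w$ is arbitrary, this is exactly \cref{ass:uniqueness} stated for $(r+\xi,g)$ under the law of $(X,\xi)$. Throughout I treat $w$ as a \emph{fixed} vector (for the use in \cref{thm:opt.fair}, it is a single dual-optimal solution of the augmented problem on $(X,\xi)$, so there is no circular dependence of $w$ on the realization of $\xi$).

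First I would reduce uniqueness to the absence of pairwise ties. Writing the perturbed pointwise objective as
\[
  F(y) \;=\; r(X,y) + \xi_y - \sum_{k\in[K]} g(X,k)\,w(y,k),
\]
the $\argmin$ over the finite set $\calY$ fails to be a singleton only if there exist distinct $y,y'\in\calY$ with $F(y)=F(y')$. Rearranging this equality isolates the perturbation on the left,
\[
  \xi_y - \xi_{y'} \;=\; \bigl(r(X,y')-r(X,y)\bigr) + \sum_{k\in[K]} g(X,k)\bigl(w(y,k)-w(y',k)\bigr) \;=:\; D_{y,y'}(X),
\]
where the right-hand side $D_{y,y'}(X)$ depends on $X$ (and the fixed $w$) but \emph{not} on $\xi$. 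This is the structural crux: the perturbation enters the objective additively and separately in each class, so a tie pins down a difference of two coordinates of $\xi$ to a value determined by everything else.

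Next I would condition on $X$. With $X$ fixed, $D_{y,y'}(X)$ is a constant, so the tie event for the pair $(y,y')$ becomes $\{\xi_y-\xi_{y'}=D_{y,y'}(X)\}$. The key fact is that $\xi_y-\xi_{y'}$ is atomless: since $\xi_y$ and $\xi_{y'}$ are independent and each has a continuous (hence atomless) distribution, conditioning on $\xi_{y'}$ and using that $\xi_y$ assigns zero mass to any single point gives $\Pr(\xi_y-\xi_{y'}=c)=0$ for every constant $c$. Applying this with $c=D_{y,y'}(X)$ yields $\Pr(\xi_y-\xi_{y'}=D_{y,y'}(X)\mid X)=0$, and taking $\E_X$ removes the conditioning.

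Finally I would take a union bound over the finitely many ordered pairs $y\neq y'$ in $\calY$ to conclude $\Pr\bigl(\exists\,y\neq y':F(y)=F(y')\bigr)=0$, so that almost surely all values $F(y)$ are distinct and the minimizer is unique. As $w$ was arbitrary, $\envert{\argmin_y F(y)}=1$ holds almost surely for every $w$, which is precisely \cref{ass:uniqueness} for $(r+\xi,g)$. The only step that requires care—which I would flag as the main technical point rather than a genuine obstacle—is the atomlessness of $\xi_y-\xi_{y'}$, where the independence of the coordinates of $\xi$ is essential: without it the difference could carry a point mass and ties could occur with positive probability.
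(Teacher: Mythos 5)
Your proof is correct, but it takes a genuinely different route from the paper. You argue directly: fix $w$, reduce failure of uniqueness to a pairwise tie $\xi_y-\xi_{y'}=D_{y,y'}(X)$, condition on $X$, and use atomlessness of the difference of two independent continuous coordinates, finishing with a union bound over the finitely many pairs. The paper instead proves something stronger: it first isolates a continuity condition (\cref{ass:continuity}) stating that the push-forward $(r,g)\sharp\PP_X$ assigns zero mass to every strict linear subspace with a non-zero component in the $\RR^{\calY}$ coordinates, shows via a geometric lemma (\cref{lem:continuity}) that this implies \cref{ass:uniqueness} (ties put the feature vector $(r(x),g(x))$ on a decision boundary of the induced linear classifier, which is such a subspace), and then verifies \cref{ass:continuity} for $(r+\xi,g)$ by the computation $\Pr_{X,\xi}(a^\T r(X)+a^\T\xi+b^\T g(X)=0)=\E_X[\Pr_\xi(a^\T\xi=-a^\T r(X)-b^\T g(X))]=0$ for any $a\neq 0$. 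Your pairwise-difference argument is exactly the special case $a=\be_y-\be_{y'}$, so it is more elementary and self-contained; what the paper's detour buys is the full-strength continuity property of the perturbed features, which is reused later in the finite-sample analysis (the proof of \cref{lem:representation} invokes \cref{ass:continuity}, not merely \cref{ass:uniqueness}, to argue that almost surely at most one sample's feature vector lies on any given decision boundary). Your aside resolving the apparent circularity of $w$ depending on $\xi$ is also apt: \cref{ass:uniqueness} quantifies ``for all $w$, almost surely,'' and for \cref{thm:opt.fair} the dual-optimal $w$ of the population problem over $(X,\xi)$ is a deterministic vector, so fixing $w$ before drawing $(X,\xi)$ is exactly the right reading.
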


This random perturbation strategy doubles as a natural and straightforward way of injecting randomness into the classifier under our construction (and also appeared in \citep{xian2023FairOptimalClassification}).  Note that randomization is generally required to achieve (exact) group fairness with non-trivial classifier accuracy~\citep{hardt2016EqualityOpportunitySupervised,woodworth2017LearningNonDiscriminatoryPredictors}; for instance, the reductions in-processing approach returns a randomized ensemble of classifiers~\citep{agarwal2018ReductionsApproachFair}.

We use it in our algorithm in \cref{sec:standard,sec:exp} to inject randomness into the otherwise deterministic post-processed classifiers $h$ in the form of \cref{thm:opt.fair}, so that their outputs depend on the realization of the noise $\xi$ upon each call to $h$:
  \begin{equation}
    h(x) = \argmin_{y\in\calY}\!\Bigg(r(x,y) + \xi_y + \sum_{k\in[K]}  g(x,k)w(y,k)\Bigg).
  \end{equation}

Random perturbation does not affect fairness, but will increase the risk proportional to the magnitude of $\xi$ (so the noise should be set as small as possibly allowed by machine precision).  In our \cref{sec:exp} experiments, we use uniformly distributed noise, $\xi\sim\calU(-\sigma,\sigma)^\calY$.  This choice incurs a risk of at most $\envert\calY\sigma/2$, by combining the sensitivity analysis in \cref{thm:sensitivity} (to be discussed in the next section) and the following bound on the $L^1$ distance between $r$ and $r+\xi$ w.r.t.\ the joint probability of $(X,\xi)$:
\begin{equation}
  \E_X\E_\xi\!\Bigg[\sum_{y\in\calY}\envert*{r(X,y)-(r(X,y)+\xi_y)}\Bigg]=\envert\calY\E\sbr{\envert{\calU(-\sigma,\sigma)}}=\frac{\envert\calY\sigma}2.
\end{equation}

\begin{remark}[Scenarios Where \cref{ass:uniqueness} Fails]\label{rem:uniqueness.violation}
\Cref{ass:uniqueness} fails when the push-forward $r\sharp\PP_X$ contains atoms and the randomized optimal fair classifier splits their mass.  This has been discussed in prior work on fair post-processing~\citep{gaucher2023FairLearningWasserstein,denis2023FairnessGuaranteeMulticlass,xian2023FairOptimalClassification,chen2024PosthocBiasScoring}, where it is typically handled by making a continuity assumption (\cref{ass:continuity}) on $r\sharp\PP_X$, that can otherwise be satisfied by smoothing it with a continuous distribution---the same mechanism underlying the proof of \cref{rem:continuity}.

\Cref{ass:uniqueness} may also fail when the fairness criteria is EO or EOpp in the attribute-aware setting and the optimal fair classifier must be randomized~\citep{hardt2016EqualityOpportunitySupervised,xian2023efficient}.
\end{remark}

\begin{remark}[Impact of Randomization on Individual Fairness]\label{rem:randomization}
A potential issue of randomized classifiers in practice is that individuals with the same feature  $x\in\calX$ may receive different predictions $y\in\calY$, which could be perceived as unfair on the individual-level.  To avoid this and obtain a deterministic classifier, one may resort to approximate fairness (e.g., under our framework, this could be done by setting $\xi=0$ and tuning $\alpha>0$).
\end{remark}


\section{Post-Processing for Group Fairness}\label{sec:standard}

We have shown in \cref{sec:main} that for group fairness criteria in \cref{def:parity}, the optimal fair classifier can be expressed as a linear post-processing of the pointwise risk $r$ in terms of the group predictor $g$, where the parameters of the post-processing can be found by solving the \LP\@.  By putting together the results in the preceding section, we present a fair classification algorithm based on post-processing.

\begin{algorithm}[t]
  \caption{LinearPost: Fair Classification via Post-Processing}
  \label{alg:post.proc}
  \begin{algorithmic}[1]
    \Require{Labeled samples $S_L$ of $(X,Y,Z)$, unlabeled samples $S_U$ of $X$, fairness constraints $\calC$ and tolerance $\alpha$ (\cref{def:parity}), and random noise $\xi\in\RR^\calY$}
    \LComment{Pre-training}
    \State Estimate the pointwise risk $\hat r:\calX\rightarrow\RR^\calY$ on $S_L$ \label{ln:1}
    \State Estimate the group predictor $\hat g:\calX\rightarrow[0,1]^K$ on $S_L$ \label{ln:2}
    \State (Optional) Calibrate the group predictor $\hat g$
    \LComment{Post-processing}
    \State Denote the empirical distribution of $X$ on $S_U$ by $\widehat\PP_X$
    \State Solve for the optimal dual values $\hat \psi$ of $\LP(\hat r + \xi,\hat g,\widehat\PP_X,\calC,\alpha)$ \label{ln:alg.lp}
    \State Define $\hat w(y,k)= -\sum_{c:y_c=y,k\in\calI_c} {\hat\psi_{c,k}}/{\widehat\Pr(Z_k=1)}$ and $\widehat \PP(Z_k=1)= \frac1{|S_U|}\sum_{x\in S_U} \hat g(x,k)$
    \State \Return $x \mapsto \argmin_{y}\rbr{\hat r(x,y) +\xi_y + \sum_{k}  \hat g(x,k)\hat w(y,k)}$ \label{ln:ret}
  \end{algorithmic}
\end{algorithm}

\paragraph{Algorithm.}
Given labeled samples $S_L=\{(x_i,y_i,z_i)\}_i$, unlabeled samples $S_U=\{x_i\}_{i=1}^N$, fairness constraints $\calC$ and tolerance $\alpha$, our algorithm for learning fair classifiers---outlined in \cref{alg:post.proc} and explained in detail below---consists of a pre-training stage followed by post-processing:
\begin{enumerate}
  \item\label{alg:step.1} \textit{Pre-Training.}~~If the pointwise risk $r$ or the group predictor $g$ is not provided, train these predictors on $S_L$ using any preferred off-the-shelf learning algorithm without imposing the fairness constraints. Denote the learned predictors by $\hat r$ and $\hat g$.
  
  We will say more in \cref{sec:ins} on what $r$ and $g$ would be when instantiated for the commonly used fairness criteria (\cref{tab:equivalence}) with the objective of minimizing the classification error.

  \item \textit{Calibration \textnormal{(Optional)}.}~~If high level of fairness is required, then calibrate the group predictor $\hat g$ (in the sense of \cref{eq:cal.0}, to be discussed later) using additional labeled data.  
The reason for this is because fairness generally cannot be achieved exactly with an inaccurate proxy for group membership, $\hat g\neq g$, but it is sufficient if the predictions made by $\hat g$ reflects the true underlying probabilities on average---that is, being calibrated.  We will demonstrate it in our sensitivity analysis in \cref{thm:calibration} and specify the calibration condition in our case.

  \item \textit{Post-Processing.}~~Let $\widehat \PP_X$ denote the empirical distribution of $X$ formed using the samples $S_U$, compute the optimal dual values $\hat \psi$ of the empirical $\LP(\hat r + \xi,\hat g,\widehat\PP_X,\calC,\alpha)$.

   Here, $\xi\in\RR^\calY$ is a continuous random noise independently sampled each time $\hat r$ is called, supplied by the user to implement the random perturbation strategy described in \cref{rem:continuity}---for satisfying the assumption in our representation result \cref{thm:opt.fair}.  
  In our experiments, we use uniform noise $\xi\sim\calU(-\sigma,\sigma)^\calY$ with $\sigma=0.0001 \cdot \widehat \E_X [\|\hat r(X)\|_{\infty}]$.

\item Return the randomized classifier (the randomness comes from the draw of $\xi$ upon each call)
  \begin{equation}
    \hat h(x) = \argmin_{y\in\calY}\!\Bigg(\hat r(x,y) +\xi_y + \sum_{k\in[K]}  \hat g(x,k)\hat w(y,k)\Bigg),
  \end{equation}
  where
  \begin{equation}
    \hat w(y,k)= -\sum_{c\in[C]} \1[y_c=y, k\in\calI_c]  \frac{\hat\psi_{c,k}}{\widehat\Pr(Z_k=1)}\quad\text{and}\quad\widehat \PP(Z_k=1)= \frac{1}{N}\sum_{x\in S_U} \hat g(x,k).
  \end{equation}
  
  The classifier $\hat h$ takes the same form as the optimal fair classifier in \cref{thm:opt.fair}, except for using $\hat r,\hat g$ as plugins for the Bayes optimal $r,g$, and $\hat w$ estimated from finite samples.
\end{enumerate}

We call our algorithm ``LinearPost'', as it learns fair classifiers via linearly transforming the outputs of the base predictors.
Note that the post-processing stage only requires unlabeled data---instead of the labels $Y,Z$, the algorithm uses $\hat r,\hat g$ as proxies for them.

In the following section, we describe instantiations of our algorithm to settings including the fairness criteria of SP, EOpp, and EO, with or without attribute-awareness.  Then in \cref{sec:sensitivity,sec:finite.sample}, we analyze the excess risk of the returned classifier $\hat h$ relative to the Bayes optimal fair classifier as well as its unfairness due to the suboptimality of $\hat r,\hat g$, and the estimation error of $\hat w$.

\subsection{Instantiations of the Post-Processing Framework}\label{sec:ins}

We instantiate our framework and show how to learn fair classifiers using \cref{alg:post.proc} with the objective of minimizing classification error and satisfying SP, EOpp, and EO (\cref{tab:equivalence}) in both attribute-aware and blind settings.  We also briefly discuss the form of the post-processed classifier.

\paragraph{0-1 Loss.}  When the performance metric is classification error, i.e., using the 0-1 loss $\ell(y', y)=\1[y'\neq y]$, as discussed in \cref{eq:0.1.loss}, the Bayes optimal pointwise risk is
\begin{equation}
  r(x, y')=1-\Pr(Y=y'\mid X=x) = \Pr(Y\neq y'\mid X=x),
\end{equation}
so that $\E[r(x,h(x))]=\Pr(h(X)\neq Y)$.

This means that on \cref{ln:1} of \cref{alg:post.proc}, learning $\hat r$ amounts to training a predictor $\hat f_Y:\calX\rightarrow\Delta^\calY$ for the label $Y$ given $X$, $\hat f_Y(x)_{y'}=\widehat\Pr(Y=y'\mid X=x)$ (called a score function; $\Delta^\calY$ denotes the probability simplex over $\calY$), and set $\hat r(x,y')=1-\hat f_Y(x)_{y'}$.

\paragraph{Statistical Parity.}  To recover statistical parity from our group fairness definition in \cref{def:parity}, we set $Z\in\{0,1\}^\calA$ to be the one-hot indicator for the sensitive attribute $A\in\calA$, and the constraints $\calC$ according to \cref{tab:equivalence}.

In attribute-aware setting, there is no need to learn the group predictor on \cref{ln:2} of \cref{alg:post.proc} because $A$ is always explicitly given: $g(x,a)=\1[A=a\mid X=x]$.   In the attribute-blind setting, the Bayes optimal group predictor is $g(x,a) = \Pr(A=a\mid X=x)$; this entails learning a predictor for $A$ given $X$, $\hat f_A:\calX\rightarrow\Delta^\calA$, and set $\hat g(x,a)=\hat f_A(x)_a$.

With $\hat r$ and $\hat g$ learned, the algorithm  proceeds to estimating the weights of the post-processing via solving for the optimal dual values of the empirical $\LP(\hat r + \xi,\hat g,\widehat\PP_X,\calC,\alpha)$, returning a randomized classifier on \cref{ln:ret}.  In the binary class attribute-aware setting, the returned classifier is a (noisy) group-wise thresholding of the score function, which is known to represent the Bayes optimal fair classifier~\citep{menon2018CostFairnessBinary}:
\begin{align}
  \hat h(x,a) 
  &= \1\sbr*{\widehat\Pr(Y=1\mid X=x)  + \frac12\rbr*{\xi_0-\xi_1} \geq \hat t_a}\quad\text{where}\quad \hat t_a=\frac12\rbr*{1 + \hat w(0,a) - \hat w(1,a)}
\end{align}
(note that here, attribute-awareness is reflected by $\hat h$ taking the sensitive attribute $a$ as an explicit input).
In the multiclass setting, the notion of thresholding is generalized to class offsets~\citep{xian2023FairOptimalClassification}: the output is the class with minimum pointwise risk after being offset with the fairness risk as discussed in \cref{eq:0}.  In fact, for attribute-aware SP, by grouping the inputs $X$ according to the sensitive attribute $A$, \LPp\ can be rewritten as a Wasserstein barycenter problem over the group-conditional distributions of the pointwise risk: see LP of \citealp{xian2023FairOptimalClassification}.

In the attribute-blind setting, since $A$ is unknown, the threshold is dynamic according to the demographic makeup at each $x$ according to $\hat g$:
\begin{align}
  \hat h(x) 
  &= \1\sbr*{\widehat\Pr(Y=1\mid X=x)  + \frac12\rbr*{\xi_0-\xi_1} \geq \sum_{a\in\calA} \widehat\Pr(A=a\mid X=x) \hat t_a}.
\end{align}

\paragraph{Equalized Odds.} 
 To recover equalized odds from \cref{def:parity}, we set $Z\in\{0,1\}^{\calA\times\calY}$ to be the indicator for $(A,Y)$ jointly, that is, $Z_{a,y}=\1[A=a,Y=y]$.  Equal opportunity is not separately discussed because it can be seen as a special case of EO\@.
 
The Bayes optimal group predictor in this case is
\begin{equation}
  g(x,(a,y))=\Pr(A=a, Y=y\mid X=x) = \Pr(Y=y\mid A=a, X=x) \Pr(A=a\mid X=x).
\end{equation}
In the attribute-aware setting, because we already have a predictor $\hat f_Y$ for $Y$ from \cref{ln:1} (assuming that it makes use of $A$ for prediction) and $\Pr(A=a\mid X=x)$ is known thanks to attribute-awareness (as in attribute-aware SP), we do not need to learn any additional predictor for $\hat g$ according to the decomposition.  In the attribute-blind setting, we need to learn a predictor $\hat f_{A,Y}:\calX\rightarrow\Delta^{\calA\times\calY}$ for the joint $(A,Y)$.  This can be done directly, or, one could use the above decomposition to break the problem down into learning two (simpler) predictors, $\hat f_{Y}:\calA\times\calX\rightarrow\calY$ and $\hat f_{A}:\calX\rightarrow\calA$, then combining them into $\hat f_{A,Y}(a,y)=\hat f_{Y}(a,x)_y f_{A}(x)_a$.  An alternative decomposition, used in~\citep{alghamdi2022AdultCOMPASFair}, is  $\Pr(A,Y\mid X)=\Pr(A\mid Y, X)\Pr(Y\mid X)$; the choice of one over the other (or neither) depends on whether $A$ or $Y$ can be naturally integrated with the input features $X$.

The classifier returned by \cref{alg:post.proc} for EO, in the binary class attribute-aware setting, is a group-wise thresholding of the score function with a scalable noise:
\begin{align}
  \hat h(x,a) 
  &= \1\sbr*{\widehat \Pr(Y=1\mid X=x)  + \hat s_a (\xi_0-\xi_1) \geq \hat t_a },
\end{align}
where
\begin{align}
 \hat t_a &= \hat s_a \rbr*{1+\hat w(0,(a,0))-\hat w(1,(a,0))},\\
 \hat s_a &= \rbr*{2+\hat w(0,(a,0))-\hat w(1,(a,0))+\hat w(1,(a,1))-\hat w(0,(a,1))}^{-1}.
\end{align}
For this setting, \citet{hardt2016EqualityOpportunitySupervised} have shown that the optimal fair classifier is a group-wise thresholding with thresholds chosen so that the group-conditional TPRs and FPRs coincide across groups; if the conditional ROCs do not intersect except at the end points, randomization can be applied on the dominating group to worsen its ROC to intersect that of the other group.  These properties are embodied in our returned classifier: if the ROCs do not originally intersect, $\hat s_a$ amplifies the magnitude of the noise added to the score function on the dominating group thereby worsening its predictive power and its ROC\@.

\paragraph{Overlapping Groups}

So far, we have only discussed the case where the groups are disjoint, but our framework naturally handles overlapping groups, that is, $Z_k$ can be active for more than one $k\in[K]$ at a time.

Going from disjoint to overlapping groups only changes how the group predictor $\hat g$ should be learned: in the disjoint case, $g:\calX\rightarrow\Delta^{K-1}$ is for predicting a single group label at a time, whereas in the overlapping case, $g:\calX\rightarrow[0,1]^K$ becomes a multilabel predictor because $x$ could belong to multiple groups.  One may directly use the multilabel formulation, or alternatively, learn binary predictors $\hat f_{Z_k}:\calX\rightarrow[0,1]$ for each group $k$ separately.

\paragraph{Intersecting Groups}  There are also cases where fairness is required across subgroups formed by taking the intersection of groups, such as the combinations of multiple sensitive attributes like gender and race~\citep{buolamwini2018GenderShadesIntersectional}, $(A_1,A_2,\dots,A_M)$.  Our framework can handle intersecting groups---simply by representing each intersectional group by an indicator $Z_k=\1[A_1=a_1,\dots,A_M=a_M]$, but the number of indicators and fairness constraints grow exponentially in $M$, as well as the parameters of the post-processing.

It is possible to get around this exponential complexity by using alternative algorithms to optimize the post-processing (rather than solve the \LP\ directly). For example, the in-processing algorithm by \citet{kearns2018PreventingFairnessGerrymandering} views (the Lagrangian of) the fair classification problem of \cref{eq:prob} as a two-player zero-sum game between the learner $h$ and the auditor (dual variables), and uses an iterative algorithm to find an approximate equilibrium of the game.  It returns a randomized ensemble of classifiers, where the number of classifiers is equal to the iterations---polynomial in the optimality tolerance.  Such algorithms could be adapted to optimize our post-processing problem (\LP), which would build the post-processing up in an iterative manner (see, e.g., \citealp{globus-harris2023MulticalibratedRegressionDownstream}); we leave it as future work.

\subsection{Sensitivity Analysis}\label{sec:sensitivity}

With pointwise risk $\hat r$ or group predictor $\hat g$ that are not Bayes optimal, \cref{alg:post.proc} uses them as plugins for the optimal $r,g$ and applies \cref{thm:opt.fair} to obtain a post-processed classifier of the same form as the optimal fair classifier, using parameters $\hat w$ given by the optimal dual values of $\LP(\hat r,\hat g,\PP_X,\calC,\alpha)$.  The resulting classifier may not be optimal nor satisfy fairness exactly, but we can bound them in terms of the suboptimality of  $\hat r$ and $\hat g$ as follows (proof deferred to \cref{sec:proof.sensitivity}):

\begin{theorem}[Sensitivity]\label{thm:sensitivity}
  Given $r$ and $g$, denote the optimal value of $\LP(r,g,\PP_X,\calC,\alpha)$ by $\mathrm{OPT}$.
  \begin{enumerate}
    \item \label{thm:sensitivity.1} For any $\hat r\neq r$, define
    \begin{equation}
      \varepsilon_{\hat r}=\E_X \!\Bigg[\sum_{y\in\calY} \envert*{\hat r(X,y)-r(X,y)}\Bigg].
    \end{equation}

    Let $\hat\pi$ be a minimizer of $\LPp(\hat r,g,\PP_X,\calC,\alpha)$, its excess risk is
    \begin{equation}
       \E_X\!\Bigg[ \sum_{y\in\calY} r(X,y)\hat\pi(X,y)\Bigg]\leq \mathrm{OPT} + \varepsilon_{\hat r}.
    \end{equation}

    \item \label{thm:sensitivity.2} For any $\hat g\neq g$, define
          \begin{equation}
            \varepsilon_{\hat g}=\max_{k\in [K]}\E_X\sbr*{\envert*{\frac{\hat g(X,k)}{\E_X[\hat g(X,k)]}-\frac{g(X,k)}{\E_X[g(X,k)]}}}\leq \max_{k\in [K]}\frac{2\E_X\sbr*{\envert*{\hat g(X,k)- g(X,k)}}}{\E_X[g(X,k)]}.\label{eq:err.g}
          \end{equation}

    Let $\hat\pi$ be a minimizer of $\LPp(r,\hat g,\PP_X,\calC,\alpha)$, its excess risk is
          \begin{equation}
             \E_X\!\Bigg[ \sum_{y\in\calY} r(X,y)\hat\pi(X,y)\Bigg] \leq \mathrm{OPT} + 2\enVert{r}_\infty\min\rbr*{1-\alpha,\frac{\varepsilon_{\hat g}}{\alpha+\varepsilon_{\hat g}}}
          \end{equation}
          with the convention $0/0\coloneqq0$, and its constraint violation is
          \begin{equation}
            \max_{k,k'\in\calI_c}\envert*{\E_X \sbr*{\rbr*{\frac{g(X,k)}{\E_X[g(X,k)]}-\frac{g(X,k')}{\E_X[g(X,k')]}} \hat\pi(X, y_c)  }} \leq \min(1,\alpha + \varepsilon_{\hat g}),\quad\forall c\in[C].\label{eq:33a}
          \end{equation}
  \end{enumerate}
\end{theorem}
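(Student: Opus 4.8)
The plan is to handle the two parts separately, exploiting that perturbing only one of $r,g$ affects only one facet of the \LP.

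For Part~\ref{thm:sensitivity.1}, the key observation is that the feasible region of \LPp\ depends solely on $g$ (together with $\calC,\alpha$), not on $r$: changing $r\to\hat r$ moves only the objective. Hence the minimizer $\hat\pi$ of $\LPp(\hat r,g,\ldots)$ and a minimizer $\pi^\star$ of $\LPp(r,g,\ldots)$ are \emph{both} feasible for the common constraint set. Writing $V_s(\pi)=\E_X[\sum_y s(X,y)\pi(X,y)]$, I would use the exact decomposition $V_r(\hat\pi)-V_r(\pi^\star)=\bigl(V_{\hat r}(\hat\pi)-V_{\hat r}(\pi^\star)\bigr)+\E_X[\sum_y (r-\hat r)(X,y)(\hat\pi-\pi^\star)(X,y)]$. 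The first bracket is $\le 0$ by optimality of $\hat\pi$ for the $\hat r$-objective, and the second is at most $\varepsilon_{\hat r}$ because $(\hat\pi-\pi^\star)(X,y)\in[-1,1]$ entrywise, so $\sum_y|r-\hat r|\,|\hat\pi-\pi^\star|\le\sum_y|r-\hat r|$. Combining the two outer terms \emph{before} taking absolute values is exactly what avoids a spurious factor of two and yields the clean bound $\mathrm{OPT}+\varepsilon_{\hat r}$.

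For the constraint-violation bound in Part~\ref{thm:sensitivity.2}, write $\tilde g(x,k)=g(x,k)/\E_X[g(X,k)]$ and $\tilde{\hat g}(x,k)=\hat g(x,k)/\E_X[\hat g(X,k)]$, both of which integrate to $1$ under $\PP_X$, so $\delta_k\coloneqq\tilde g(\cdot,k)-\tilde{\hat g}(\cdot,k)$ satisfies $\E_X[\delta_k]=0$. Since $\hat\pi$ is feasible for $\LPp(r,\hat g,\ldots)$, it is $\alpha$-fair (pairwise) with respect to $\tilde{\hat g}$. For each $c$ and $k,k'\in\calI_c$ I would split the true-$g$ deviation into the $\tilde{\hat g}$-deviation (at most $\alpha$) plus the error $\E_X[(\delta_k-\delta_{k'})\hat\pi(\cdot,y_c)]$. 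The sharp point is bounding this error by $\varepsilon_{\hat g}$ rather than $2\varepsilon_{\hat g}$: because $\E_X[\delta_k]=0$, I may replace $\hat\pi(\cdot,y_c)$ by $\hat\pi(\cdot,y_c)-\tfrac12$ without changing the expectation, and since $|\hat\pi-\tfrac12|\le\tfrac12$, each of $|\E_X[\delta_k\hat\pi(\cdot,y_c)]|$ and $|\E_X[\delta_{k'}\hat\pi(\cdot,y_c)]|$ is at most $\tfrac12\varepsilon_{\hat g}$. This yields $\alpha+\varepsilon_{\hat g}$, and the trivial cap (group-conditional probabilities lie in $[0,1]$) gives the $\min(1,\cdot)$.

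The excess-risk bound in Part~\ref{thm:sensitivity.2} is where the real work lies, since here the feasible region itself moves when $g\to\hat g$, so $\hat\pi$ need not be feasible for the true-$g$ problem and the simple decomposition of Part~1 is unavailable. My plan is to exhibit two $\hat g$-feasible comparators whose risk is controllably close to $\mathrm{OPT}$ and invoke $V_r(\hat\pi)\le V_r(\pi)$ for any $\hat g$-feasible $\pi$. The crucial structural fact is that any \emph{constant} classifier $\pi^0(x,\cdot)\equiv p$ is exactly fair (its group-conditional rate for $y_c$ is $p_{y_c}$ for every $k$, using $\E_X[\tilde{\hat g}(\cdot,k)]=1$), so mixing with it repairs infeasibility while only linearly blending risks. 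For the first comparator I would take $\pi^\star$ optimal for $\LPp(r,g,\ldots)$---which, by the same error analysis as above run from $g$ to $\hat g$, has $\tilde{\hat g}$-deviation at most $\alpha+\varepsilon_{\hat g}$---and mix $\pi_\lambda=(1-\lambda)\pi^\star+\lambda\pi^0$; the feasibility requirement $(1-\lambda)(\alpha+\varepsilon_{\hat g})\le\alpha$ forces $\lambda=\varepsilon_{\hat g}/(\alpha+\varepsilon_{\hat g})$, and linearity gives excess risk $\le\lambda\cdot(\text{risk gap})$. For the second comparator I would instead mix the unconstrained Bayes classifier (whose deviation is at most $1$) with $\pi^0$ at coefficient $1-\alpha$, which already suffices for $\hat g$-feasibility and yields excess risk $\le(1-\alpha)\cdot(\text{risk gap})$. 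Taking the better of the two produces the factor $\min\bigl(1-\alpha,\varepsilon_{\hat g}/(\alpha+\varepsilon_{\hat g})\bigr)$, and bounding the risk gap between any two classifiers by the stated multiple of $\enVert{r}_\infty$ (each risk lies in $[0,\enVert{r}_\infty]$, and a crude two-sided estimate yields $2\enVert{r}_\infty$) completes the argument. The main obstacle is identifying the right comparators and checking that the mixing coefficients \emph{simultaneously} restore $\hat g$-feasibility and keep the risk controlled; the centering trick via $\E_X[\delta_k]=0$ is the other place where care is needed to land the stated constants.
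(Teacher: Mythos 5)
Your proposal is correct and takes essentially the same route as the paper's proof: Part~1 uses the identical three-term decomposition with the entrywise bound $|\hat\pi-\pi|\leq 1$, and Part~2 rests on the same two ingredients---the zero-mean property of the normalized deviations $\delta_k$ (your centering by $\hat\pi-\tfrac12$ is equivalent to the paper's positive-part identity, both yielding $\tfrac12\varepsilon_{\hat g}$ per term and hence $\alpha+\varepsilon_{\hat g}$) and repairing infeasibility by mixing with an exactly fair constant classifier. The only cosmetic difference is bookkeeping in the excess-risk bound: the paper uses a single mixture $(1-\beta)\pi+\beta\,\mathbf{e}_1$ with $\beta=\min\bigl(1-\alpha,\varepsilon_{\hat g}/(\alpha+\varepsilon_{\hat g})\bigr)$, exploiting the capped violation $\min(1,\alpha+\varepsilon_{\hat g})$ of the true optimum on the plugin problem, whereas you obtain the same $\min$ by taking the better of two comparators.
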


For generality, the results here are stated for the randomized classifier in tabular form $\hat\pi$; it is equivalent to the classifier $\hat h$ returned from \cref{alg:post.proc} when $\xi=0$ and \cref{ass:uniqueness} is satisfied.
If $r,g$ in the statement of \cref{thm:sensitivity} are Bayes optimal, then $\mathrm{OPT}$ is the risk of the optimal fair classifier, and $\E_X[g(X,k)]=\Pr(Z_k=1)$ and \cref{eq:33a} is equivalent to the violation of the fairness constraint $c$ by $\hat\pi$ (recall the definition of \LPp).

The first part considers the case where the pointwise risk $\hat r$ deviates from the optimal $r$, and bounds the excess risk of $\hat\pi$ obtained according to $\hat r$ by their $L^1(\PP_X)$ distance.  Fairness is satisfied by $\hat\pi$ because the group predictor $g$ is the Bayes optimal one in this case.

The second part considers the deviation of the group predictor $\hat g$, and bounds the excess risk and unfairness of $\hat\pi$ obtained according to $\hat g$ by its $L^1(\PP_X)$ distance from the optimal $g$. Unlike the first part, however, the excess risk here depends on the specified fairness tolerance by a factor of $1/\alpha$, meaning that the excess risk could be large in the high fairness regime if $\hat g$ is not accurate.  
This means that the accuracy of the post-processed classifier is expected to drop more rapidly when the fairness tolerance $\alpha$ is set to small values.  We remark that this dependency on $1/\alpha$ is tight in the sense that problem instances can be constructed to match the bound up to a constant factor: \cref{ex:tightness} in \cref{sec:proof.sensitivity} provides one such example from post-processing for statistical parity in the attribute-blind setting.

The bound on unfairness in the second part can be refined to the multicalibration error of each component of the group predictor, $\hat g_k$, $k\in[K]$, with respect to the joint level sets of the pointwise risk $r$ and the remaining components $\hat g_{-k}$~\citep{hebert-johnson2018MulticalibrationCalibrationComputationallyIdentifiable}:

\begin{corollary}[Multicalibration]\label{thm:calibration}
  Continuing \cref{thm:sensitivity} Part~\ref{thm:sensitivity.2}, define
  \begin{align}
    \varepsilon^\mathrm{cal}_{\hat g} 
    &=\max_{k\in[K]}\sum_{S\in\calS(r,\hat g)} \envert*{ \E_X\sbr*{\rbr*{  \frac{\hat g(X,k)}{\E_X[\hat g(X,k)]}-\frac{g(X,k)}{\E_X[g(X,k)]}  } \1[X\in S] }}
    \\ 
    &\leq \max_{k\in[K]} \frac{2}{\E_X[g(X,k)]} \underbrace{\sum_{S\in\calS(r,\hat g)} \big|\!\E_X\sbr*{\rbr*{\hat g(X,k)-g(X,k) } \1[X\in S] } \big| }_{\text{multicalibration error of $\hat g_k$}}, \label{eq:cal.0}
  \end{align}
  where $\calS(r,\hat g)=\{ \{x\in\calX: r(x)=u,\ \hat g(x)=v \} : u\in\RR^\calY,\ v\in\RR^K \}$.

  Let $\hat \pi$ be a minimizer of $\LPp(r,\hat g,\PP_X,\calC,\alpha)$, and define
  \begin{equation}
    \tilde \pi(x,\cdot) = \E_X[\hat \pi (X,\cdot)\mid r(X)=u, \hat g(X)=v ],\quad\text{where } u=r(x),\, v = \hat g(x),  \label{eq:same}
  \end{equation}
  which is also a minimizer; it has the same output distribution on all $x$ that map to the same $(r(x),\hat g(x))$ features, and is equal to $\hat\pi$ almost everywhere under \cref{ass:uniqueness}. Its constraint violation is
    \begin{equation}
      \max_{k,k'\in\calI_c}\envert*{\E_X \sbr*{\rbr*{\frac{g(X,k)}{\E_X[g(X,k)]}-\frac{g(X,k')}{\E_X[g(X,k')]}} \tilde\pi(X, y_c)  }} \leq \min(1,\alpha + \varepsilon^\mathrm{cal}_{\hat g}),\quad\forall c\in[C].
    \end{equation}
\end{corollary}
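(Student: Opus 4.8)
The plan is to lean on two structural facts about $\tilde\pi$: that it is a minimizer of $\LPp(r,\hat g,\PP_X,\calC,\alpha)$ (so it satisfies the fairness constraints expressed through the \emph{estimated} predictor $\hat g$), and that it is constant on each joint level set $S\in\calS(r,\hat g)$ by its definition in \cref{eq:same}. First I would check that $\tilde\pi$ is genuinely feasible and optimal: both the objective $\E_X[\sum_y r(X,y)\pi(X,y)]$ and each constraint functional $\E_X[\hat g(X,k)\pi(X,y_c)/\E_X[\hat g(X,k)]]$ depend on $\pi$ only through integration against quantities that are measurable with respect to the level-set label $(r(X),\hat g(X))$, so replacing $\hat\pi$ by its conditional average $\tilde\pi$ over level sets leaves all these integrals unchanged by the tower property, while preserving row-stochasticity and nonnegativity. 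Hence $\tilde\pi$ is also a minimizer with the same $q_c$; writing $G_k(X)=g(X,k)/\E_X[g(X,k)]$ and $\hat G_k(X)=\hat g(X,k)/\E_X[\hat g(X,k)]$, this yields $\envert{\E_X[\hat G_k(X)\tilde\pi(X,y_c)]-q_c}\le\alpha/2$ for every $k\in\calI_c$, and therefore the ``$\hat g$-violation'' $\envert{\E_X[(\hat G_k-\hat G_{k'})\tilde\pi(X,y_c)]}\le\alpha$.

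Next I would decompose the true violation for a pair $k,k'\in\calI_c$ as
$$\E_X[(G_k-G_{k'})\tilde\pi(X,y_c)] = \underbrace{\E_X[(\hat G_k-\hat G_{k'})\tilde\pi(X,y_c)]}_{|\cdot|\le\alpha} + \E_X[f\,\tilde\pi(X,y_c)],\quad f=(G_k-\hat G_k)-(G_{k'}-\hat G_{k'}).$$
The crucial observation is that $\E_X[f]=0$, since each of $G_k,\hat G_k,G_{k'},\hat G_{k'}$ has unit mean by the choice of normalizers (this holds regardless of whether $\E_X[\hat g]=\E_X[g]$). This lets me recenter $\tilde\pi$ around $1/2$: $\E_X[f\,\tilde\pi(X,y_c)]=\E_X[f(\tilde\pi(X,y_c)-\tfrac12)]$, where $\envert{\tilde\pi(X,y_c)-\tfrac12}\le\tfrac12$ because $\tilde\pi\in[0,1]$. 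This recentering supplies the factor of $\tfrac12$ that ultimately cancels the factor of $2$ from the triangle inequality, producing a single $\varepsilon^\mathrm{cal}_{\hat g}$ rather than $2\varepsilon^\mathrm{cal}_{\hat g}$.

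Finally I would invoke the level-set constancy of $\tilde\pi$. Writing $p_S$ for the constant value of $\tilde\pi(\cdot,y_c)$ on $S\in\calS(r,\hat g)$, the error term becomes $\sum_S (p_S-\tfrac12)\,\E_X[f\1[X\in S]]$, so by $\envert{p_S-\tfrac12}\le\tfrac12$ it is at most $\tfrac12\sum_S\envert{\E_X[f\1[X\in S]]}$. Splitting $f$ and applying the triangle inequality once more gives $\sum_S\envert{\E_X[f\1[X\in S]]}\le \sum_S\envert{\E_X[(G_k-\hat G_k)\1[X\in S]]}+\sum_S\envert{\E_X[(G_{k'}-\hat G_{k'})\1[X\in S]]}\le 2\varepsilon^\mathrm{cal}_{\hat g}$ by the definition of $\varepsilon^\mathrm{cal}_{\hat g}$ (which takes a max over $k$). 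Combining the three estimates yields $\envert{\E_X[(G_k-G_{k'})\tilde\pi(X,y_c)]}\le\alpha+\varepsilon^\mathrm{cal}_{\hat g}$; since the quantity is a difference of two probabilities it is also trivially at most $1$, giving the stated $\min(1,\alpha+\varepsilon^\mathrm{cal}_{\hat g})$. Taking the max over $k,k'\in\calI_c$ and over $c$ completes the bound.

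I expect the main obstacle to be the first step---carefully justifying that passing from $\hat\pi$ to its level-set conditional average $\tilde\pi$ preserves both feasibility and optimality, and in particular verifying that the constraint normalizers (which depend on $\hat g$ and hence are level-set measurable) allow the tower property to be applied cleanly so that the $q_c$'s are unchanged. The recentering-plus-constancy bookkeeping in the last two steps is routine but must be carried out in exactly this order, so that the zero-mean property of $f$ is used \emph{before} the triangle inequality, in order to land on $\varepsilon^\mathrm{cal}_{\hat g}$ instead of a looser constant.
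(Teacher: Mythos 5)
Your proof is correct and takes essentially the same route as the paper's: both arguments rest on the feasibility of $\tilde\pi$ for the plugin ($\hat g$-based) constraints, the constancy of $\tilde\pi$ on the level sets in $\calS(r,\hat g)$, and the zero-mean structure of the normalized differences to absorb the factor of $2$ and land on $\varepsilon^{\mathrm{cal}}_{\hat g}$ rather than $2\varepsilon^{\mathrm{cal}}_{\hat g}$. The only cosmetic difference is that the paper routes the error term through the auxiliary multicalibrated predictor $\hat g^{\mathrm{cal}}(x,k)=\E[g(X,k)\mid r(X),\hat g(X)]$, whose contribution against the level-set-constant $\tilde\pi$ vanishes by the tower property, whereas you inline that step by recentering $\tilde\pi$ at $\tfrac12$ before applying the triangle inequality over level sets---the two computations agree term by term, and your explicit verification that conditional averaging preserves feasibility and optimality of $\tilde\pi$ supplies a detail the paper only asserts.
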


So even if the group predictor is not Bayes optimal, $\hat g\neq g$, as long as it is multicalibrated (in the sense of \cref{eq:cal.0}), the classifier post-processed according to $\hat g$ can achieve fairness exactly. Our empirical results in \cref{sec:exp} show that higher levels of fairness can indeed be achieved using calibrated group predictors.  \citet{globus-harris2023MulticalibratedRegressionDownstream} show a similar multicalibration condition in the binary class setting, and a related result by \citet{diana2022MultiaccurateProxiesDownstream} says that multicalibrated group predictors are sufficient proxies of the true group membership for learning optimal fair classifiers via in-processing.

\subsection{Finite Sample Estimation}\label{sec:finite.sample}

Given pointwise risk $r$ and group predictor $g$, to obtain a fair classifier of the form in \cref{thm:opt.fair}, we need to find the weights $w$ for satisfying fairness constraints via solving $\LP(r,g,\PP_X,\calC,\alpha)$ defined over the population $\PP_X$.

If only (unlabeled) finite samples $x_1,\dots,x_N\sim \PP_X$ are available, we may use the empirical distribution $\widehat \PP_X=\frac1N\sum_{i=1}^N\delta_{x_i}$ supported on the samples in place of $\PP_X$, solve the empirical $\LP(r,g,\widehat\PP_X,\calC,\alpha)$ (essentially replacing $\EE_X$ in \LP\ by the average over the samples), and use its optimal dual values $\hat\psi$ to estimate the weights $\hat w$ for post-processing---this is how finite sample estimation is performed in the post-processing stage of \cref{alg:post.proc}.  The excess risk and unfairness of the resulting classifier due to this estimation is bounded as follows  (proof deferred to \cref{sec:proof.sample}):

\begin{theorem}[Sample Complexity]\label{thm:sample}
  Given $r,g$, denote the optimal value of $\LP(r,g,\PP_X,\calC,\alpha)$ by $\mathrm{OPT}$.  Let $\widehat\PP_X$ denote the empirical distribution over i.i.d.\ samples $x_1,\dots,x_N\sim\PP_X$ (that do not depend on $r,g$).  

    Let $\hat\psi$ be a maximizer of $\LPd(r,g,\widehat\PP_X,\calC,\alpha)$, define $\hat h(x) =\argmin_{y}(r(x,y) + \sum_{k}  g(x,k) \hat w(y,k))$ where $\hat w(y,k)= -\sum_{c:y_c=y,k\in\calI_c} {\hat\psi_{c,k}}/{\widehat\Pr(Z_k=1)}$ and $\widehat \PP(Z_k=1)= \frac1{N}\sum_{i} g(x_i,k)$.     
    Then, under \cref{ass:uniqueness}, for all $N\geq \max(\envert\calY, \Omega(\max_{k}  \ln(K/\delta)/\Pr(Z_k=1)^2))$, with probability at least $1-\delta$, its excess risk is
  \begin{align}
    \MoveEqLeft\E_X\sbr{r(X,\hat h(X))} \\
    &\leq \mathrm{OPT} + \enVert  r_\infty O\rbr*{ \max_{k\in[K]} \frac{1}{\alpha \Pr(Z_k=1)} \sqrt{\frac{\ln K/\delta}{N}} + \envert\calY   \sqrt{\frac{\envert\calY K \log \envert\calY + \ln K/\delta}{N}}  },
  \end{align}
  and its constraint violation is
  \begin{align}
    \MoveEqLeft\max_{k,k'\in\calI_c}\envert*{\Pr(\hat h(X)=y_c\mid Z_k=1) - \Pr(\hat h(X)=y_c\mid Z_{k'}=1)  } \\
    &\leq \alpha + O\rbr*{ \max_{k\in[K]}\frac{1}{\Pr(Z_k=1)} \sqrt{\frac{\envert\calY K \log \envert\calY + \ln K/\delta}{N}} },\quad \forall c\in[C].
  \end{align}
\end{theorem}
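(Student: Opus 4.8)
The plan is to decompose the population risk of $\hat h$ into an empirical-optimum term plus a generalization gap, bound the empirical optimum against $\mathrm{OPT}$ by a mixing argument that produces the $1/\alpha$ factor, and control the generalization gap (and symmetrically the transfer of empirical fairness to the population) by a uniform convergence bound over the family of post-processed classifiers $\{h_\psi\}$. For the reduction, observe that by the complementary-slackness argument in the proof of \cref{thm:opt.fair}, the empirical dual optimum $\hat\psi$ and its corresponding empirical primal optimum $\hat\pi$ of $\LP(r,g,\widehat\PP_X,\calC,\alpha)$ satisfy $\hat\pi(x_i,y)>0\Rightarrow y\in\argmin_{y'}(r(x_i,y')+\sum_k g(x_i,k)\hat w(y',k))$ at every sample $x_i$. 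Since the $x_i$ are drawn from $\PP_X$, \cref{ass:uniqueness} makes this $\argmin$ a singleton almost surely, so $\hat h$ coincides with $\hat\pi$ on the sample and hence $\frac{1}{N}\sum_i r(x_i,\hat h(x_i))=\widehat{\mathrm{OPT}}$, the empirical primal value. Writing $\E_X[r(X,\hat h(X))]=\widehat{\mathrm{OPT}}+(\E_X-\widehat\E_X)[r(X,\hat h(X))]$ thus reduces the excess-risk bound to comparing $\widehat{\mathrm{OPT}}$ with $\mathrm{OPT}$ and to a generalization gap for the data-dependent classifier $\hat h$.

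For the comparison $\widehat{\mathrm{OPT}}$ versus $\mathrm{OPT}$, let $h^\ast$ be the population-optimal fair classifier from \cref{thm:opt.fair}, with risk $\mathrm{OPT}$ and exact population fairness. Its empirical group-conditional rates deviate from the population ones only through concentration: the numerators $\widehat\E_X[g(X,k)\1[h^\ast(X)=y_c]]$ and the normalizers $\widehat\Pr(Z_k=1)=\widehat\E_X[g(X,k)]$ are averages of $[0,1]$-bounded variables for a \emph{fixed} classifier, so Bernstein with a union bound over $k,c$ gives empirical violation at most $\eta=O(\max_k \Pr(Z_k=1)^{-1}\sqrt{\ln(K/\delta)/N})$ (the hypothesis $N\ge\Omega(\max_k\ln(K/\delta)/\Pr(Z_k=1)^2)$ guarantees $\widehat\Pr(Z_k=1)\ge\frac{1}{2}\Pr(Z_k=1)$, so dividing by the empirical normalizer is safe). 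To certify a feasible empirical solution I mix $h^\ast$ with a constant classifier, which is exactly fair (\cref{prop:constant}); by linearity of the group-conditional rates in the mixing weight, weight $\lambda=\eta/(\alpha+\eta)$ restores empirical $\alpha$-fairness while increasing the empirical risk by at most $\lambda\lVert r\rVert_\infty\le\lVert r\rVert_\infty\eta/\alpha$---the same mechanism used in \cref{thm:sensitivity}, Part~\ref{thm:sensitivity.2}, and the source of the $1/\alpha$ dependence. Combining with the fixed-classifier risk concentration $\lvert\widehat R(h^\ast)-\mathrm{OPT}\rvert=O(\lVert r\rVert_\infty\sqrt{\ln(1/\delta)/N})$ yields $\widehat{\mathrm{OPT}}\le\mathrm{OPT}+O(\lVert r\rVert_\infty\,\alpha^{-1}\max_k\Pr(Z_k=1)^{-1}\sqrt{\ln(K/\delta)/N})$, the first term of the claimed bound.

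The remaining generalization gap, and the transfer of empirical to population fairness for the constraint-violation bound, is where the main work lies, since $\hat h$ depends on the samples through $\hat\psi$ and a \emph{uniform} bound over the whole family $\{h_\psi\}$ is needed rather than single-classifier concentration. The key observation is that each pairwise decision $r(x,y)+\sum_k g(x,k)w_\psi(y,k)\lessgtr r(x,y')+\sum_k g(x,k)w_\psi(y',k)$ is an affine threshold in the $K$ group-predictor features $g(x,\cdot)$ with a fixed, data-determined offset $r(x,y)-r(x,y')$; hence $\{h_\psi\}$ is contained in the class of $\lvert\calY\rvert$-way linear classifiers over the $K$-dimensional feature map $x\mapsto g(x)$ (with per-example biases), \emph{independently} of the magnitude of $\hat\psi$. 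Standard VC/pseudo-dimension bounds for such classifiers give complexity $O(\lvert\calY\rvert K\log\lvert\calY\rvert)$, and the bounded multiplicands $r(X,\cdot)\in[0,\lVert r\rVert_\infty]$ and $g(X,k)\in[0,1]$ contribute only Lipschitz factors by contraction, so a uniform deviation bound controls $\sup_\psi\lvert(\E_X-\widehat\E_X)[r(X,h_\psi(X))]\rvert$ by $O(\lVert r\rVert_\infty\lvert\calY\rvert\sqrt{(\lvert\calY\rvert K\log\lvert\calY\rvert+\ln(K/\delta))/N})$ (summing the $\lvert\calY\rvert$ class-indicator pieces) and each $\sup_\psi\lvert(\E_X-\widehat\E_X)[g(X,k)\1[h_\psi(X)=y_c]]\rvert$ by $O(\sqrt{(\lvert\calY\rvert K\log\lvert\calY\rvert+\ln(K/\delta))/N})$. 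The former, inserted into the decomposition, gives the second term of the excess-risk bound. For the constraint bound, $\hat h$ is empirically $\alpha$-fair by construction (it equals $\hat\pi$ on the sample, which satisfies the empirical fairness constraints), and dividing the uniform numerator deviation and the normalizer deviation by $\Pr(Z_k=1)$ transfers this to a population violation of $\alpha+O(\max_k\Pr(Z_k=1)^{-1}\sqrt{(\lvert\calY\rvert K\log\lvert\calY\rvert+\ln(K/\delta))/N})$. The main obstacle is thus the VC-type complexity computation for the post-processed classifier family and assembling the uniform convergence constants; the rest is Bernstein concentration together with the mixing argument imported from \cref{thm:sensitivity}.
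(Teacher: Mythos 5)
Your overall architecture is the paper's: the same three-way decomposition (empirical optimum vs.\ $\mathrm{OPT}$, generalization gap for the risk, transfer of empirical fairness to the population), the same constant-classifier mixing argument imported from \cref{thm:sensitivity} Part~\ref{thm:sensitivity.2} to certify an empirically feasible solution and produce the $1/\alpha$ term, and the same VC-type bound of order $|\calY|K\log|\calY|$ for the family of linear post-processings, combined with Hoeffding for the fixed optimal classifier and a weighted uniform convergence bound for the group-conditional rates. However, there is one genuine gap: your claim that, by \cref{ass:uniqueness}, $\hat h$ coincides with the empirical primal optimum $\hat\pi$ on the sample, so that $\frac1N\sum_i r(x_i,\hat h(x_i))=\widehat{\mathrm{OPT}}$ exactly and $\hat h$ is exactly empirically $\alpha$-fair. \Cref{ass:uniqueness} is an almost-sure statement with respect to $X\sim\PP_X$ \emph{for each fixed} $w$; the null set of ties depends on $w$, and $\hat w$ is a function of the very samples $x_1,\dots,x_N$, so the assumption cannot be applied to the data-dependent weights. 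Worse, the failure is not just formal: the empirical distribution $\widehat\PP_X$ is purely atomic, which is exactly the situation flagged in \cref{rem:uniqueness.violation} where the optimal randomized classifier must split mass on atoms. Generically the empirical LP optimum sits at a vertex where the dual constraint is tight for two or more classes at some sample points, and $\hat\pi$ randomizes there while the deterministic $\hat h$ (tie-broken to the smallest index) does not; both of your exact identities then fail.

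The paper closes this gap with \cref{lem:representation} and \cref{cor:representation}: by complementary slackness, disagreement between $\1[\hat h(x_i)=y]$ and $\hat\pi(x_i,y)$ can occur only when the feature $(r(x_i),g(x_i))$ lies on a decision boundary between two classes, and \cref{ass:continuity} is used to argue that almost surely at most one sample lies on each such (data-dependent) boundary hyperplane, giving at most $|\calY|-1$ disagreements per class. This yields quantitative corrections $\|r\|_\infty|\calY|^2/N$ on the empirical risk and $|\calY|/(N\,\Pr(Z_k=1))$ on the empirical constraints, which are lower order than the $\sqrt{\cdot/N}$ terms and are absorbed into the final bound---this is precisely why the theorem hypothesizes $N\geq|\calY|$, a condition your argument never uses, which is a symptom of the missing step. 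Your remaining steps (the Bernstein/Hoeffding concentration for $h^\ast$, the mixing weight $\lambda=\eta/(\alpha+\eta)$, the normalizer control $\widehat\Pr(Z_k=1)\geq\frac12\Pr(Z_k=1)$, and the uniform convergence over $\{h_\psi\}$ noting scale-invariance in $\psi$) are sound and match the paper; replacing the exact-coincidence claim with a disagreement-counting or rounding argument of the above type would complete the proof.
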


Again, when $r,g$ in the statement are the Bayes optimal pointwise risk and group predictor, $\mathrm{OPT}$ is the risk of the optimal fair classifier and the second result bounds the unfairness of $\hat h$.
Because the fairness criteria cannot be evaluated exactly in the empirical $\LP(r,g,\widehat\PP_X,\calC,\alpha)$ from finite samples, the excess risk also has a $1/\alpha$ dependency on the specified fairness tolerance, akin to the case of using an inaccurate $\hat g$ in \cref{thm:sensitivity} Part~\ref{thm:sensitivity.2}.

The proof involves a uniform convergence bound and analyzing the complexity of $\hat h$, which is a linear multiclass classifier on the $(\calY\times K)$-dimensional features computed by $(r,g)$ (we do not consider the complexities of $r,g$ and assume them to be given).
In particular, the $\envert\calY K\log{\envert\calY}$ term comes from the VC dimension of the linear multiclass classifier in one-versus-rest mode (\cref{prop:vc.ovr}; note that this is more complex than linear binary classifiers, which are half-spaces); the extra factor of $\envert\calY$ in the risk bound is because we compute the overall risk by summing over the risks of one-versus-rest classifiers over $y\in\calY$.

\section{Experiments}\label{sec:exp}

We evaluate our proposed post-processing fair classification algorithm (\cref{alg:post.proc}) and compare to existing post-processing and in-processing algorithms.  Further details on the setup and hyperparameters can be found in our code repository.\footref{fn:code}

\paragraph{Tasks.}  The performance metric is classification error (0-1 loss), and the fairness criteria are SP, EOpp, or EO  (\cref{tab:equivalence}).  We consider the more general attribute-blind setting.

\begin{itemize}
  \item \textit{Adult~\textnormal{\citep{kohavi1996ScalingAccuracyNaiveBayes}}.}~~The task is to predict whether the income is over \$50k, and the sensitive attribute is gender ($\envert\calY=\envert\calA=2$).  It contains 48,842 examples.

  \item \textit{COMPAS~\textnormal{\citep{angwin2016MachineBias}}.}~~The task is to predict recidivism,  the sensitive attribute is race (limited to African-American and Caucasian; $\envert\calY=\envert\calA=2$).  It contains 5,278 examples.

  \item \textit{ACSIncome~\textnormal{\citep{ding2021RetiringAdultNew}}.}~~This dataset extends the Adult dataset, containing much more examples (1,664,500 in total).  We consider a binary setting identical that of Adult (ACSIncome2; $\envert\calY=\envert\calA=2$), as well as a multi-group multiclass setting with race as the sensitive attribute and five income buckets (ACSIncome5; $\envert\calY=\envert\calA=5$).

  \item \textit{BiasBios~\textnormal{\citep{de-arteaga2019BiasBiosCase}}.}~~This is a text classification task of identifying the job title from biographies; the sensitive attribute is gender ($\envert\calY=28$, $\envert\calA=2$).  We use the version scrapped and hosted by~\citet{ravfogel2020NullItOut}, with 393,423 examples.
\end{itemize}
We split Adult and COMPAS datasets by 0.35/0.35/0.3 for pre-training, post-processing, and testing, respectively, ACSIncome by 0.63/0.07/0.3, and BiasBios by 0.56/0.14/0.3.  For in-processing algorithms, the pre-training and post-processing splits are combined to be used for training.

\paragraph{Algorithms.}  We evaluate post-processing and in-processing fair classification algorithms that are applicable in the attribute-blind setting.  These algorithms are described in \cref{sec:related}.
Post-processing algorithms include:
\begin{itemize}
\item \textit{LinearPost~\textnormal{(Ours)}.}~~Applied and instantiated exactly as described in \cref{sec:standard}.  The empirical linear program on \cref{ln:alg.lp} of \cref{alg:post.proc} is solved using the Gurobi optimizer.\footnote{\url{https://www.gurobi.com}.}

  \item \textit{MBS~\textnormal{\citep{chen2024PosthocBiasScoring}}.}\footnote{\url{https://github.com/chenw20/BiasScore}.}~~Limited to binary classification.  This algorithm returns classifiers of forms similar to ours in \cref{thm:opt.fair}, except that they optimize for the post-processing weights by (heuristic) grid search on labeled data, hence their running time is exponential in the number of constraints.  We limit the number of grids to $M=5000$.

  \item \textit{FairProjection~\textnormal{\citep{alghamdi2022AdultCOMPASFair}}.}\footnote{\url{https://github.com/HsiangHsu/Fair-Projection}.}~~This algorithm returns projected scores that satisfy fairness while minimizing the divergence from the original scores.  We use KL divergence.
\end{itemize}
In-processing algorithms include:
\begin{itemize}
  \item \textit{Reductions~\textnormal{\citep{agarwal2018ReductionsApproachFair}}.}~~Limited to binary classification.  This algorithm returns a randomized ensemble of classifiers.
        The implementation from the AIF360 library is used~\citep{bellamy2018AIFairness360}, with $50$ iterations (by default).

  \item \textit{Adversarial~\textnormal{\citep{zhang2018MitigatingUnwantedBiases}}.}~~We use our own implementation of adversarial debiasing and align the group-conditional feature distributions in the penultimate layer of the network (instead of aligning the output logits).  For multiclass equal opportunity and equalized odds, we inject label information to the adversary in CDAN style~\citep{long2018ConditionalAdversarialDomain}.
\end{itemize}

\paragraph{Models.}

On tabular datasets (all except BiasBios), we evaluate the backbone models of logistic regression and gradient boosting decision tree~(GBDT), and additionally two-hidden-layer ReLU network~(MLP) on the ACSIncome dataset.  On the BiasBios text dataset, we fine-tune BERT models from the \texttt{bert-base-uncased} checkpoint~\citep{devlin2019BERTPretrainingDeep}.

For post-processing, predictors for the pointwise risk $\hat r$ and the group membership $\hat g$ are first trained without fairness constraints using one of the models above as the backbone, and are then post-processed to satisfy fairness.  To obtain the base predictors, we only (need to) train a single predictor $\hat f_{A,Y}:\calX\rightarrow\Delta^{\calA\times\calY}$ for the joint $(A,Y)$; this is because for the 0-1 loss and the fairness criteria that we consider (SP, EOpp, and EO), $\hat r$ and $\hat g$ can be expressed in terms of the predictor for $Y$, and the predictor for $A$ or the joint $(A,Y)$ depending on the criterion, all of which can be derived from $\hat f_{A,Y}$.  The same base predictor $\hat f_{A,Y}$ (with different backbones) is used to evaluate all post-processing algorithms.

For in-processing, predictors for $Y$ are trained directly with fairness constraints using the same backbone models and specialized optimization procedures.  Note that Adversarial is only applicable to neural models (MLP and BERT).

\paragraph{Calibrated Predictors for Post-Processing.}
Since labels are available in the post-processing split, we use them to calibrate the $\hat f_{A,Y}$ predictor used by post-processing algorithms prior to post-processing (although it should ideally be performed on a separate split).  This is for fair comparison of FairProjection and our algorithm to MBS: the former ones only use unlabeled data while MBS requires labeled data (both $A$ and $Y$).  This calibration step therefore makes all algorithms consume the same data resources.  The impact and benefits of calibration to our algorithm will be later examined in an ablation.

We use the calibration package from scikit-learn~\citep{pedregosa2011ScikitlearnMachineLearning}: on smaller datasets (Adult and COMPAS), Platt scaling is applied, otherwise isotonic regression.  Since both the pointwise risk and the group predictor are derived from $\hat f_{A,Y}$, the multicalibration condition in \cref{eq:cal.0} is satisfied if $\hat f_{A,Y}$ is \textit{distribution calibrated}~\citep{song2019DistributionCalibrationRegression}, i.e., $\Pr(A=a,Y=y\mid \hat f_{A,Y}=p)=p_{a,y}$ for all $p\in\Delta^{\calA\times\calY}$.

\begin{figure}[p]
  \centering
  \includegraphics[width=1\linewidth]{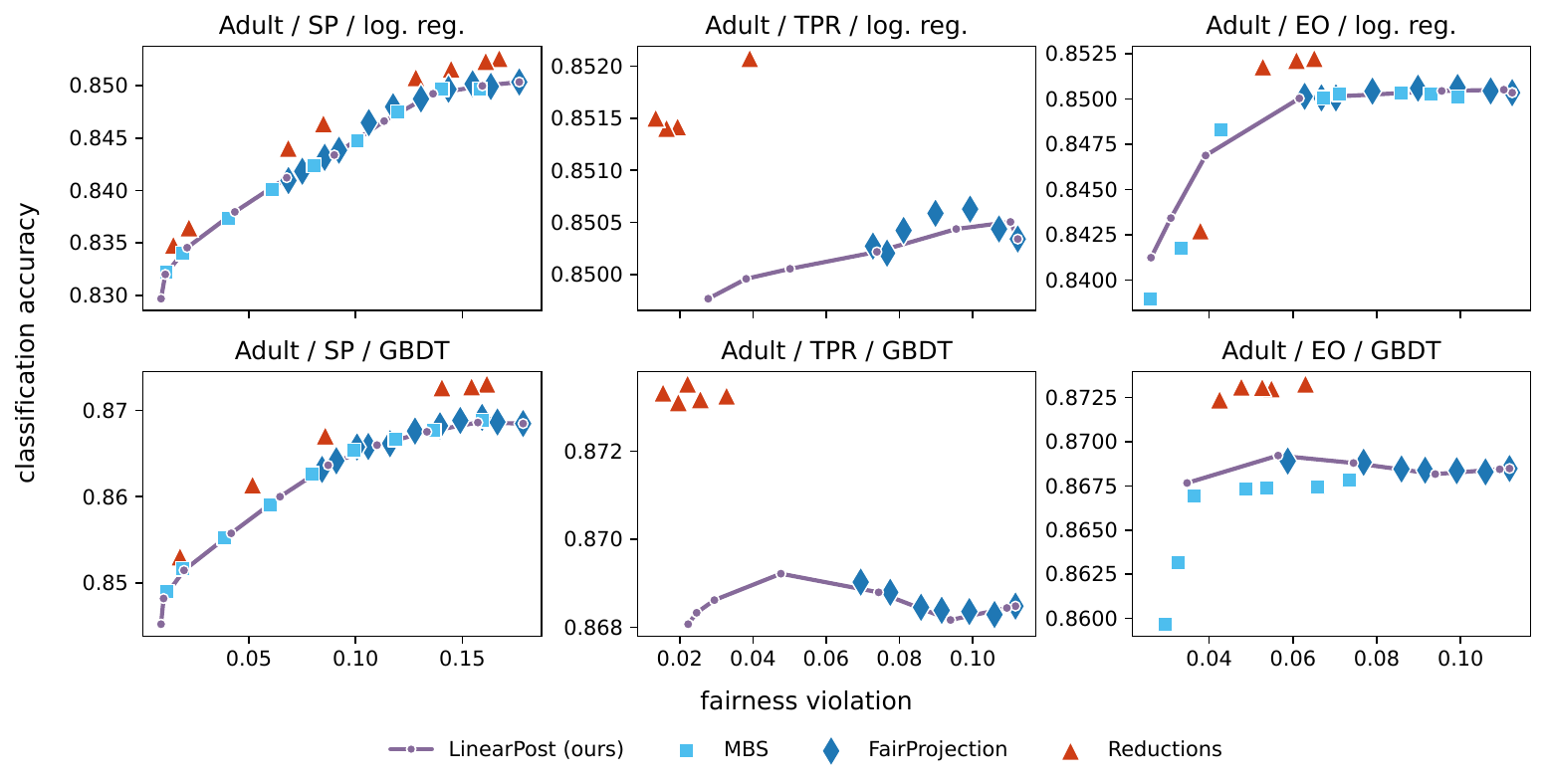}
  \caption{Tradeoffs between accuracy and fairness on Adult subject to SP, TPR (binary class equal opportunity), or EO, respectively. 
  Fairness violation is computed uniformly (see definitions in \cref{tab:equivalence}).  Average of five random seeds.}
  \label{fig:exp.adult}
\end{figure}

\begin{figure}[p]
  \centering
  \includegraphics[width=1\linewidth]{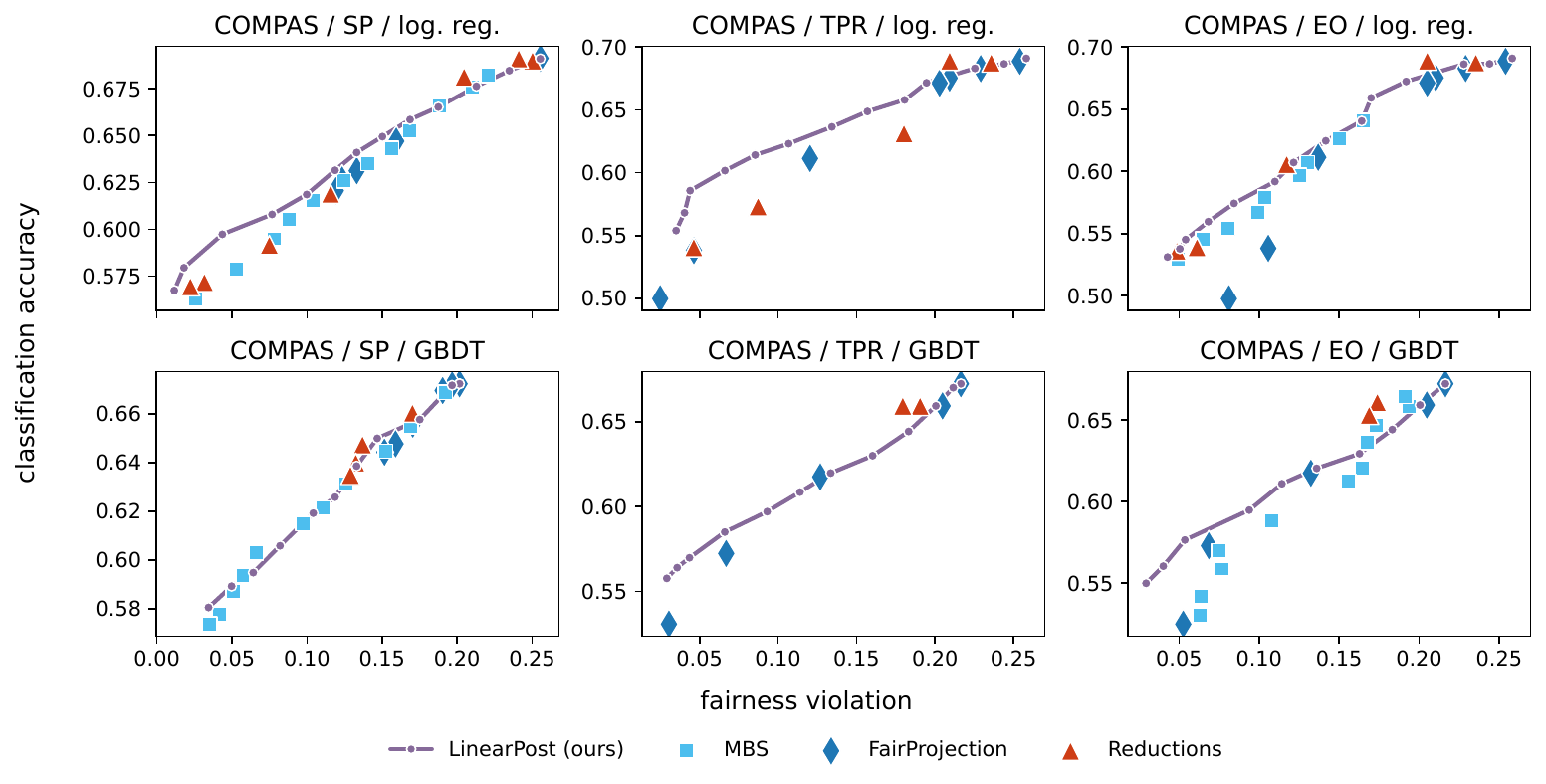}
  \caption{Tradeoffs between accuracy and fairness on COMPAS\@.  TPR is binary class equal opportunity.}
  \label{fig:exp.compas}
\end{figure}

\begin{figure}[p]
  \centering
  \includegraphics[width=1\linewidth]{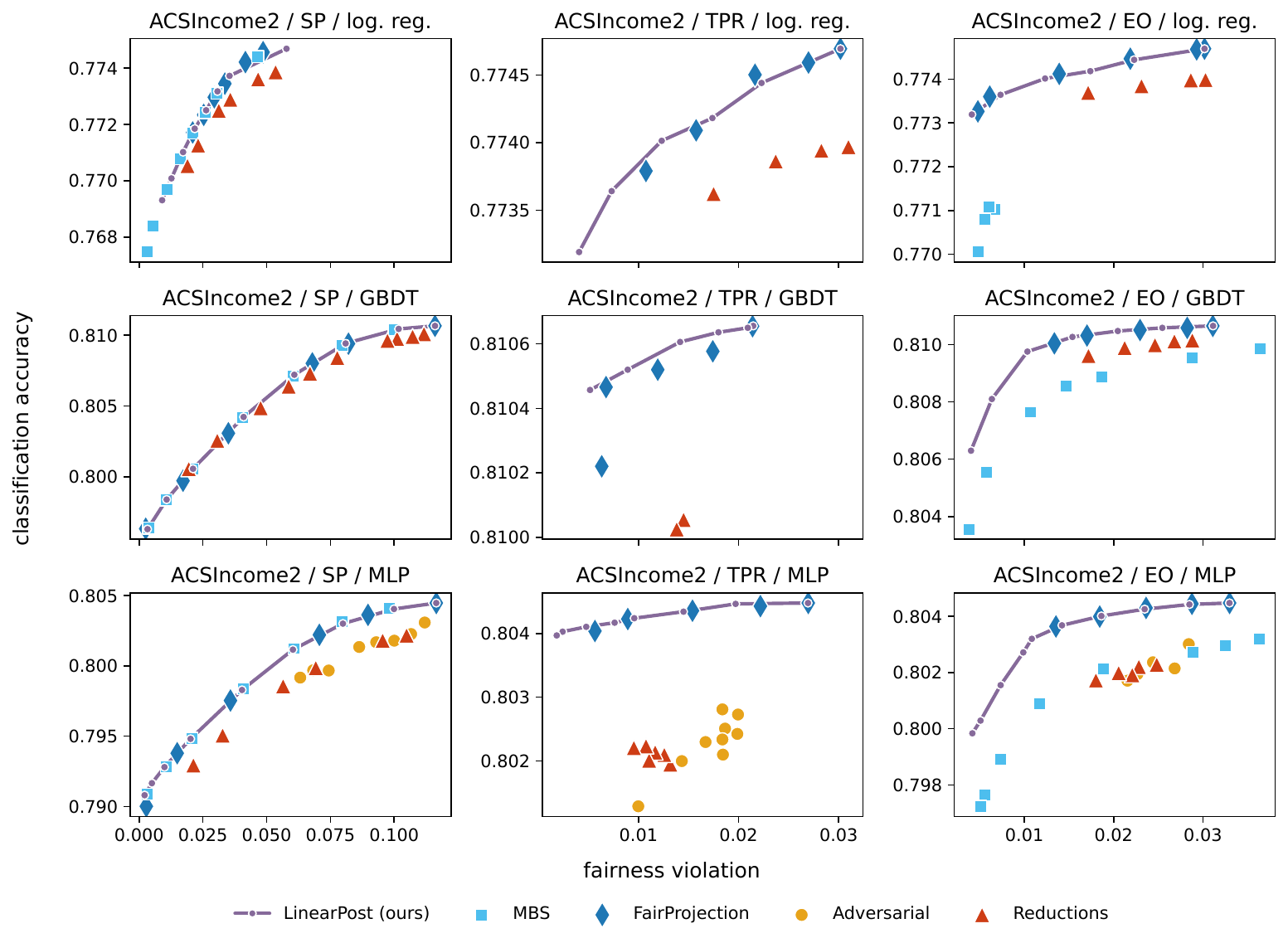}
  \caption{Tradeoffs between accuracy and fairness on ACSIncome2.  TPR is binary class equal opportunity.}
  \label{fig:exp.acsincome2}
\end{figure}

\begin{figure}[p]
  \centering
  \includegraphics[width=1\linewidth]{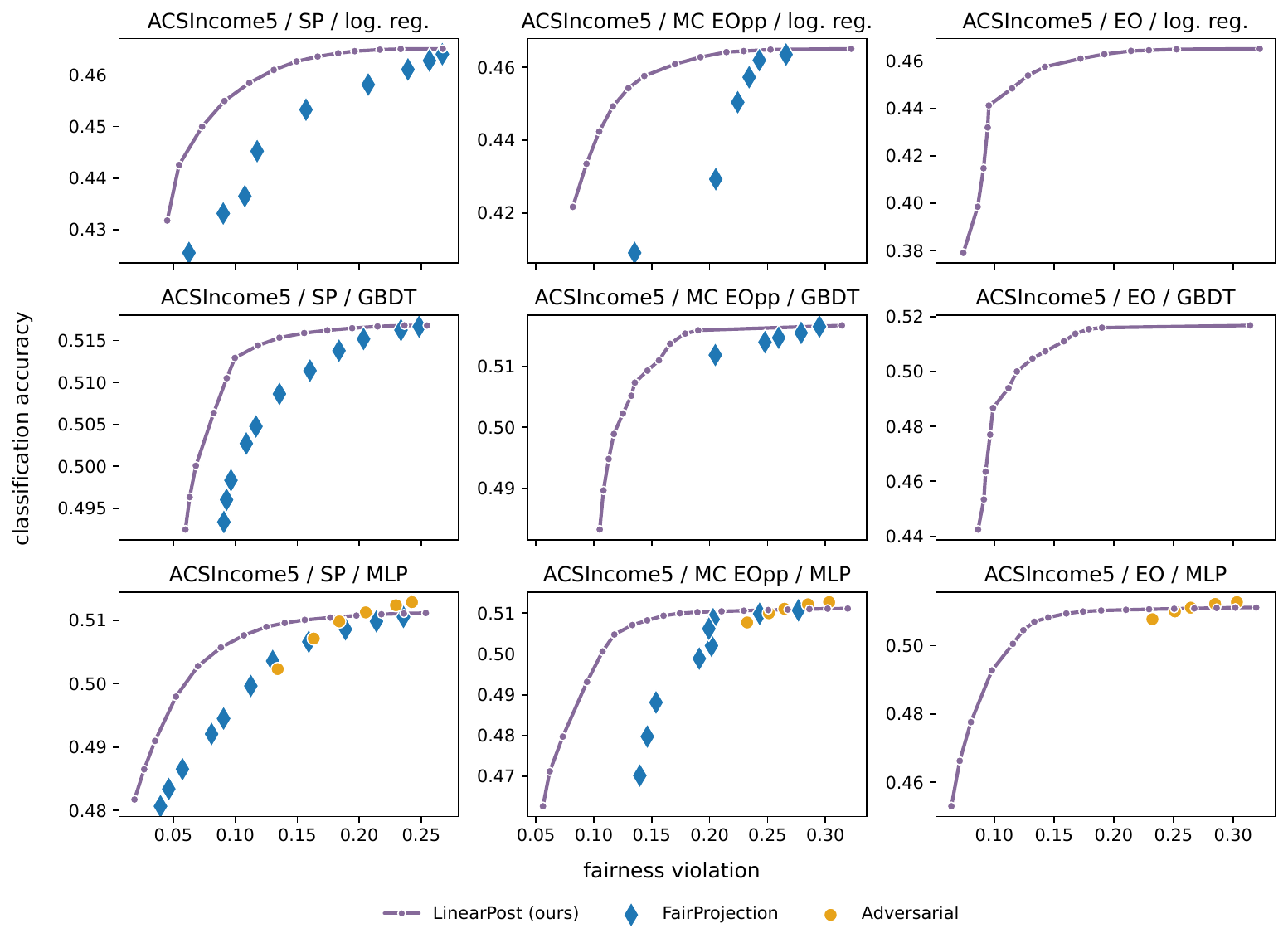}
  \caption{Tradeoffs between accuracy and fairness on ACSIncome5.  MC EOpp is multiclass equal opportunity.}
  \label{fig:exp.acsincome5}
\end{figure}

\begin{figure}[p]
  \centering
  \includegraphics[width=1\linewidth]{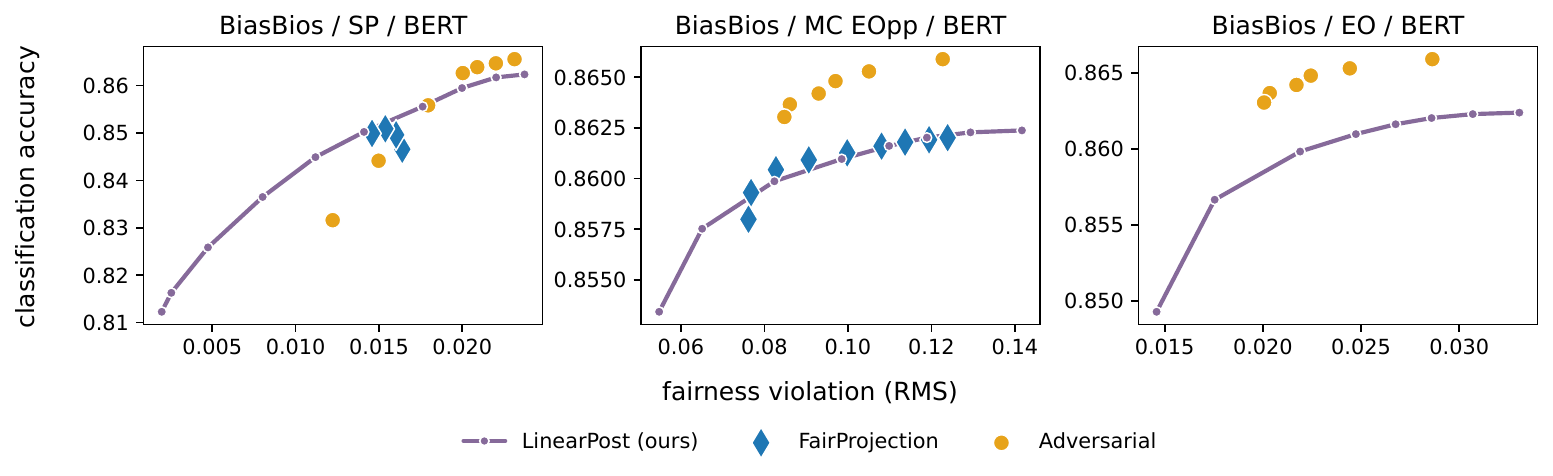}
  \caption{Tradeoffs between accuracy and fairness on BiasBios.  MC EOpp is multiclass equal opportunity.  Due to the large number of classes, to highlight incremental improvements, fairness violation is computed by taking the root mean square of the disparities (across $y\in\calY$ for SP and EOpp, and $y,y'\in\calY$ for EO). The RMS violation of MC EOpp is the same as the $\mathit{GAP}^{\textrm{TPR},\textrm{RMS}}$ metric in~\citep{ravfogel2020NullItOut}.}
  \label{fig:exp.biasbios}
\end{figure}


\subsection{Results}

\Cref{fig:exp.adult,fig:exp.compas,fig:exp.acsincome2,fig:exp.acsincome5,fig:exp.biasbios} plot the accuracy-fairness tradeoffs achieved by the fair algorithms to satisfy SP, EOpp, or EO, from sweeping the fairness tolerance until fairness stops improving.
All results are averaged over five random seeds for performing the post-processing/test split, model initialization, and the randomness of the learned classifier; standard deviation of our results can be found in our code repository\footref{fn:code}.
Furthermore, we plot the specified fairness tolerance $\alpha$ against the actual achieved fairness violation for our algorithm in \cref{fig:alpha}, as well as its post-processing running time in \cref{tab:runtime} (note that EO on BiasBios requires $2\sum_c\envert{\calI_c}=2\envert \calA \envert \calY^2=2\cdot 2\cdot 28^2$ constraints).

\paragraph{Comparison with Existing Algorithms.}
Among post-processing algorithms, ours algorithm achieves the best tradeoffs in most cases, especially on multiclass problems (ACSIncome5 and BiasBios).  Comparing to MBS, because its post-processing parameters are obtained by performing grid search on labeled data, it could achieve higher fairness but at the expense of significantly longer running time.   FairProjection encountered out-of-memory error when post-processing for EO on multiclass problems.

Between post-processing and in-processing, in-processing can attain higher accuracies (see discussions in \cref{sec:intro}), but in most case, particularly multiclass problems, it cannot achieve the same high levels of fairness as post-processing does.  The main difficulty with in-processing at achieving high fairness is likely due to optimization. Reductions requires an oracle for learning cost-sensitive classifiers and involves repeatedly training and combining a large number of them to converge to a fair classifier, so performance may be affected if the oracle is suboptimal, or if the number of iterations is limited due to computational budget (Reductions already has the longest running time in our experiments). Adversarial training suffers from instability due to the conflicting objective of the predictor (whose goal is risk minimization) and the adversary (fairness).  Indeed, ease of optimization is what distinguishes post-processing from in-processing: the ability to solve the post-processing problem to optimality enables post-processing to achieve better fairness.

\paragraph{High Fairness Regime.}

In our algorithm, when the fairness tolerance $\alpha$ is set to small values in hopes of achieving higher fairness, the improvement in fairness on the surrogate empirical problem $(\hat r,\hat g,\widehat\PP_X)$ may not necessarily transfer to the true population problem $(r,g,\PP_X)$ (see \cref{fig:alpha}).  Also, accuracy drops off more rapidly in the high fairness regime.

There are two main reasons for this.  The first is sample variance: achieving high fairness (i.e., small $\alpha$) on the empirical problem requires fitting the weights $\hat w$ to the training set more precisely, which would cause overfitting and therefore high variance; it could be alleviated with more samples.
The second is the error of the group predictor $\hat g$. As we have analyzed in \cref{thm:sensitivity}, exact fairness cannot be guaranteed if $\hat g$ is not optimal, and small $\alpha$ settings causes accuracy to be more sensitive to its suboptimality hence leading to larger drops in performance (of course, this rapid drop may simply be due to the inherent tradeoffs). Solutions in this case include training better group predictors or performing calibration: indeed, as we show in our ablation studies in \cref{fig:abl.adult,fig:abl.compas,fig:abl.acsincome2,fig:abl.acsincome5,fig:abl.biasbios} (discussed below), post-processing calibrated predictors yields fairer classifiers compared to their uncalibrated counterparts.



\paragraph{Calibration and Attribute-Awareness.}
In \cref{fig:exp.adult,fig:exp.compas,fig:exp.acsincome2,fig:exp.acsincome5,fig:exp.biasbios}, we evaluated the performance of our algorithm on post-processing calibrated $\hat f^{\mathrm{blind},\mathrm{cal}}_{A,Y}$ predictors in the attribute-blind setting. We perform two ablation to study the impact of calibration and attribute-awareness.

In the first set of ablations, we compare post-processing calibrated predictors to uncalibrated ones $\hat f^{\mathrm{blind},\mathrm{uncal}}_{A,Y}$ (in the attribute-blind setting), the results are presented in \cref{fig:abl.adult,fig:abl.compas,fig:abl.acsincome2,fig:abl.acsincome5,fig:abl.biasbios}.  We observe that in many cases, calibration leads to higher fairness in the resulting classifier, or better accuracy, or both, although the improvements are not empirically consistent across datasets, models, and fairness criteria.  Nonetheless, this partially corroborates our analysis in \cref{thm:calibration} that calibration should generally help with fairness.

In the second set of ablations, we explore the performance gap from attribute-awareness by training attribute-aware predictors $\hat f^{\mathrm{aware},\mathrm{uncal}}_{A,Y}$ and comparing to attribute-blind ones above (both uncalibrated), $\hat f^{\mathrm{blind},\mathrm{uncal}}_{A,Y}$.  The results are in the same figures.  Because attribute-aware predictors can leverage more information for prediction, we see that they generally have better accuracy than attribute-blind predictors, and at higher unfairness as expected.  In most cases, post-processing attribute-aware predictors achieves higher fairness because attribute-awareness provides better group predictions.

\section{Conclusion and Limitations}\label{sec:conclusion}

Based on a representation result of the Bayes optimal fair classifier, in this paper, we present a post-processing fair classification learning algorithm that covers a variety of group fairness criteria (including SP, EOpp, and EO), and applicable to binary and multiclass problems in both attribute-aware and blind settings.

In addition to the limitations regarding randomization (although inherent to group fairness in classification) mentioned in \cref{rem:uniqueness.violation}, and intersecting groups in \cref{sec:ins}, we discuss another limitation/future work below regarding threshold invariance.

\paragraph{Threshold-Invariance.}

Some practical scenarios may occasionally require adjusting the threshold of the classifier after deployment, e.g., to change the acceptance rate in response to a \textit{label shift} in the underlying distribution~\citep{lipton2018DetectingCorrectingLabel}.
However, in our framework, fairness is not preserved under changes to the threshold of our post-processed scores \cref{eq:0}; changing the acceptance rate while maintaining fairness requires redoing post-processing under a class weighting.

Optimal and threshold-invariant fair scores can be obtained via fair representation learning~\citep{zemel2013LearningFairRepresentations}, which is an in-processing algorithm.  By treating it as a fair regression problem, threshold-invariant fair scores could also be obtained using post-processing algorithms for fair regression~\citep{chzhen2020FairRegressionWasserstein}, but their objective (e.g., minimizing MSE) may not be aligned with performance metrics such as classification error.  It remains an open problem on how to obtain optimal and threshold-invariant fair scores via post-processing---optimality here is in the sense that they are capable of exploring the Pareto front of the TPRs of each class from varying the threshold.

\section*{Acknowledgements}

RX would like to thank Haoxiang Wang for setting up an Amazon EC2 instance.  HZ is partially supported by a research grant from the Amazon-Illinois Center on AI for Interactive Conversational Experiences~(AICE) and a Google Research Scholar Award.


\bibliographystyle{plainnat-eprint}
\bibliography{references}

\newpage
\appendix

\section{Omitted Proofs for Section~\ref{sec:main}}\label{sec:proof}

\begin{proof}[Derivation of \hyperlink{eq:dual}{Dual LP}]
  The Lagrangian of \LPp\ is
  \begin{align}
    L(\pi,q,\phi,\psi^+,\psi^-) & = \begin{multlined}[t]
    \E_{X}\!\Bigg[ \sum_{y\in\calY}   r(X,y)\pi(X,y) \Bigg]
                                + \E_{X}\! \Bigg[ \Bigg( 1 - \sum_{y\in\calY}\pi(X,y) \Bigg)\phi(X) \Bigg] \\
                                + \sum_{c\in[C]}\sum_{k\in\calI_c}\rbr*{ -\frac\alpha2 + q_c - \E_{X}\sbr*{ \frac{g(X,k)}{\Pr(Z_k=1)}\pi(X,y_c)} } \psi^+_{c,k} \\
                                + \sum_{c\in[C]}\sum_{k\in\calI_c}\rbr*{ -\frac\alpha2 - q_c + \E_{X}\sbr*{ \frac{g(X,k)}{\Pr(Z_k=1)}\pi(X,y_c)} } \psi^-_{c,k}.\end{multlined}
  \end{align}
  Collecting terms, we get
  \begin{align}
    \MoveEqLeft L(\pi,q,\phi,\psi^+,\psi^-)                                                                                                            \\
     & = \begin{multlined}[t]
           \E_{X}\sbr{\phi(x)} +  \sum_{c\in[C]}\sum_{k\in\calI_c}\rbr*{q_c (\psi^+_{c,k}-\psi^-_{c,k}) - \frac\alpha2(\psi^+_{c,k} + \psi^-_{c,k}) } \\
           + \E_{X}\!\Bigg[ \sum_{y\in\calY}\Bigg(r(X,y) -  \Bigg(\underbrace{\phi(X) +  \sum_{c\in [C]}\1[y_c=y]\sum_{k\in\calI_c} g(X,k)\frac{\psi^+_{c,k}-\psi^-_{c,k}}{\Pr(Z_k=1)}}_{(\star)}\Bigg)   \Bigg)\pi(X,y) \Bigg].
         \end{multlined}
  \end{align}
  By strong duality, $\min_{\pi\geq0,q}\max_{\phi,\psi^+\geq0,\psi^-\geq0} L=\max_{\phi,\psi^+\geq0,\psi^-\geq0}\min_{\pi\geq0,q} L$.  If $r(x,y) < (\star)$ for some $(x,y)$, then we can send $L$ to $-\infty$ by setting $\pi(x,y)=\infty$, so we must have that $r(x,y)\geq(\star)$ for all $x,y$. But with this constraint, the best we can do for $\min_{\pi\geq0,q} L$ is to set $\pi=0$, so the last line is omitted.  Similarly, we must have $\sum_{k\in\calI_c}(\psi^+_{c,k}-\psi^-_{c,k})=0$ from its interaction with $q_c$.
\end{proof}

Next, we prove \cref{rem:continuity} by showing that random perturbation makes the joint distribution of $(r(X)+\xi,g(X))$ continuous (stated in \cref{ass:continuity} below, which has appeared in prior work with similar forms)---used as input features by the post-processing, which in turn implies \cref{ass:uniqueness}.

\begin{assumption}[Continuity]\label{ass:continuity}
  Given $r:\calX\rightarrow\RR^\calY$ and $g:\calX\rightarrow\RR^K$, the push-forward distribution of $(r,g)\sharp\PP_X$ supported on $\RR^{\calY\times K}$ does not give mass to any strict linear subspace that has a non-zero component in the $\RR^\calY$ subspace.
\end{assumption}

\begin{lemma}\label{lem:continuity}
  If $(r,g)$ satisfies \cref{ass:continuity}, then it also satisfies \cref{ass:uniqueness}.
\end{lemma}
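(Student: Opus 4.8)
The plan is to fix an arbitrary weight matrix $w\in\RR^{\calY\times K}$, show that the set of inputs on which the adjusted objective $y\mapsto r(x,y)-\sum_{k}g(x,k)w(y,k)$ has two or more minimizers is null under $\PP_X$, and then invoke the arbitrariness of $w$. First I would observe that a failure of uniqueness at $x$ forces a \emph{tie}: there are distinct $y,y'\in\calY$ both attaining the minimum, and in particular $r(x,y)-\sum_k g(x,k)w(y,k)=r(x,y')-\sum_k g(x,k)w(y',k)$. Rearranging, this says the feature vector $(r(x),g(x))\in\RR^{\calY}\times\RR^K$ lies in the solution set $H_{y,y'}$ of the single homogeneous linear equation $u_y-u_{y'}-\sum_k v_k\big(w(y,k)-w(y',k)\big)=0$ in the coordinates $(u,v)$.

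The key step is to recognize $H_{y,y'}$ as exactly the kind of subspace to which \cref{ass:continuity} assigns no mass. Since the equation is nontrivial---the coefficient vector on the $\RR^\calY$ block is $e_y-e_{y'}\neq 0$---the set $H_{y,y'}$ is a strict (codimension-one) \emph{linear} subspace of $\RR^{\calY\times K}$, and it genuinely constrains the $\RR^\calY$ coordinates, i.e.\ it has a non-zero component in the $\RR^\calY$ subspace in the sense of \cref{ass:continuity} (it is not a ``cylinder'' of the form $\RR^\calY\times W$ of the type that may legitimately carry mass when $g$ is discrete, as in the attribute-aware setting). Hence \cref{ass:continuity} gives $(r,g)\sharp\PP_X(H_{y,y'})=0$, which is to say $\Pr_X\big(r(X,y)-\sum_k g(X,k)w(y,k)=r(X,y')-\sum_k g(X,k)w(y',k)\big)=0$.

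To finish I would take a union bound over the finitely many unordered pairs $\{y,y'\}\subseteq\calY$: with probability one, no two classes attain equal adjusted value at $X$, so in particular the minimum is attained uniquely and $\envert{\argmin_y\big(r(X,y)-\sum_k g(X,k)w(y,k)\big)}=1$ almost surely. As $w$ was arbitrary, \cref{ass:uniqueness} follows.

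The only real obstacle is the middle step: matching the tie locus to the subspaces excluded by \cref{ass:continuity}. One must verify both that $H_{y,y'}$ is a proper linear (not merely affine) subspace---which holds because the tie equation is homogeneous---and that it depends genuinely on the $\RR^\calY$ block, which is precisely what separates the forbidden tie-hyperplanes from the permissible $g$-only subspaces that a discrete group predictor can populate. Everything else---the reduction of non-uniqueness to a pairwise tie and the union bound over the finitely many pairs---is routine.
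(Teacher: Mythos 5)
Your proof is correct and follows essentially the same route as the paper's: both reduce non-uniqueness to a pairwise tie $(\beta_y-\beta_{y'})^\T(r(x),g(x))=0$, observe that the coefficient difference has the non-zero $\RR^\calY$ component $\be_y-\be_{y'}$, and apply \cref{ass:continuity} to each of the finitely many tie subspaces. Your explicit remarks on homogeneity (linear vs.\ affine) and the union bound merely spell out steps the paper leaves implicit.
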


\begin{proof}
  Observe that $\argmin_{y} \rbr{r(X,y) - \sum_{k} g(X,k)w(y,k)}$ is a linear classifier on the features $(r(x),g(x))\in \RR^{\calY\times K}$ with coefficients $\beta_{y}\coloneqq(\be_{y}, w(y,\cdot))$ for each output class $y$, and the coefficients always have a non-zero component in the $\calY$-coordinates.  On the other hand, the set of points $x$ where $\envert{\argmin_{y} \rbr{r(x,y) - \sum_{k} g(x,k)w(y,k)}}>1$ is contained in the set whose features lie on the boundary between two (or more) classes, i.e., $\exists y,y'$, $y\neq y'$ s.t.\ $\beta_y^\top (r(x),g(x)) = \beta_{y'}^\top (r(x),g(x))$. \Cref{ass:continuity} simply states that this set has measure zero, hence satisfying \cref{ass:uniqueness}.
\end{proof}

\begin{proof}[Proof of \cref{rem:continuity}]
  Since the linear subspaces considered in \cref{ass:continuity} are strict, they can be represented by $\{(u,v): u\in\RR^\calY, v\in\RR^K,  a^\T u + b^\T v=0\} $ for some $a\in\RR^\calY$, $a\neq 0$, and $b\in\RR^K$.  Then
  \begin{align}
    \MoveEqLeft(r+\xi,g)\sharp\PP_X\rbr*{\{(u,v): u\in\RR^\calY, v\in\RR^K,  a^\T u + b^\T v=0\}}\\
     & = \Pr_{X,\xi}\rbr*{ a^\T r(X) +a^\T\xi +b^\T g(X) =0} \\
     & = \E_{X}\sbr*{\Pr_{\xi}\rbr*{ a^\T\xi = - a^\T r(X)-b^\T g(X) } } \\
     & = 0
  \end{align}
  because $a^\T \xi \neq 0$ has a continuous distribution independent of $X$.
\end{proof}

\section{Proofs for Section~\ref{sec:sensitivity}}\label{sec:proof.sensitivity}

Note the fact that $0$-group fairness is satisfied by (randomized) classifiers whose output does not depend on the input, such as the constant classifier:

\begin{proposition}\label{prop:constant}
  Let $\pi(x,\cdot)=p$ for some fixed $p\in\Delta^{\calY}$, $\forall x\in\calX$.  Then for any fairness constraints $\calC$ and group predictor $g:\calX\rightarrow[0,1]^K$,
    \begin{equation}
      \max_{k,k'\in\calI_c}\envert*{\E_X \sbr*{\rbr*{\frac{g(X,k)}{\E_X[g(X,k)]}-\frac{g(X,k')}{\E_X[g(X,k')]}} \pi(X, y_c)  }} =0,\quad\forall c\in[C]. \label{eq:constant}
    \end{equation}
\end{proposition}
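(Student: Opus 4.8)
The plan is to exploit the fact that for a constant classifier, the prediction probability $\pi(X,y_c)$ does not depend on $X$, so it factors out of the expectation and leaves behind a difference of normalized group means that cancels to zero. This reduces the claim to a one-line computation via linearity of expectation, with no analytic subtlety involved.

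Concretely, I would first write $\pi(X,y_c)=p_{y_c}$, a fixed scalar independent of $X$, and pull it out of the expectation:
\begin{equation}
  \E_X\sbr*{\rbr*{\frac{g(X,k)}{\E_X[g(X,k)]}-\frac{g(X,k')}{\E_X[g(X,k')]}}\pi(X,y_c)}
  = p_{y_c}\,\E_X\sbr*{\frac{g(X,k)}{\E_X[g(X,k)]}-\frac{g(X,k')}{\E_X[g(X,k')]}}.
\end{equation}
The next step is to evaluate the inner expectation termwise. Since $\E_X[g(X,k)]$ is a constant (the normalization is by a number, not a random quantity), linearity of expectation gives $\E_X[g(X,k)/\E_X[g(X,k)]] = \E_X[g(X,k)]/\E_X[g(X,k)] = 1$, and likewise for the $k'$ term. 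Hence the bracketed expectation equals $1-1=0$, so the whole quantity is $p_{y_c}\cdot 0 = 0$ for every pair $k,k'\in\calI_c$.

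Finally, since each term inside the absolute value is identically zero, the maximum over $k,k'\in\calI_c$ of its absolute value is zero for each $c\in[C]$, which is exactly the claim. There is essentially no obstacle here: the only point worth stating carefully is that the normalizers $\E_X[g(X,k)]$ are deterministic constants, so they commute with the outer expectation and the cancellation is exact (no continuity or positivity assumption on $g$ is needed beyond $\E_X[g(X,k)]>0$, which is implicit in the normalization). The result formalizes the intuition that a classifier ignoring its input trivially satisfies any group-parity constraint.
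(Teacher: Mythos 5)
Your proof is correct and is essentially the paper's own argument: both reduce the claim to the observation that $\pi(X,y_c)=p_{y_c}$ is constant and that $\E_X\sbr*{g(X,k)/\E_X[g(X,k)]}=1$ for each $k$, so each normalized term contributes $p_{y_c}$ and the difference vanishes. No gap; nothing further is needed.
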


\begin{proof}
  For all $\forall c\in[C]$ and $k\in\calI_c$,
  \begin{align}
    \E_X \sbr*{\frac{g(X,k)}{\E_X[g(X,k)]} \pi(X,y_c)}
    &= \E_X \sbr*{\frac{g(X,k)}{\E_X[g(X,k)]}} p_{y_c} = p_{y_c},
  \end{align}
  so the difference in \cref{eq:constant} is zero.
\end{proof}

\begin{proof}[Proof of \cref{thm:sensitivity}]
  For clarity, we denote
  \begin{equation}
    \abr{r,\pi} =   \E_X\!\Bigg[ \sum_{y\in\calY}r(X,y)\pi(X,y)\Bigg],\quad   \abr{\gamma_k,\pi_y} =  \E_X\sbr*{\frac{g(X,k)}{\E_X[g(X,k)]} \pi(X, y)},
  \end{equation}
  where
  \begin{equation}
  \gamma_k(x)=\frac{g(x,k)}{\E_X[g(X,k)]},\quad \pi_y(x)=\pi(x,y).
  \end{equation}

  \paragraph{Part 1 \textnormal{(Deviation of $\hat r$ from $r$)}.}
  Because $\pi,\hat\pi$ are both feasible solutions to $\LP(r,g,\PP_X,\calC,\alpha)$ and $\LP(\hat r,g,\PP_X,\calC,\alpha)$, and are minimizers of the respective problem, we have
  \begin{align}
    0
     & \leq \abr{r,\hat\pi} - \abr{r,\pi}                                 \\
     & = \abr{r-\hat r,\hat\pi} + \abr{\hat r,\hat\pi-\pi} + \abr{\hat r-r,\pi} \\
     & \leq \abr{r-\hat r,\hat\pi}  + \abr{\hat r-r,\pi}                    \\
     & = \abr{r-\hat r,\hat\pi-\pi}                                           \\
     & \leq  \E_X\!\Bigg[ \sum_{y\in\calY} \envert{r(X,y)-\hat r(X,y)}\Bigg]              \\
     & = \varepsilon_{\hat r}
  \end{align}
  by H\"older's inequality, since $\max_y\envert{\hat\pi(x,y)-\pi(x,y)}\leq 1$.  Note that $\abr{r,\pi}=\mathrm{OPT}$.

  \paragraph{Part 2 \textnormal{(Deviation of $\hat g$ from $g$)}.}
  We start with the bound on constraint violation.
  For all $c\in[C]$,
  \begin{align}
    \MoveEqLeft \max_{k,k'\in\calI_c} \envert{ \abr{\gamma_k-\gamma_{k'},\hat\pi_{y_c}}}                                                                                                                          \\
     & = \max_{k,k'\in\calI_c} \envert{ \abr{\hat\gamma_k-\hat\gamma_{k'},\hat\pi_{y_c}} + \abr{\gamma_k - \hat\gamma_k+\hat\gamma_{k'}-\gamma_{k'},\hat\pi_{y_c}}}                                                              \\
     & \leq \max_{k,k'\in\calI_c} \envert{ \abr{\hat\gamma_k-\hat\gamma_{k'},\hat\pi_{y_c}}} +  2\max_{k\in\calI_c} \envert{ \abr{\gamma_k - \hat\gamma_k,\hat\pi_{y_c}}}                                                     \\
     & \leq \alpha +  2\max_{k\in\calI_c} \envert{ \abr{\gamma_k - \hat\gamma_k,\hat\pi_{y_c}}}                                                                  &  & \textrm{(second constraint of \LPp)}           \\
     & \leq \alpha +  2\max_{k\in\calI_c}\max_{\pi'} \envert{ \abr{\gamma_k - \hat\gamma_k,\pi'_{y_c}}}                                                                                                           \\
     & = \alpha +  2\max_{k\in\calI_c} \envert{ \abr{(\gamma_k - \hat\gamma_k)_+,\mathbf 1}}                                                                  &  & \textrm{($\hat\pi_{y_c}(x)\in[0,1]$ for all $x$)} \\
     & = \alpha + \max_{k\in\calI_c} \abr{ \envert{\gamma_k - \hat\gamma_k},\mathbf 1}                                                                                                                            \\
     & \leq \alpha + \varepsilon_{\hat g},
  \end{align}
  where $(a)_+\coloneqq\max(0,a)$ element-wise. The second last equality uses the fact that $\abr{a,\mathbf 1}=0\implies \envert{\abr{a,\mathbf 1}}=2\abr{(a)_+,\mathbf 1}$, and we verify that $\abr{\gamma_k-\hat\gamma_k, \mathbf 1}=0$: by  \cref{eq:constraint.fn}, $\abr{\gamma_k, \mathbf 1}=\int_\calX\gamma(x,k)\dif\PP_X(x)=\int_\calX\Pr(Z_k=1,X=x)\dif x/\Pr(Z_k=1) = 1$.  In additionally, constraint violation cannot exceed $1$, because
  \begin{align}
    \max_{k,k'\in\calI_c} \envert{ \abr{\gamma_k-\gamma_{k'},\hat\pi_{y_c}}}
     & \leq \max_{k,k'\in\calI_c}\max_{\pi'} \envert{ \abr{\gamma_k-\gamma_{k'},\pi'_{y_c}}} \\
     & = \frac12\max_{k,k'\in\calI_c} \envert{ \abr{\gamma_k-\gamma_{k'},\mathbf 1}}         \\
     & \leq \max_{k\in\calI_c} \envert{ \abr{\gamma_k,\mathbf 1}}                            \\
     & =1,
  \end{align}
  so
  \begin{equation}
    \max_{k,k'\in\calI_c} \envert{ \abr{\gamma_k-\gamma_{k'},\hat\pi_{y_c}}} \leq \min(1,\alpha+\varepsilon_{\hat g}),\quad\forall c\in[C],
  \end{equation}
  concluding the bound on constraint violation.

  By symmetry, we can also show that 
  \begin{equation}\label{eq:sensitivity.proof.33}
    \max_{k,k'\in\calI_c} \envert{ \abr{\hat\gamma_k-\hat\gamma_{k'},\pi_{y_c}}} \leq \min(1,\alpha+\varepsilon_{\hat g}),\quad\forall c\in[C],
  \end{equation}
  which is the constraint violation of $\pi$ on the plugin problem $\hat\gamma$;  this fact will be used below.

  For the bound on the excess risk, we begin by constructing a solution $\tilde\pi$ in terms of $\pi$ that is feasible on the plugin problem $\LPp(r,\hat g,\PP_X,\calC,\alpha)$:
  \begin{equation}
    \tilde\pi(x,\cdot) = (1-\beta)\pi(x,\cdot) + \beta \be_1,\quad\forall x\in\calX,
  \end{equation}
  for some $\beta\in[0,1]$ to be specified---with probability $1-\beta$, $\tilde\pi$ defers to the original classifier $\pi$, and w.p.\ $\beta$ always output class $1$.
  Because the constant classifier, namely the $\beta\be_1$ component of $\tilde\pi$, satisfies $0$-group fairness (\cref{prop:constant}), by \cref{eq:sensitivity.proof.33}, the constraint violation of $\tilde\pi$ on $\LP(r,\hat g,\PP_X,\calC,\alpha)$ is
  \begin{align}
    \MoveEqLeft\max_{k,k'\in\calI_c} \envert{ \abr{\hat\gamma_k-\hat\gamma_{k'},\tilde\pi_{y_c}}} = (1-\beta) \max_{k,k'\in\calI_c} \envert{  \abr{\hat\gamma_k-\hat\gamma_{k'},\pi_{y_c}}} \leq (1-\beta)\min(1,\alpha+\varepsilon_{\hat g}).
  \end{align}
  For $\tilde\pi$ to be feasible, the right-hand side cannot exceed $\alpha$, so we set
  \begin{equation}
    \beta\geq\min\rbr*{1-\alpha, \frac{\varepsilon_{\hat g}}{\alpha+\varepsilon_{\hat g}}}.
  \end{equation}
  Then, because $\hat\pi$ is a minimizer of $\LPp(r,\hat g,\PP_X,\calC,\alpha)$, and $\tilde\pi$ a feasible solution,
  \begin{align}
    \abr{r,\hat\pi-\pi}
    &\leq\abr{r,\tilde\pi-\pi} \\
    &\leq \enVert r_\infty \E_X \!\Bigg[\sum_{y\in\calY}  \envert{\tilde\pi(X,y)-\pi(X,y)}\Bigg] \\
    &\leq 2\enVert r_\infty \beta\\
    & \leq2 \enVert r_\infty \min\rbr*{1-\alpha, \frac{\varepsilon_{\hat g}}{\alpha+\varepsilon_{\hat g}}},
  \end{align}
  concluding the bound on the excess risk.
\end{proof}

\begin{proof}[Proof of \cref{thm:calibration}]
  Let $\hat g^\mathrm{cal}$ denote the multicalibrated version of $\hat g$ to the level sets of $(r,\hat g)$, that is, for all $x\in\calX$,
  \begin{equation}
    \hat g^\mathrm{cal}(x,k) = \E\sbr*{g(X,k) \mid r(X)=u, \hat g(X)=v},\quad\text{where } u=r(x),\, v = \hat g(x),
  \end{equation}
  then it follows that for all $k$, $\E_X[\hat g^\mathrm{cal}(X,k)\1[X\in S]] = \E_X[g(X,k)\1[X\in S]]$, $\forall S\in\calS(r,\hat g)$, and $\E_X[\hat g^\mathrm{cal}(X,k)] = \E_X[g(X,k)]$. So,
  \begin{align}
    \varepsilon^\mathrm{cal}_{\hat g} 
    &=\max_{k\in[K]}\sum_{S\in\calS(r,\hat g)} \envert*{ \E_X\sbr*{\rbr*{  \frac{\hat g(X,k)}{\E_X[\hat g(X,k)]}-\frac{\hat g^\mathrm{cal}(X,k)}{\E_X[\hat g^\mathrm{cal}(X,k)]}  } \1[X\in S] }}.
  \end{align}

  With the same notation and analysis for the bound on constraint violation in \cref{thm:sensitivity} Part~2, for all $c\in[C]$,
  \begin{align}
    \MoveEqLeft\max_{k,k'\in\calI_c} \envert{ \abr{\gamma_k-\gamma_{k'},\tilde\pi_{y_c}}}                                                                  \\
     & \leq \alpha  + \varepsilon^\mathrm{cal}_{\hat g}   + 2\max_{k\in\calI_c}\envert*{\E_X \sbr*{\rbr*{\frac{\hat g^\mathrm{cal}(X,k)}{\E_X[\hat g^\mathrm{cal}(X,k)]}-\frac{g(X,k)}{\E_X[g(X,k)]}} \tilde\pi(X, y_c)  }} \\
     & = \alpha  + \varepsilon^\mathrm{cal}_{\hat g}   + \frac{2}{\E_X[g(X,k)]} \max_{k\in\calI_c}\envert*{\E_X \sbr*{\rbr*{\hat g^\mathrm{cal}(X,k)-g(X,k)} \tilde\pi(X, y_c)  }} \\
     & = \alpha  + \varepsilon^\mathrm{cal}_{\hat g}   + \frac{2}{\E_X[g(X,k)]} \max_{k\in\calI_c}\envert*{\E \sbr*{ \E_X \sbr*{\rbr*{\hat g^\mathrm{cal}(X,k)-g(X,k)} \tilde\pi(X, y_c) \mid r(X)=u, \hat g(X)=v }}}\\
     & \propto \alpha  + \varepsilon^\mathrm{cal}_{\hat g}   + \frac{2}{\E_X[g(X,k)]} \max_{k\in\calI_c}\envert*{\E \sbr*{ \E_X \sbr*{\rbr*{\hat g^\mathrm{cal}(X,k)-g(X,k)}  \mid r(X)=u, \hat g(X)=v }}}\\
     &= \alpha  + \varepsilon^\mathrm{cal}_{\hat g},
  \end{align}
  where line 3 and 6 are by the multicalibration condition, and line 5 is because by \cref{eq:same}, $\tilde\pi(x,y_c)$ is constant for all $x$ that map to the same features.
\end{proof}

  \begin{figure}[t]
    \centering
    \includegraphics[width=0.53\linewidth]{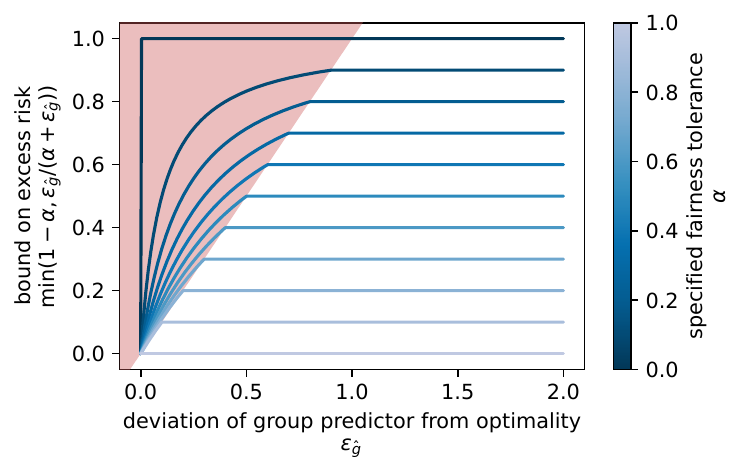}
    \caption{The upper bound on the excess risk in \cref{thm:sensitivity} Part~\ref{thm:sensitivity.2} as a function of the deviation of $\hat g\neq g$ defined in \cref{eq:err.g}, plotted for different tolerance settings $\alpha\in[0,1]$.  \Cref{ex:tightness} construct problem instances that match the upper bound up to a constant factor for the shaded region.}
    \label{fig:constraint.bound}
  \end{figure}

We illustrate our bound in \cref{thm:sensitivity} Part~\ref{thm:sensitivity.2} on the excess risk of the classifier post-processed using an inaccurate group predictor $\hat g\neq g$ in \cref{fig:constraint.bound}.  Each line corresponds to a different fairness tolerance setting $\alpha$, and the lines plot the excess risk bound as a function of the deviation $\varepsilon_{\hat g}$, defined in \cref{eq:err.g}, of $\hat g$ from the optimal $g$.  In what follows, we construct problem instances that match the upper bound up to a constant factor (the constructions cover the shaded region in \cref{fig:constraint.bound}).

\begin{example}\label{ex:tightness}
We construct binary classification problems in the attribute-blind setting, with classification error (0-1 loss) as the objective and statistical parity as the fairness criterion.  The problems are constructed such that the optimal unconstraint classifier is already fair (i.e., with respect to the optimal group predictor $g$, $\LP(r,g,\PP_X,\calC,\alpha)$), but not on the plugin problem $\LP(r,\hat g,\PP_X,\calC,\alpha)$ w.r.t.\ the inaccurate group predictor $\hat g$, so the excess risk is incurred from satisfying fairness on the plugin problem.

We define the data distribution as follows:
  \begin{itemize}
    \item The sensitive attribute is binary, $A\in\{0,1\}$, and has a uniform marginal distribution, $\Pr(A=0)=\Pr(A=1)=1/2$.
    \item The input is supported on two points, $\calX=\{0,1\}$, with $\Pr(X=0\mid A=1)=1-\Pr(X=0\mid A=0)$ and $\Pr(X=0\mid A=0)\in[0,1/2]$ varying across different instances.

      $X$ also has a uniform marginal distribution, because $\Pr(X=0)=\sum_{a\in\{0,1\}}\Pr(X=0\mid A=a)\Pr(A=a)=1/2=\Pr(X=1)$.

    \item The label $Y=X$.
  \end{itemize}

Denote $\pi_x=\pi(x,1)$, then the risk (classification error) of $\pi$ is
\begin{equation}
   R(\pi) = \frac12 \pi_0 + \frac12(1-\pi_1)  = \frac12+\frac12(\pi_0-\pi_1),\label{eq:ex.0}
\end{equation}
%
  and statistical parity requires that 
  \begin{align}
    \alpha & \geq  
           \begin{multlined}[t]\bigg|\frac{\pi_0}2\rbr*{\frac{\Pr(A=0\mid X=0)}{\Pr(A=0)} - \frac{\Pr(A=1\mid X=0)}{\Pr(A=1)}} \\
           + \frac{\pi_1}2\rbr*{\frac{\Pr(A=0\mid X=1)}{\Pr(A=0)} - \frac{\Pr(A=1\mid X=1)}{\Pr(A=1)}} \bigg| \end{multlined}\\
           & =\envert*{ (\Pr(X=0\mid A=0)-\Pr(X=0\mid A=1)) (\pi_0-\pi_1)}                                                                                                                                      \\
           & = \envert*{(1-2\Pr(X=0\mid A=0))(\pi_0-\pi_1)}                                                                                                                                                     \\
           & \eqqcolon \delta\envert*{\pi_0-\pi_1} \label{eq:ex.1}
  \end{align}
  by construction of the data distribution, where we defined $\delta=1-2\Pr(X=0\mid A=0)\in[0,1/2]$ whose value depends on the choice of $\Pr(X=0\mid A=0)\in[0,1/2]$ when creating the problem instance.

Combining \cref{eq:ex.0,eq:ex.1}, the true problem $\LP(r,g,\PP_X,\calC,\alpha)$ can be written as
  \begin{equation}
    \min_{\pi_0,\pi_1\in[0,1]}\frac12+\frac12(\pi_0-\pi_1) \quad \textrm{subject to}\quad \envert*{\pi_0-\pi_1} \leq \frac\alpha\delta,
  \end{equation}
and the optimal value is $1/2\cdot \max(1-\alpha/\delta,0)$, attained by $\pi_0=0$ and $\pi_1=\min(\alpha/\delta,1)$ with the convention $0/0\coloneqq\infty$.

Now, we let the inaccurate plugin group predictor $\hat g$ satisfy $\widehat\Pr(A=0)=\widehat\Pr(A=1)$, and $\widehat\Pr(X=0\mid A=1)=1-\widehat \Pr(X=0\mid A=0)$ with $\widehat\Pr(X=0\mid A=0)\in[0,\ \Pr(X=0\mid A=0)]$ depending on the problem instance, then
\begin{align}
  \varepsilon_{\hat g} & = \frac12\enVert*{\frac{\widehat\Pr(A=0\mid X=\cdot)}{\widehat\Pr(A=0)} - \frac{\Pr(A=0\mid X=\cdot)}{\Pr(A=0)}}_1 \\
                & = \enVert*{\widehat\Pr(X=\cdot\mid A=0) - \Pr(X=\cdot\mid A=0)}_1                                          \\
                & = 2\envert*{\widehat\Pr(X=0\mid A=0) - \Pr(X=0\mid A=0)}                                                   \\
                & \in [0,1-\delta].
\end{align}

The plugin problem $\LP(r,\hat g,\PP_X,\calC,\alpha)$ is, similarly,
  \begin{equation}
    \min_{\hat\pi_0,\hat\pi_1\in[0,1]}\frac12+\frac12(\hat\pi_0-\hat\pi_1) \quad \textrm{subject to}\quad \envert*{\hat\pi_0-\hat\pi_1} \leq \frac\alpha{\hat\delta},
  \end{equation}
where the optimal value is $1/2\cdot \max(1-\alpha/{\hat\delta},0)$, and
  \begin{align}
    \hat\delta =  1-2\widehat\Pr(X=0\mid A=0) = 1-2\rbr*{\Pr(X=0\mid A=0) - \frac{\varepsilon_{\hat g}}2} = \delta + \varepsilon_{\hat g}.
  \end{align}

  Therefore, we can create problem instances with fairness tolerances $\alpha\in[0,1]$ and plugin group predictors with deviations $\varepsilon_{\hat g}\in[0,1-\alpha]$ (by choosing $\delta=\alpha$)---covering the shaded region in \cref{fig:constraint.bound}---where the excess risk of classifiers post-processed from solving the plugin problem $\LP(r,\hat g,\PP_X,\calC,\alpha)$ is
  \begin{equation}
    \frac12\max\rbr*{1-\frac\alpha{\delta+\varepsilon_{\hat g}},0} - \frac12\max\rbr*{1-\frac\alpha\delta,0} =\frac12\rbr*{1-\frac\alpha{\alpha+\varepsilon_{\hat g}}} =\frac12\rbr*{\frac{\varepsilon_{\hat g}}{\alpha+\varepsilon_{\hat g}}} \leq \frac12 (1-\alpha),
  \end{equation}
  which matches the upper bound in \cref{thm:sensitivity} Part~\ref{thm:sensitivity.2} by a factor of $1/4$ (because $\|r\|_\infty = 1$ in our construction).
\end{example}

\section{Proofs for Section~\ref{sec:finite.sample}}\label{sec:proof.sample}

To analyze the sample complexity in \cref{thm:sample} for estimating the parameters $\hat w$ of the post-processing for deriving the classifier $\hat h$, we proceed by analyzing the complexity of the hypothesis class of post-processings---which are linear multiclass classifier on the $(\calY\times K)$-dimensional features computed by $(r,g)$---and then applying uniform convergence bounds.

We recall the definitions of VC dimension, pseudo-dimension, and relevant uniform convergence bounds.

\begin{definition}[Shattering]
  Let $\calH$ be a class of binary functions from $\calX$ to $\{0,1\}$. A set $\{x_1,\dots,x_N\}\subseteq\calX$ is said to be shattered by $\calH$ if $\forall b_1,\dots,b_N\in\{0,1\}$ binary labels, $\exists h\in\calH$ s.t.\ $h(x_i)=b_i$ for all $i\in[N]$.
\end{definition}

\begin{definition}[VC Dimension]
  Let $\calH$ be a class of binary functions from $\calX$ to $\{0,1\}$. The VC dimension of $\calH$, denoted by $\dVC(\calH)$, is the size of the largest subset of $\calX$ shattered by $\calH$.
\end{definition}

\begin{definition}[Pseudo-Shattering]
  Let $\calF$ be a class of functions from $\calX$ to $\RR$. A set $\{x_1,\dots,x_N\}\subseteq\calX$ is said to be pseudo-shattered by $\calF$ if $\exists t_1,\dots,t_N\in\RR$ threshold values s.t.\ $\forall b_1,\dots,b_N\in\{0,1\}$ binary labels, $\exists f\in\calF$ s.t.\ $\1[f(x_i)\geq t_i] = b_i$ for all $i\in[N]$.
\end{definition}

\begin{definition}[Pseudo-Dimension]
  Let $\calF$ be a class of functions from $\calX$ to $\RR$. The pseudo-dimension of $\calF$, denoted by $\dP(\calF)$, is the size of the largest subset of $\calX$ pseudo-shattered by $\calF$.
\end{definition}

\begin{theorem}[Pseudo-Dimension Uniform Convergence]\label{thm:pd}
  Let $\calF$ be a class of functions from $\calX$ to $[0,M]$, and i.i.d.\ samples $x_1,\dots,x_N\sim\PP_X$.  Then with probability at least $1-\delta$ over the samples, $\forall f\in\calF$,
  \begin{equation}
    \envert*{ \E f(X) - \frac1N\sum_{i=1}^N f(x_i) } \leq cM\sqrt{\frac{\dP(\calF)+\ln1/\delta}{N}}
  \end{equation}
  for some universal constant $c$.
\end{theorem}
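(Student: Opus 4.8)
The plan is to establish this as a standard uniform-convergence estimate via symmetrization, Dudley's chaining, and a pseudo-dimension covering-number bound, then upgrade the resulting in-expectation bound to the stated high-probability form by concentration. By rescaling $f\mapsto f/M$, which multiplies the threshold values $t_i$ in the definition of pseudo-shattering by $1/M$ and therefore leaves $\dP(\calF)$ unchanged, it suffices to treat the normalized case $M=1$ and multiply the final bound by $M$.

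First I would symmetrize. Writing $\Phi(x_{1:N})=\sup_{f\in\calF}\envert{\E f(X)-\frac1N\sum_{i} f(x_i)}$, the standard ghost-sample argument (introduce an independent copy $x_1',\dots,x_N'$ and Rademacher signs $\sigma_i$) gives $\E[\Phi]\leq 2\,\E\sbr{\widehat{\mathcal R}_N(\calF)}$, where $\widehat{\mathcal R}_N(\calF)=\E_\sigma\sup_{f}\frac1N\sum_i\sigma_i f(x_i)$ is the empirical Rademacher complexity conditioned on the sample.

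The core step is bounding $\widehat{\mathcal R}_N$ in terms of $\dP(\calF)$. For every fixed sample, Dudley's entropy integral yields
\[
\widehat{\mathcal R}_N(\calF)\leq \inf_{\alpha\ge 0}\rbr*{4\alpha+\frac{12}{\sqrt N}\int_\alpha^1 \sqrt{\ln N_2(\epsilon,\calF,x_{1:N})}\,\dif\epsilon},
\]
where $N_2(\epsilon,\calF,x_{1:N})$ is the empirical $L^2$ covering number at scale $\epsilon$. The key fact is that $\dP(\calF)$ equals the VC dimension of the subgraph class $\{(x,t)\mapsto\1[f(x)\ge t]:f\in\calF\}$, so Haussler's refinement of the Sauer--Shelah lemma delivers the dimension-free polynomial bound $N_2(\epsilon,\calF,x_{1:N})\leq (C/\epsilon)^{\dP(\calF)}$ for a universal constant $C$, uniformly over the sample and over $N$. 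Substituting this and letting $\alpha\to 0$, the integrand $\sqrt{\dP(\calF)\,\ln(C/\epsilon)}$ is integrable near $0$, so the integral evaluates to $O(\sqrt{\dP(\calF)})$ and hence $\E[\widehat{\mathcal R}_N(\calF)]=O(\sqrt{\dP(\calF)/N})$.

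Finally I would concentrate. Since each $f$ takes values in $[0,1]$, changing a single sample alters $\Phi$ by at most $1/N$, so McDiarmid's bounded-differences inequality gives $\Phi\leq\E[\Phi]+\sqrt{\ln(1/\delta)/(2N)}$ with probability at least $1-\delta$; combining with the Rademacher bound on $\E[\Phi]$ and merging constants yields $\Phi\leq c\sqrt{(\dP(\calF)+\ln(1/\delta))/N}$, and rescaling by $M$ finishes the proof. The main obstacle is the covering-number step: securing the sharp, dimension-free estimate $(C/\epsilon)^{\dP}$ rather than one carrying a spurious $\ln N$ factor is precisely what makes the entropy integral converge and produces the clean $\sqrt{\dP/N}$ rate with no logarithmic penalty; the symmetrization, Dudley, and McDiarmid ingredients are all routine by comparison.
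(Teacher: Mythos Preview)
Your proof sketch is correct and complete: the symmetrization--Dudley--Haussler--McDiarmid route is the standard textbook argument, and your identification of Haussler's dimension-free covering bound as the crucial ingredient (to avoid a spurious $\log N$) is exactly right.

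The paper, however, does not actually prove this statement. It is stated as a known result and accompanied only by the sentence ``This can be proved via a reduction to the VC uniform convergence bound, see, e.g., Theorem~6.8 of [Shalev-Shwartz and Ben-David, 2014] and Theorem~11.8 of [Mohri et al., 2018].'' So there is no proof in the paper to compare against; your proposal supplies precisely the argument those textbooks contain. The only remark worth making is that the phrase ``reduction to the VC uniform convergence bound'' hints at the same mechanism you invoke---identifying $\dP(\calF)$ with the VC dimension of the subgraph class $\{(x,t)\mapsto\1[f(x)\geq t]\}$---so your approach is not different in spirit, just fully spelled out where the paper defers to citations.
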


This can be proved via a reduction to the VC uniform convergence bound, see, e.g., Theorem~6.8 of \citep{shalev-shwartz2014UnderstandingMachineLearning} and Theorem~11.8 of \citep{mohri2018FoundationsMachineLearning}.  We will use this theorem to establish the following uniform convergence result for weighted loss of binary functions:

\begin{theorem}\label{thm:im.vc}
  Let $\calH$ be a class of binary functions from $\calX$ to $\{0,1\}$, $w:\calX\rightarrow[0,M]$ a weight function, and i.i.d.\ samples $x_1,\dots,x_N\sim\PP_X$.  Then with probability at least $1-\delta$ over the samples, $\forall h\in\calH$,
  \begin{equation}
    \envert*{ \E w(X)h(X) - \frac1N\sum_{i=1}^N w(x_i)h(x_i) } \leq cM\sqrt{\frac{\dVC(\calH)+\ln1/\delta}{N}}
  \end{equation}
  for some universal constant $c$.
\end{theorem}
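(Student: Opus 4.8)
The plan is to reduce Theorem~\ref{thm:im.vc} to the pseudo-dimension uniform convergence bound of Theorem~\ref{thm:pd}. I would define the real-valued function class $\calF = \{x\mapsto w(x)h(x) : h\in\calH\}$, whose members map $\calX$ into $[0,M]$ because $w(x)\in[0,M]$ and $h(x)\in\{0,1\}$. Applying Theorem~\ref{thm:pd} to $\calF$ then immediately yields a bound of exactly the desired form, but with $\dP(\calF)$ appearing in place of $\dVC(\calH)$. So the entire content of the proof reduces to establishing $\dP(\calF)\leq\dVC(\calH)$.

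This pseudo-dimension bound is the crux and the step I expect to require the most care. I would show that any set $\{x_1,\dots,x_N\}$ that is pseudo-shattered by $\calF$ is in fact shattered by $\calH$, which forces $N\leq\dVC(\calH)$ and gives the claim. Suppose such a set is pseudo-shattered with threshold values $t_1,\dots,t_N$. The key observation is that each $f(x_i)=w(x_i)h(x_i)$ takes only the two values $0$ (when $h(x_i)=0$) and $w(x_i)$ (when $h(x_i)=1$). I would then run a short case analysis on each $t_i$: if $w(x_i)=0$, or $t_i\leq 0$, or $t_i>w(x_i)$, then the bit $\1[f(x_i)\geq t_i]$ is constant over all $h\in\calH$, so coordinate $i$ cannot realize both labels, contradicting pseudo-shattering. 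Hence $w(x_i)>0$ and $0<t_i\leq w(x_i)$ must hold for every $i$.

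Under these conditions, $\1[f(x_i)\geq t_i]=1$ holds exactly when $h(x_i)=1$: the constraint $t_i>0$ forces $h(x_i)=1$ (since $h(x_i)=0$ gives $f(x_i)=0<t_i$), and $t_i\leq w(x_i)$ ensures $f(x_i)=w(x_i)\geq t_i$ in that case, so $\1[f(x_i)\geq t_i]=h(x_i)$ for all $i$. Consequently, realizing an arbitrary labeling $(b_1,\dots,b_N)$ via pseudo-shattering is the same as finding $h\in\calH$ with $h(x_i)=b_i$ for all $i$, which is precisely shattering by $\calH$. This yields $\dP(\calF)\leq\dVC(\calH)$, and substituting into Theorem~\ref{thm:pd} completes the argument. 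The analytic work is entirely absorbed by the black-box Theorem~\ref{thm:pd}; the only genuinely new ingredient is the threshold case analysis, which I view as the single nontrivial obstacle.
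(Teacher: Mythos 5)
Your proposal is correct and follows essentially the same route as the paper's proof: both define $\calF=\{x\mapsto w(x)h(x):h\in\calH\}$, reduce to \cref{thm:pd} by showing $\dP(\calF)\leq\dVC(\calH)$, and establish this via the same threshold argument forcing $0<t_i\leq w(x_i)$ so that $\1[f(x_i)\geq t_i]=h(x_i)$. If anything, your explicit handling of the $w(x_i)=0$ case is slightly more careful than the paper's, which divides by $w(z_i)$ and absorbs that degenerate case into the remark that the inequality would otherwise fail in one direction.
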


\begin{proof}
  Let $d=\dVC(\calH)$ and $\calF= \{x\mapsto w(x)h(x):h\in\calH\}$. We just need to show that $\dP(\calF)\leq d$, then apply \cref{thm:pd}.

  Let $z_1,\dots,z_{d+1}\in\calX$ be distinct points.  Suppose $\calF$ pseudo-shatters this set, then $\exists t_1,\dots,t_{d+1}$ s.t. $\forall b_1,\dots,b_{d+1}\in\{0,1\}$ and for all $i$,
  \begin{alignat}{4}
     &            & \exists f\in\calF, &  & \quad & \1[f(z_i)\geq t_i] = b_i                   \\
     & \iff \quad & \exists h\in\calH, &  &       & \begin{cases}
                                                      h(z_i) \geq t_i/w(z_i) & \textrm{if $b_i=1$} \\
                                                      h(z_i) < t_i/w(z_i)    & \textrm{if $b_i=0$}
                                                    \end{cases}
    \\
     & \iff       & \exists h\in\calH, &  &       & h(z_i) = b_i,
  \end{alignat}
  where the third line follows from observing that we must have set the thresholds s.t.\ $t_i/w(z_i)\in (0,1]$, otherwise the inequality will always fail in one direction regardless of $h$, i.e., some configurations of $b_i$ will not be satisfied. Since the $z_i$'s are distinct, the above implies that $\calH$ shatters a set of size $(d+1)>\dVC(\calH)=d$, which is a contradiction, so $\dP(\calF) \leq d$.
\end{proof}

Our post-processings in \cref{thm:opt.fair,thm:sample} are linear multiclass classifiers (breaking ties to the smallest index), $\RR^d\rightarrow [L]$, parameterized by
\begin{equation}
 \calH_L  = \bigg\{  x\mapsto \min \! \bigg( \argmin_{i \in [L]} w^\T_i x \bigg) : w_1,\dots,w_L \in \RR^d \bigg\}.
\end{equation}
We bound the VC dimension of the above class of multiclass classifiers in one-versus-rest mode, i.e., the class of binary functions given by
\begin{equation}
  \calH_L^{\mathrm{ovr}, \ell} = \cbr*{ x\mapsto \1[h(x)=\ell] : h\in \calH_L }. \label{eq:lmc.ovr}
\end{equation}

\begin{lemma}\label{prop:vc.ovr}
  For all input dimension $d$, number of classes $L$, and $\ell\in[L]$, $\dVC(\calH^{\textnormal{ovr},\ell}_{L})\leq O(d \log L)$; see definition in \cref{eq:lmc.ovr}.
\end{lemma}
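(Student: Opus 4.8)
The plan is to reduce $\calH_L^{\mathrm{ovr},\ell}$ to a class of intersections of halfspaces and then bound its growth function. First I would unfold the definition of the argmin: for $h\in\calH_L$ with weights $w_1,\dots,w_L\in\RR^d$ (breaking ties to the smallest index), the event $h(x)=\ell$ holds exactly when $w_\ell^\T x < w_i^\T x$ for every $i<\ell$ and $w_\ell^\T x \le w_i^\T x$ for every $i>\ell$. Writing $v_i=w_\ell-w_i$, this says $\1[h(x)=\ell]$ is the indicator of the convex cone $\{x: v_i^\T x\le 0,\ \forall i\neq\ell\}$ (the boundary being settled by the tie-break), an intersection of $L-1$ homogeneous halfspaces through the origin. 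Since the $w_i$ are free, the normals $v_i$ are free, so $\calH_L^{\mathrm{ovr},\ell}$ is contained in the class of intersections of $L-1$ homogeneous halfspaces in $\RR^d$, and it then suffices to bound the growth function $\Pi(N)$ of that class and invert via Sauer--Shelah, $2^m\le\Pi(m)$.

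The vanilla count goes as follows. A labeling of points $x_1,\dots,x_N$ is determined by the signs of the $(L-1)N$ forms $v_i^\T x_j$; the sign pattern contributed by a single homogeneous halfspace ranges over at most $\Phi_d(N)=O(N^{d-1})$ values, and taking the conjunction over the $L-1$ halfspaces gives $\Pi(N)\le\Phi_d(N)^{\,L-1}$. Inverting yields only $m=O(dL\log L)$, linear in $L$. To reach the claimed $O(d\log L)$ the plan is to charge complexity to the \emph{depth} of the comparison rather than to the number of comparisons: organize $[L]$ into a balanced binary tournament and use that the argmin wins the whole tournament, so that membership in the cone is certified through $\lceil\log_2 L\rceil$ rounds, each comparing $w_\ell^\T x$ against the minimum over a block. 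The hope is that such a decomposition bounds $\Pi(N)$ by $N^{O(d\log L)}$, after which Sauer--Shelah inversion gives the stated VC bound.

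The step I expect to be the main obstacle is precisely extracting the $\log L$ in place of $L$. The reduction in the first step is lossy: it presents $\{h(x)=\ell\}$ as an \emph{arbitrary} intersection of $L-1$ homogeneous halfspaces (the normals $v_i$ being unconstrained), and the VC dimension of arbitrary intersections of $L-1$ homogeneous halfspaces is already $\Omega(L)$ for fixed $d\ge 3$ (their cross-sections are convex $(L-1)$-gons, which shatter $\Omega(L)$ points in convex position), so no argument that uses only this containment can reach $O(d\log L)$. The tournament decomposition does not compose cleanly either, because each round compares $w_\ell^\T x$ against a running minimum $\min_{i\in T} w_i^\T x$, which is a concave piecewise-linear rather than linear function, so the rounds are not independent bounded-complexity tests and a naive recursion reintroduces the factor $L$. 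A successful proof therefore has to exploit the special ``shared apex'' structure---that all $L-1$ halfspaces involve the common vector $w_\ell$ and arise from a single $\argmin$ over $L$ \emph{globally linear} scores---rather than the weaker fact that the region is an intersection of $L-1$ halfspaces; making this structural gain quantitative is the crux, and I would spend most of the effort there (e.g.\ by counting cells in the $\RR^{dL}$ parameter arrangement restricted to the comparisons that actually influence whether $\ell$ is the winner, and arguing that only $O(\log L)$ of them are ``active'' per point).
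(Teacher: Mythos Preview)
The paper takes a different route from yours: it rewrites $\1[h(x)=\ell]$ as a two-layer linear threshold network (an outer threshold applied to a signed sum of the $L-1$ inner thresholds $\1[(w_i-w_\ell)^\T x\ge 0]$) and then invokes the Anthony--Bartlett VC bound for such networks directly, with no growth-function calculation or tournament decomposition.

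Your self-diagnosed obstacle, however, is more serious than you allow. You call the reduction to intersections of $L-1$ homogeneous halfspaces ``lossy'', but it is not: setting $w_\ell=0$ and $w_i=-v_i$ shows the normals $v_i=w_\ell-w_i$ are completely free, so $\calH_L^{\mathrm{ovr},\ell}$ \emph{coincides} (up to boundary conventions) with the class of all such intersections, rather than merely embedding into it. Your own $\Omega(L)$ lower bound for that class (for fixed $d\ge 3$, via convex $(L-1)$-gons in a cross-section) therefore applies to $\calH_L^{\mathrm{ovr},\ell}$ itself, and there is no ``shared-apex'' structure left to exploit---translating all $w_i$ by $-w_\ell$ removes the apex without changing the class. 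Neither a tournament nor a parameter-space cell-counting argument can beat this lower bound. The bound you can actually extract, from either your vanilla growth-function count or from Anthony--Bartlett with the correct parameter count (the first layer carries $\Theta(dL)$ free weights, not the ``$O(d)$'' the paper asserts), is $O(dL\log L)$; the stated $O(d\log L)$ is too strong for $d\ge 3$.
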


\begin{proof}
The proof proceeds by showing that $\calH_L^{\mathrm{ovr}, \ell}$ can be represented by \textit{feed-forward linear threshold networks}, then cites an existing result on the VC dimension of this class of networks \citep[Theorem 6.1]{anthony1999NeuralNetworkLearning}.

  Any $h\in \calH_L^{\mathrm{ovr}, \ell}$ can be written for some $w_1,\dots,w_L\in\RR^d$ as
  \begin{align}
     h(x) 
     & = \1[\min (\argmin\nolimits_{i} w_i^\T x)=\ell] \\
     & = \1\sbr*{\sum_{i<\ell}\1\sbr{ w_\ell^\T x > w_i^\T x } + \sum_{i> \ell}\1\sbr{ w_\ell^\T x \geq w_i^\T x } \geq L-1}          \\
     & = \1\sbr*{(\ell-1)-\sum_{i<\ell}\1\sbr{ w_\ell^\T x \leq w_i^\T x } + \sum_{i> \ell}\1\sbr{ w_\ell^\T x \geq w_i^\T x } \geq L-1} \\
     & = \1\sbr*{-\sum_{i<\ell}\1\sbr{ (w_i-w_\ell)^\T x \geq 0 } + \sum_{i> \ell}\1\sbr{(w_\ell-w_i)^\T x \geq 0 }  \geq L-\ell},
  \end{align}
  which is a two-layer feed-forward linear threshold network with
  $O(d)$
  variable weights and thresholds and
  $O(L)$ computation units (a.k.a.\ perceptrons/nodes), so its VC dimension is  $O(d \log L)$.
\end{proof}

Last but not least, note that the deterministic classifier $\hat h$ constructed in \cref{thm:sample} with weights $\hat w$ from solving the empirical problem $\LP(r,g,\widehat \PP_X,\calC,\alpha)$ may not be optimal on the empirical problem nor satisfies its constraints.  We want to bound the difference between the empirical loss of $\hat h$ and the optimal value of $\LP(r,g,\widehat \PP_X,\calC,\alpha)$, as well as violation of the empirical constraints by $\hat h$.

\begin{corollary}\label{cor:representation}
Under the same conditions as \cref{thm:sample}, denote by $\widehat{\mathrm{OPT}}$ the optimal value of $\LP(r,g,\widehat \PP_X,\calC,\alpha)$ and $(\hat\pi,\hat q)$ its minimizer.  Then $\forall N\geq \max_{k} 2 \ln(2K/\delta)/\Pr(Z_k=1)^2$, with probability at least $1-\delta$,
  \begin{align}
    \envert*{\widehat\E_X \sbr{ r(X, \hat h(X))} - \widehat{\mathrm{OPT}}} &\leq \frac{\enVert r_\infty \envert\calY^2}{N},\\
    \envert*{ \widehat\E_X \sbr*{\frac{g(X,k)}{\widehat\Pr(Z_k=1)} \1[\hat h(X)=y_c]  } - \hat q_c} &\leq\frac\alpha2 + \frac{\envert\calY}{N \Pr(Z_k=1)},\quad \forall c\in[C],\, k\in\calI_c.
  \end{align}
\end{corollary}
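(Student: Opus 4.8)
The plan is to compare the rounded deterministic classifier $\hat h$ to a \emph{basic} (vertex) optimal solution $\hat\pi$ of the empirical primal program $\LPp(r,g,\widehat\PP_X,\calC,\alpha)$, whose value is $\widehat{\mathrm{OPT}}$, and to show the two agree except on a controlled set of sample points. Writing $r_{\mathrm{fair}}(x,y)=r(x,y)+\sum_k g(x,k)\hat w(y,k)$, the complementary slackness argument from the proof of \cref{thm:opt.fair} gives $\hat\pi(x,y)>0\Rightarrow y\in\argmin_{y'}r_{\mathrm{fair}}(x,y')$, while $\hat h(x)$ is by construction such a minimizer. Hence at every sample point with a unique $r_{\mathrm{fair}}$-minimizer, $\hat h$ and $\hat\pi$ select the same class and contribute identically to the empirical objective and to each empirical constraint value $u_{c,k}(\pi')=\widehat\E_X[\frac{g(X,k)}{\widehat\Pr(Z_k=1)}\pi'(X,y_c)]$; under \cref{ass:uniqueness} the sample points with non-unique minimizers are exactly those on which the basic $\hat\pi$ genuinely randomizes.

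The combinatorial core is to bound the number $m$ of randomizing points of $\hat\pi$. As a vertex of a polytope of dimension $N\envert\calY+C$, $\hat\pi$ activates at least that many linearly independent constraints; removing the $N$ row-stochasticity equalities and the active nonnegativity constraints $\pi(x,y)=0$ shows that $\sum_i\envert{\{y:\hat\pi(x_i,y)>0\}}$ exceeds $N$ by no more than the number of active fairness constraints, so $m$ is bounded by the latter. A per-output-class refinement of this count (points differing in a single column $y_c$) supplies the $\envert\calY^2$ factor for the objective and the $\envert\calY$ factor for each constraint.

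Given the count, the objective bound follows since $\hat h$ and $\hat\pi$ differ on at most these points, each changing the empirical risk by at most $\enVert r_\infty$ with weight $1/N$, so $\envert{\widehat\E_X[r(X,\hat h(X))]-\widehat{\mathrm{OPT}}}\le \enVert r_\infty\envert\calY^2/N$. For the constraint I would start from $\hat\pi$'s primal feasibility $\envert{u_{c,k}(\hat\pi)-\hat q_c}\le\alpha/2$ and add the rounding gap $\envert{u_{c,k}(\hat h)-u_{c,k}(\hat\pi)}$, which is at most $\frac{1}{N\widehat\Pr(Z_k=1)}$ times the number of differing points affecting column $y_c$. Since this uses $\widehat\Pr(Z_k=1)=\frac1N\sum_i g(x_i,k)$ rather than $\Pr(Z_k=1)$, a Hoeffding bound with a union bound over $k\in[K]$---valid once $N\ge\max_k 2\ln(2K/\delta)/\Pr(Z_k=1)^2$---gives $\widehat\Pr(Z_k=1)\ge\tfrac12\Pr(Z_k=1)$ with probability $1-\delta$, converting the denominator and producing the claimed $\alpha/2+\envert\calY/(N\Pr(Z_k=1))$.

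The main obstacle is the vertex-counting together with the tie-breaking: I must guarantee that $\hat h$'s deterministic smallest-index rule can disagree with the chosen basic $\hat\pi$ \emph{only} on $\hat\pi$'s randomizing points. This is exactly where \cref{ass:uniqueness} is needed---it rules out spurious ties at the remaining sample points, preventing an otherwise unbounded discrepancy in the empirical objective or constraints from arising out of non-unique minimizers off the randomizing set.
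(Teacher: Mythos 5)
Your high-level skeleton matches the paper's: compare $\hat h$ to a minimizer $\hat\pi$ of the empirical \LPp\ via complementary slackness, bound the number of sample points on which they disagree, and convert $\widehat\Pr(Z_k=1)$ to $\Pr(Z_k=1)$ by Hoeffding plus a union bound (the paper's \cref{eq:pop.emp} is exactly your last step). But the counting core has a genuine gap, in two places. First, vertex counting does not give the stated constants. The empirical LP has $N\envert\calY+C$ variables, $N$ stochasticity equalities, and at most $\sum_c\envert{\calI_c}$ linearly independent active fairness constraints, so your dimension count gives $\sum_i(s_i-1)\leq \sum_c\envert{\calI_c}-C$, where $s_i$ is the support size of $\hat\pi(x_i,\cdot)$; this scales with the number of groups (e.g.\ $\envert\calA\envert\calY^2$ for EO), not with $\envert\calY$ alone. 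The ``per-output-class refinement'' you invoke to get $\envert\calY-1$ disagreements per column is asserted without proof, and it does not follow from the polytope structure: the row-stochasticity equalities couple all columns, and active fairness constraints cannot be attributed to single columns within a linearly independent family. Second, and more fundamentally, disagreement points are \emph{tie} points, not randomizing points: complementary slackness gives $\hat\pi(x_i,y)>0\Rightarrow y\in\argmin_{y'}r_{\mathrm{fair}}(x_i,y')$, but a vertex $\hat\pi$ may put all its mass on one tied minimizer while the smallest-index rule defining $\hat h$ selects a different one, so $\hat h$ and $\hat\pi$ can disagree where $\hat\pi$ is deterministic. Your appeal to \cref{ass:uniqueness} to exclude such spurious ties fails by a quantifier exchange: the assumption says that for each \emph{fixed} $w$ the tie set is $\PP_X$-null, but $\hat w$ is a function of the very samples $x_1,\dots,x_N$, so the LP can (and at optimality typically does) place decision boundaries through sample points.

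The paper's \cref{lem:representation} closes exactly this hole with a geometric argument that makes no reference to vertices or to randomization: $x\mapsto\1[\hat h(x)=y]$ is a linear one-versus-rest classifier on the features $u(x)=(r(x),g(x))$; any disagreement at $x_i$ forces $u(x_i)$ onto one of the $\envert\calY-1$ hyperplanes $\{u:(\beta_y-\beta_{y'})^\T u=0\}$, whose $\calY$-component is pinned to $\be_y-\be_{y'}$; and by \cref{ass:continuity} (the condition underlying \cref{ass:uniqueness} via random perturbation) each such hyperplane contains at most one sample almost surely, even though its $g$-coordinates are data-dependent. This yields $\sum_{i}\envert{\1[\hat h(x_i)=y]-\hat\pi(x_i,y)}\leq\envert\calY-1$ per class, uniformly in $K$ and $\calC$, from which the $\enVert r_\infty\envert\calY^2/N$ and $\envert\calY/(N\Pr(Z_k=1))$ terms follow as in your final step. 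To repair your route you would need either this hyperplane argument (at which point the vertex structure is superfluous) or a genuine proof of your per-column refinement together with separate control of deterministic tie points; as written, neither is available, and the bounds you would actually obtain degrade from $\envert\calY$-dependent to $\sum_c\envert{\calI_c}$-dependent, which also weakens the downstream absorption of these terms in the proof of \cref{thm:sample}.
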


This follows from bounding the disagreements between the deterministic $\hat h$ and the randomized classifier $\hat \pi$ that minimizes the empirical problem on the samples $x_1,\dots,x_N$; disagreement occurs when $\hat\pi$ needs to randomize its output.

\begin{lemma}\label{lem:representation}
Under the same conditions as \cref{thm:sample}, denote the minimizer of $\LP(r,g,\widehat \PP_X,\calC,\alpha)$ by $(\hat\pi,\hat q)$. Then almost surely, for all $y\in\calY$,
  \begin{equation}
      \sum_{i=1}^N \envert*{\1[\hat h(x_i)=y] - \hat\pi(x_i,y)} \leq \envert\calY-1.
  \end{equation}
\end{lemma}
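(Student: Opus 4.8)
The plan is to show that the deterministic classifier $\hat h$ can disagree with the randomized minimizer $\hat\pi$ only on the sample points where $\hat\pi$ is \emph{forced} to randomize, and that these ``fractional'' rows are so few that each class absorbs at most $\envert\calY-1$ of the discrepancy. First I would replay the complementary slackness analysis already used in the proof of \cref{thm:opt.fair}, now applied to the empirical program $\LP(r,g,\widehat\PP_X,\calC,\alpha)$. Writing $r_\mathrm{fair}(x,y)=r(x,y)+\sum_{k} g(x,k)\hat w(y,k)$ with $\hat w$ assembled from the optimal dual $\hat\psi$, complementary slackness gives $\hat\pi(x_i,y)>0\implies y\in\argmin_{y'}r_\mathrm{fair}(x_i,y')$, while by definition $\hat h(x_i)=\min\big(\argmin_{y'}r_\mathrm{fair}(x_i,y')\big)$. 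Hence at every sample point whose $\argmin$ is a singleton, $\hat\pi(x_i,\cdot)$ is the point mass on $\hat h(x_i)$ and contributes $0$ to $\sum_i\envert{\1[\hat h(x_i)=y]-\hat\pi(x_i,y)}$; only the \emph{tie rows}, where $\envert{\argmin_{y'}r_\mathrm{fair}(x_i,\cdot)}\ge 2$, can contribute, and a tie row adds at most $1$ to the sum for any fixed $y$.

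Next I would take $\hat\pi$ to be a vertex (basic feasible solution) of the empirical LP and read off the support structure. At a vertex the active constraints---the $N$ row-stochasticity equalities, the tight fairness inequalities, and the tight non-negativity constraints $\hat\pi(x_i,y)=0$---must be linearly independent and number $N\envert\calY+C$. Counting forces the number of strictly positive entries of $\hat\pi$ to exceed $N$ by only a controlled amount, so that only a small set of rows can carry more than one positive entry, i.e.\ be tie rows. I would then organize the positive entries into the bipartite support graph on rows (sample indices) versus classes $\calY$, with an edge $(i,y)$ whenever $\hat\pi(x_i,y)>0$.

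The per-column bound is where I would appeal to the transportation-polytope intuition. If the support graph is acyclic, then in each connected component containing $s$ distinct classes there are at most $s-1$ tie rows (a row of degree one is a point mass), and since distinct components use disjoint class-sets, a fixed class $y$ lies in a single component with $s\le\envert\calY$ classes, contributing at most $\envert\calY-1$ disagreements---exactly the claimed bound. To justify acyclicity I would use that a cyclic support admits a row-sum-preserving $\pm\varepsilon$ perturbation around the cycle, which at a vertex must be blocked by a tight fairness constraint; invoking the continuity furnished by \cref{ass:uniqueness} and \cref{ass:continuity} (the same smoothing of $(r,g)\sharp\PP_X$ underlying \cref{rem:continuity}), I would argue that, almost surely over the draw of $x_1,\dots,x_N$, the optimizer is attained at a support admitting no such blocked cycle, reducing the support to a forest.

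The hard part, and the step I expect to be the main obstacle, is precisely this combinatorial-versus-genericity reconciliation. The naive vertex count does not by itself deliver the clean $\envert\calY-1$: it bounds the number of tie rows in terms of the tight fairness constraints, a quantity a priori scaling with $\sum_{c}(\envert{\calI_c}-1)$ rather than with $\envert\calY$. Making the per-column bound independent of the number of groups requires genuinely exploiting the almost-sure continuity to collapse the cyclic (degenerate) support configurations that the generic vertex count would otherwise permit, so that the class-side projection of the support is a forest and tree-edge counting applies. Carefully arguing that continuity rules out these simultaneously-optimal cyclic supports with probability one---rather than merely ruling out isolated ties at fixed weights---is the delicate point on which the proof hinges.
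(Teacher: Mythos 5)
Your opening step coincides with the paper's: complementary slackness for the empirical program localizes every disagreement between $\hat h$ and $\hat\pi$ to ``tie rows'' where $\argmin_{y'} r_\mathrm{fair}(x_i,y')$ is not a singleton. But from there the paper takes a geometric route, not a polyhedral one. It never selects a basic feasible solution or looks at the support graph: it views $x\mapsto\1[\hat h(x)=y]$ as a linear one-versus-rest classifier on the features $u(x)=(r(x),g(x))\in\RR^{\calY\times K}$ with class coefficients $\beta_{y'}=(\be_{y'},\hat w_{y'})$, observes that a disagreement at $x_i$ for the fixed class $y$ forces $u(x_i)$ to lie on one of the $\envert\calY-1$ hyperplanes $\{u:(\beta_y-\beta_{y'})^\T u=0\}$, $y'\neq y$, whose normals have two nonzero coordinates in the $\calY$-block, and then invokes \cref{ass:continuity} to assert that almost surely each such hyperplane captures at most one of the samples $u(x_1),\dots,u(x_N)$. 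Summing over the $\envert\calY-1$ class pairs gives the bound directly, for any minimizer, with no vertex counting and no graph combinatorics.

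The genuine gap in your plan is the step you yourself flag: the forest (acyclicity) claim, and it is not merely delicate---it is false in general. A cycle perturbation around the support changes the group-conditional prediction rates, so at an optimal vertex it is legitimately blocked by tight fairness constraints; this blocking is the \emph{generic} situation (randomizing in several rows is precisely how the LP meets active constraints), not a degeneracy that continuity of $(r,g)\sharp\PP_X$ can remove. Concretely, take binary $\calY$ and attribute-aware statistical parity with three groups: generically the optimum pins the centroid $q$ by one group's rate landing on a kink of its group-wise risk curve while the other two groups sit at the window edges $q\pm\alpha/2$ strictly inside linear pieces, which forces a fractional row in each of those two groups; with $\envert\calY=2$ these four support edges already form a cycle, at the unique optimum, with positive probability. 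Hence no genericity argument collapses the support to a forest, and your per-component counting cannot be rescued (it also silently omits disagreement rows with $\hat h(x_i)=y$ but $\hat\pi(x_i,y)=0$, which are not adjacent to $y$ in the support graph, though a total-tie-row count would absorb that if the forest claim held). It is worth noting that the same example puts pressure on the terse step in the paper's own proof---when $K\geq2$ any two sample features lie on \emph{some} hyperplane of the data-dependent family $\{(\be_y-\be_{y'},v):v\in\RR^K\}$, so ``at most one sample per boundary'' needs an optimality/flat-direction argument beyond \cref{ass:continuity} alone---but the paper's invariant (one sample per class-pair boundary) is strictly weaker than your global forest structure, which is the wrong target.
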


\begin{proof}
  The binary classifier $x\mapsto \1[\hat h(x)=y]$ is a linear multiclass classifier in one-versus-rest mode \cref{eq:lmc.ovr} on the features $u(x)\coloneqq(r(x),g(x))\in\RR^{\calY\times K}$:
  \begin{equation}
  \1[\hat h(x)=y] = \1\!\bigg[\min \!\bigg( \argmin_{y'\in\calY} \beta_{y'}^\T u(x) \bigg) = y \bigg], \label{eq:lem.c9.0}
  \end{equation}
   where we defined $\beta_y=(\be_{y}, \hat w_{y})$ with $(\hat w_y)_k= -(\sum_{c} \1[y_c=y, k\in\calI_c] {\hat \psi_{c,k}}/{\widehat \Pr(Z_k=1)})$ and $\hat\psi$ is the optimal dual values.

  As a consequence of complementary slackness in \cref{eq:1}, $\1[\hat h(x)=y]=\hat\pi(x,y)$ when $y$ is the unique ``best'' class that attains the $\argmin$ in \cref{eq:lem.c9.0}, or when $y$ is not a ``best'' class.  This means that disagreement occurs when the feature $u(x)$ lies on the decision boundary between $y$ and some other class $y'$, and therefore
  \begin{equation}
      \sum_{i=1}^N \envert*{\1[\hat h(x_i)=y] - \hat\pi(x_i,y)} \leq \sum_{i=1}^N \1\sbr*{\exists y'\neq y,\, (\beta_y - \beta_{y'})^\T u(x_i) = 0 }.
  \end{equation}

  Because the features $u(x_i)$ are samples of the push-forward $(r,g)\sharp\PP_X$, by \cref{ass:continuity}, for each $y'\neq y$, almost surely no more than one $u(x_i)$ can occupy the linear subspace given by $(\beta_y - \beta_{y'})$, which has two non-zero coordinates in the $\calY$-component.  So the number of disagreements is no more than $\envert\calY -1$.
\end{proof}

\begin{proof}[Proof of \cref{cor:representation}]
  By objective of \LPp,
  \begin{equation}
    \widehat{\mathrm{OPT}} =   \frac1N\sum_{i=1}^N\sum_{y\in\calY} r(x_i,y)\hat\pi(x_i, y),
  \end{equation}
  and by \cref{lem:representation}, almost surely,
  \begin{align}
    \envert*{\widehat\E_X \sbr{ r(X, \hat h(X))} - \widehat{\mathrm{OPT}}}
     &= \frac1N \envert*{\sum_{i=1}^N\sum_{y\in\calY} r(x_i,y) \rbr*{\1[\hat h(x_i)=y] - \hat\pi(x_i,y)}} \\
     &\leq \frac1N\enVert  r_\infty   \sum_{y\in\calY} \sum_{i=1}^N \envert*{\1[\hat h(x_i)=y] - \hat\pi(x_i,y)} \\
     &\leq \frac1N\enVert r_\infty \envert\calY(\envert\calY-1).
  \end{align}

  Similarly, by the constraint of \LPp, for all $c\in[C]$ and $k\in\calI_c$,
  \begin{equation}
  \envert*{\frac1N\sum_{i=1}^N\frac{g(x_i,k)}{\widehat\Pr(Z_k=1)}\hat\pi(x_i,y_c) - \hat q_c  } \leq \frac\alpha2,
  \end{equation}
  and by \cref{lem:representation}, almost surely,
  \begin{align}
    \envert*{ \widehat\E_X \sbr*{\frac{g(X,k)}{\widehat\Pr(Z_k=1)} \1[\hat h(X)=y_c]  } - \hat q_c}  
    &\leq \frac\alpha2 + \frac1N\envert*{\sum_{i=1}^N\frac{g(x_i,k)}{\widehat\Pr(Z_k=1)}\rbr*{\1[\hat h(x_i) = y_c] - \hat\pi(x_i,y_c)} } \\
    &\leq \frac\alpha2 + \frac1{N \widehat\Pr(Z_k=1)}\sum_{i=1}^N\envert*{\1[\hat h(x_i) = y_c] - \hat\pi(x_i,y_c)}  \\
    &\leq \frac\alpha2 + \frac1{N \widehat\Pr(Z_k=1)}(\envert\calY-1) \\
    &\leq \frac\alpha2 + \frac2{N \Pr(Z_k=1)}(\envert\calY-1);
  \end{align}
  the last inequality holds for all $N\geq \max_{k} 2\ln(2K/\delta)/\Pr(Z_k=1)^2$  w.p.\ at least $1-\delta$, under which
  \begin{equation}
  \envert*{\widehat\Pr(Z_k=1) - \Pr(Z_k=1)}\leq \sqrt{\frac{1}{2N}\ln\frac{2K}\delta}\leq \frac{\Pr(Z_k=1)}2 \label{eq:pop.emp}
\end{equation}
 by Hoeffding's inequality and a union bound, which implies $1/\widehat\Pr(Z_k=1)\leq 2/\Pr(Z_k=1)$.
\end{proof}

With all the tools above, we can now prove \cref{thm:sample}.

\begin{proof}[Proof of \cref{thm:sample}]
  We first bound the violation of fairness constraints, then the excess risk.

  \paragraph{Constraint Violation.}
  Denote  a minimizer of the empirical $\LPp(r,g,\widehat \PP_X,\calC,\alpha)$ by $(\hat\pi,\hat q)$, then for all $c\in[C]$ and $k\in\calI_c$, w.p.\ at least $1-\delta$,
  \begin{align}
    \MoveEqLeft\envert*{\Pr(\hat h(X)=y_c\mid Z_k=1) - \hat q_c} \\
    &=  \envert*{ \E_X \sbr*{\frac{g(X,k)}{\Pr(Z_k=1)} \1[\hat h(X)=y_c]  } - \hat q_c} \\
    &\leq \begin{multlined}[t]
    \envert*{ \widehat\E_X \sbr*{\frac{g(X,k)}{\widehat\Pr(Z_k=1)} \1[\hat h(X)=y_c]  } - \hat q_c} \\
      + \envert*{ \widehat\E_X \sbr*{\frac{g(X,k)}{\widehat\Pr(Z_k=1)} \1[\hat h(X)=y_c]  } - \widehat\E_X \sbr*{\frac{g(X,k)}{\Pr(Z_k=1)} \1[\hat h(X)=y_c]  }}\\
      + \envert*{  \widehat\E_X \sbr*{\frac{g(X,k)}{\Pr(Z_k=1)} \1[\hat h(X)=y_c]  } - \E_X \sbr*{\frac{g(X,k)}{\Pr(Z_k=1)} \1[\hat h(X)=y_c]  }}\end{multlined} \\
    &\leq \begin{multlined}[t] \frac\alpha 2 + \frac{\envert\calY}{N\Pr(Z_k=1)}    + \widehat \E_X[g(X,k)]\envert*{ \frac{1}{\widehat\Pr(Z_k=1)} -\frac{1}{\Pr(Z_k=1)} }  \\
     + O\rbr*{\frac{1}{\Pr(Z_k=1)} \sqrt{\frac{\envert\calY K \log \envert\calY + \ln K/\delta}{N}} } \end{multlined} \\
    &\leq \frac\alpha 2 + \frac{\envert\calY}{N\Pr(Z_k=1)}    + O\rbr*{ \frac{1}{\Pr(Z_k=1)} \sqrt{\frac{\ln K/\delta}{N}} + \frac{1}{\Pr(Z_k=1)} \sqrt{\frac{\envert\calY K \log \envert\calY + \ln K/\delta}{N}} },
  \end{align}
  where the second inequality is by \cref{cor:representation} and \cref{thm:im.vc} with the VC dimension in \cref{prop:vc.ovr}, and the last inequality follows from \cref{eq:pop.emp}.    So, when $N\geq \Omega(\envert\calY)$,
\begin{align}
  \MoveEqLeft \max_{k,k'\in\calI_c}\envert*{\Pr(\hat h(X)=y_c\mid Z_k=1) - \Pr(\hat h(X)=y_c\mid Z_{k'}=1)  } \\
  &\leq \max_{k\in\calI_c} 2\envert*{\Pr(\hat h(X)=y_c\mid Z_k=1) - \hat q_c } \\
  &\leq  \alpha + O\rbr*{ \max_{k\in[K]}\frac{1}{\Pr(Z_k=1)^2} \sqrt{\frac{\envert\calY K \log \envert\calY + \ln K/\delta}{N}} },
\end{align}
and this concludes the bound on constraint violation.

  Before continuing to bounding the excess risk, we similarly bound the constraint violation of the optimal fair classifier $h$ on the empirical problem $\LP(r,g,\widehat\PP_X,\calC,\alpha)$, which will be used later (in the above, we have bounded that of the empirical fair classifier $\hat h$ on the population problem $\LP(r,g,\PP_X,\calC,\alpha)$).  Let $(\pi,q)$ and $(\phi,\psi)$ denote the optimal primal and dual values of $\LP(r,g,\widehat\PP_X,\calC,\alpha)$, respectively.  Note that the optimal fair classifier $h$, given in \cref{thm:opt.fair}, is equal to $\pi$ almost everywhere under \cref{ass:uniqueness}.
  
  With probability at least $1-\delta$, for all $c\in[C]$ and $k\in\calI_c$,
    \begin{align}
    \MoveEqLeft\envert*{\widehat\Pr(h(X)=y_c\mid Z_k=1) -  q_c} \\
    &\leq \begin{multlined}[t]\envert*{ \E_X \sbr*{\frac{g(X,k)}{\Pr(Z_k=1)} \1[ h(X)=y_c]  } - q_c} \\
    + \envert*{ \E_X \sbr*{\frac{g(X,k)}{\Pr(Z_k=1)} \1[ h(X)=y_c]  } -\E_X \sbr*{\frac{g(X,k)}{\widehat\Pr(Z_k=1)} \1[ h(X)=y_c]  }}\\
     + \envert*{  \E_X \sbr*{\frac{g(X,k)}{\widehat\Pr(Z_k=1)} \1[ h(X)=y_c]  } - \widehat\E_X \sbr*{\frac{g(X,k)}{\widehat\Pr(Z_k=1)} \1[ h(X)=y_c]  }} \end{multlined} \\
    &\leq \frac\alpha 2 +  O\rbr*{ \frac{1}{\Pr(Z_k=1)} \sqrt{\frac{\ln K/\delta}{N}} + \frac{1}{\Pr(Z_k=1)} \sqrt{\frac{\ln K/\delta}{N}} }
  \end{align}
   by Hoeffding's inequality, since $h$ does not depend on the samples. So the constraint violation of $h$ on the empirical problem is
\begin{align}
  \max_{k,k'\in\calI_c}\envert*{\widehat\Pr( h(X)=y_c\mid Z_k=1) - \widehat\Pr( h(X)=y_c\mid Z_{k'}=1)  } 
  &\leq  \alpha + O\rbr*{ \max_{k\in[K]} \frac{1}{\Pr(Z_k=1)} \sqrt{\frac{\ln K/\delta}{N}}} \\
  &\eqqcolon \alpha + B_1.
\end{align}

  \paragraph{Excess Risk.}
We begin by constructing a randomized classifier $\tilde h$ in terms of $h$ that satisfies the fairness constraints on the empirical problem  (similar to the proof of \cref{thm:sensitivity} Part~2), given by
  \begin{equation}
    \tilde h =\begin{cases}
      h & \textrm{w.p. $1-\beta$} \\
      1                    & \textrm{w.p. $\beta$},  \\
    \end{cases} \qquad\text{where}\quad \beta = \min\rbr*{1-\alpha, \frac{B_1}{\alpha+B_1}}\in[0,1].
  \end{equation}
  Then we can bound the optimal value of the empirical problem $\widehat{\mathrm{OPT}}$ by the value of $\tilde h$:
  \begin{align}\label{eq:sample.3}
    \widehat{\mathrm{OPT}}
    \leq \widehat\E_X\E_{\tilde h} [r(X,\tilde h(X))]\leq (1-\beta) \widehat\E_X [r(X,h(X))] + \beta\enVert r_\infty,
  \end{align}
  and we note that $\beta\leq B_1/\alpha$.
  
   Next, note that
  \begin{align}
    \envert*{ \E_X [r(X,\hat h(X))] - \widehat\E_X [r(X,\hat h(X))] } 
    &\leq \sum_{y\in\calY} \envert*{ \E_X\sbr*{r(X,y)\1[\hat h(X)=y]} -  \widehat\E_X\sbr*{r(X,y)\1[\hat h(X)=y] }} \\
    &\leq \envert\calY \enVert r_\infty O\rbr*{ \sqrt{\frac{\envert\calY K \log \envert\calY + \ln K/\delta}{N}} } \\
    &\eqqcolon B_2 \label{eq:sample.4}
  \end{align}
  by \cref{thm:im.vc} with the VC dimension in \cref{prop:vc.ovr}.
  
  Then, putting everything together,
  \begin{align}
    \MoveEqLeft\E_X [r(X, \hat h(X))] \\
     & \leq \widehat\E_X [r(X, \hat h(X))] + B_2                                                                                 &  & \textrm{by \cref{eq:sample.4}}        \\
     & \leq \widehat{\mathrm{OPT}} + \frac{\enVert r_\infty\envert\calY^2}N  + B_2   &  & \textrm{by \cref{cor:representation}} \\
     & \leq \widehat\E_X [r(X, h(X))] + \frac{\enVert r_\infty B_1}{\alpha}  + \frac{\enVert r_\infty\envert\calY^2}N  + B_2 &  & \textrm{by \cref{eq:sample.3}}        \\
     & \leq \E_X [r(X, h(X))] + \frac{\enVert r_\infty B_1}{\alpha}  + \frac{\enVert r_\infty\envert\calY^2}N  + 2B_2        &  & \textrm{by \cref{eq:sample.4}}        \\
     & =  \mathrm{OPT} + \frac{\enVert r_\infty B_1}{\alpha}  + \frac{\enVert r_\infty\envert\calY^2}N  + 2B_2                                 &  & \textrm{by \cref{thm:opt.fair}.}
  \end{align}

  We may conclude by taking a final union bound over all events considered above.
\end{proof}

\begin{table}[p]
  \caption{Wall clock running time (in seconds) of the post-processing stage of LinearPost (i.e., solving the \protect\LP\ on \cref{ln:alg.lp}) for the results in \cref{fig:exp.adult,fig:exp.compas,fig:exp.acsincome2,fig:exp.acsincome5,fig:exp.biasbios}, with the Gurobi optimizer under $\alpha=0.001$. Average of five random seeds on an Amazon EC2 \texttt{c6i.16xlarge} instance.}
  \label{tab:runtime}
  \centering
  \scalebox{0.8}{%
    \begin{tabular}{lC{2cm}C{2cm}C{2cm}}
      \toprule
       Backbone           & SP                                                                                   & EOpp  & EO      \\
      \midrule
      \textit{Adult}      & \multicolumn{3}{l}{$\envert\calA=2$, $\envert\calY=2$, $n_\textrm{postproc}=17095$}                    \\
      \midrule
      Log.\ reg.          & 0.64                                                                                 & 0.57  & 0.77    \\
      GBDT                & 0.63                                                                                 & 0.57  & 0.74    \\
      \midrule
      \textit{COMPAS}     & \multicolumn{3}{l}{$\envert\calA=2$, $\envert\calY=2$, $n_\textrm{postproc}=1847$}                     \\
      \midrule
      Log.\ reg.          & 0.08                                                                                 & 0.07  & 0.10    \\
      GBDT                & 0.08                                                                                 & 0.07  & 0.09    \\
      \midrule
      \textit{ACSIncome2} & \multicolumn{3}{l}{$\envert\calA=2$, $\envert\calY=2$, $n_\textrm{postproc}=116515$}                   \\
      \midrule
      Log.\ reg.          & 4.90                                                                                 & 4.26  & 5.73    \\
      GBDT                & 5.14                                                                                 & 4.20  & 6.21    \\
      MLP                 & 4.74                                                                                 & 4.18  & 5.72    \\
      \midrule
      \textit{ACSIncome5} & \multicolumn{3}{l}{$\envert\calA=5$, $\envert\calY=5$, $n_\textrm{postproc}=116515$}                   \\
      \midrule
      Log.\ reg.          & 18.23                                                                                & 19.76 & 76.75   \\
      GBDT                & 15.67                                                                                & 17.25 & 88.48   \\
      MLP                 & 15.86                                                                                & 17.56 & 82.09   \\
      \midrule
      \textit{BiasBios}   & \multicolumn{3}{l}{$\envert\calA=2$, $\envert\calY=28$, $n_\textrm{postproc}=55080$}                   \\
      \midrule
      BERT                & 38.98                                                                                & 41.46 & 2826.37 \\
      \bottomrule
    \end{tabular}
  }%
\end{table}

\begin{figure}[p]
  \centering
  \includegraphics[width=0.8\linewidth]{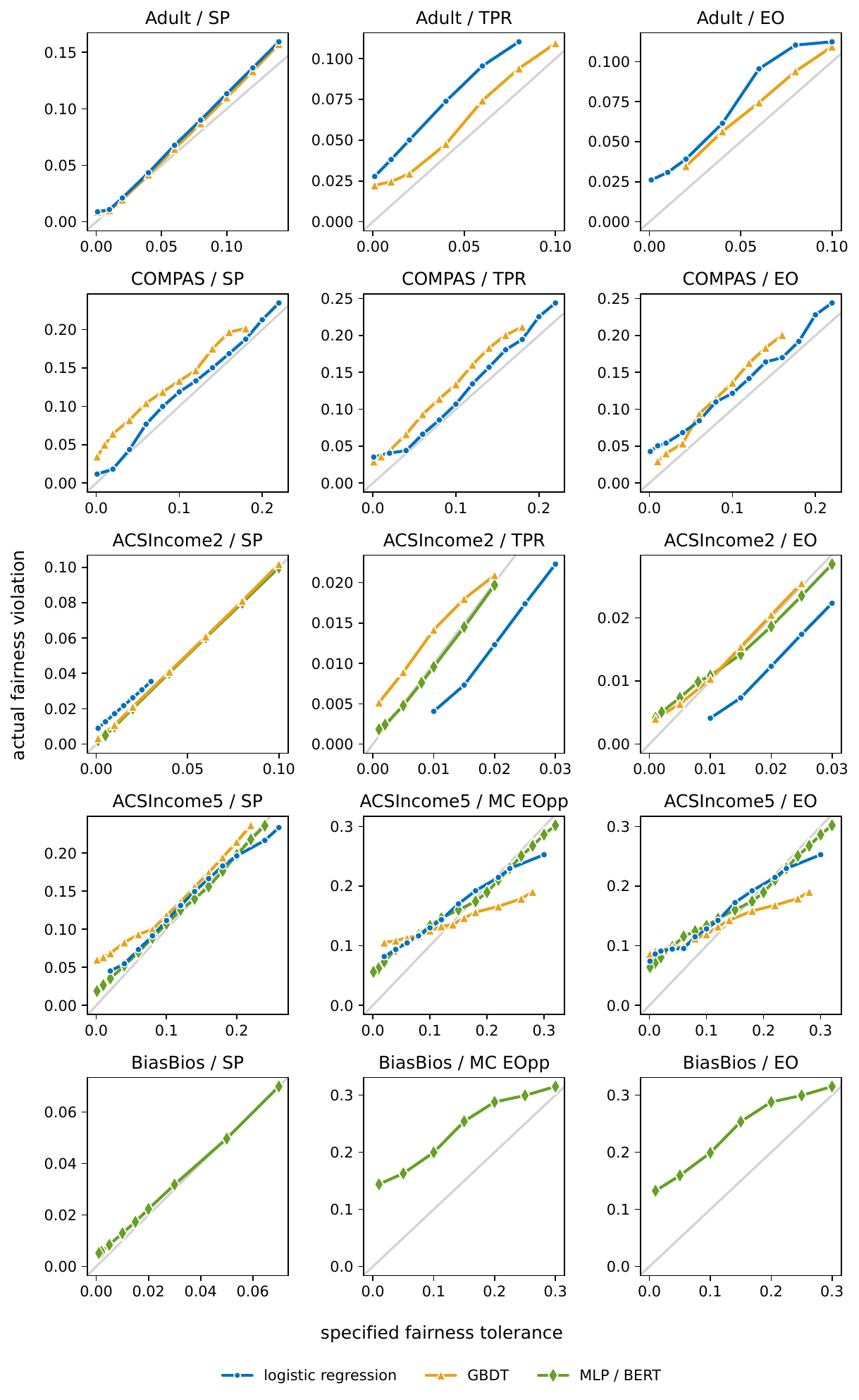}
  \caption{Specified fairness tolerance $\alpha$ vs.\ the actual achieved level of fairness for LinearPost.}
  \label{fig:alpha}
\end{figure}

\begin{figure}[p]
  \centering
  \includegraphics[width=1\linewidth]{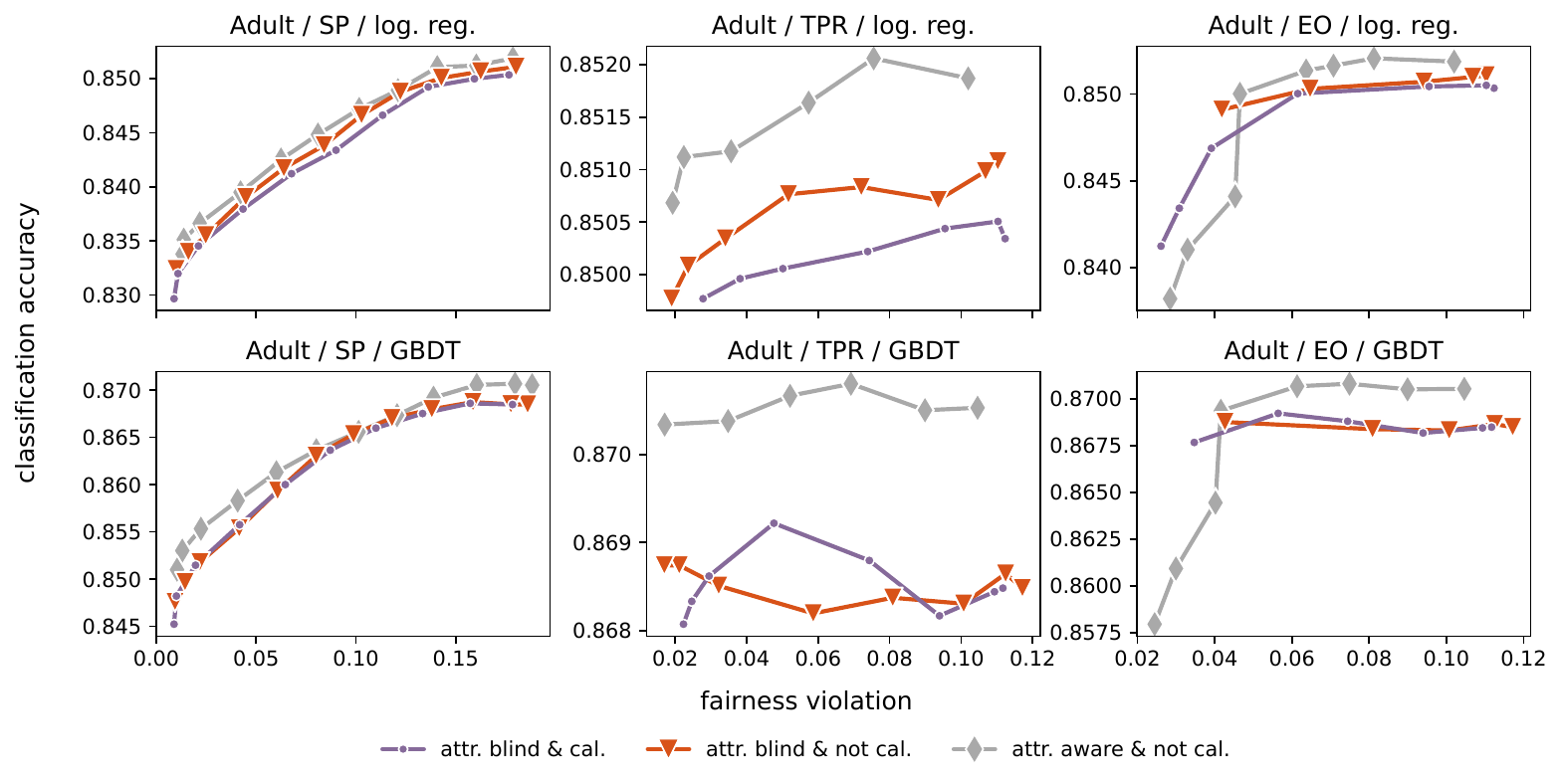}
  \caption{Ablation studies for LinearPost on Adult, comparing post-processing results on calibrated vs.\ uncalibrated predictors in the attribute-blind setting, and on uncalibrated attribute-aware vs.\ attribute-blind predictors.  Average of five random seeds.}
  \label{fig:abl.adult}
\end{figure}

\begin{figure}[p]
  \centering
  \includegraphics[width=1\linewidth]{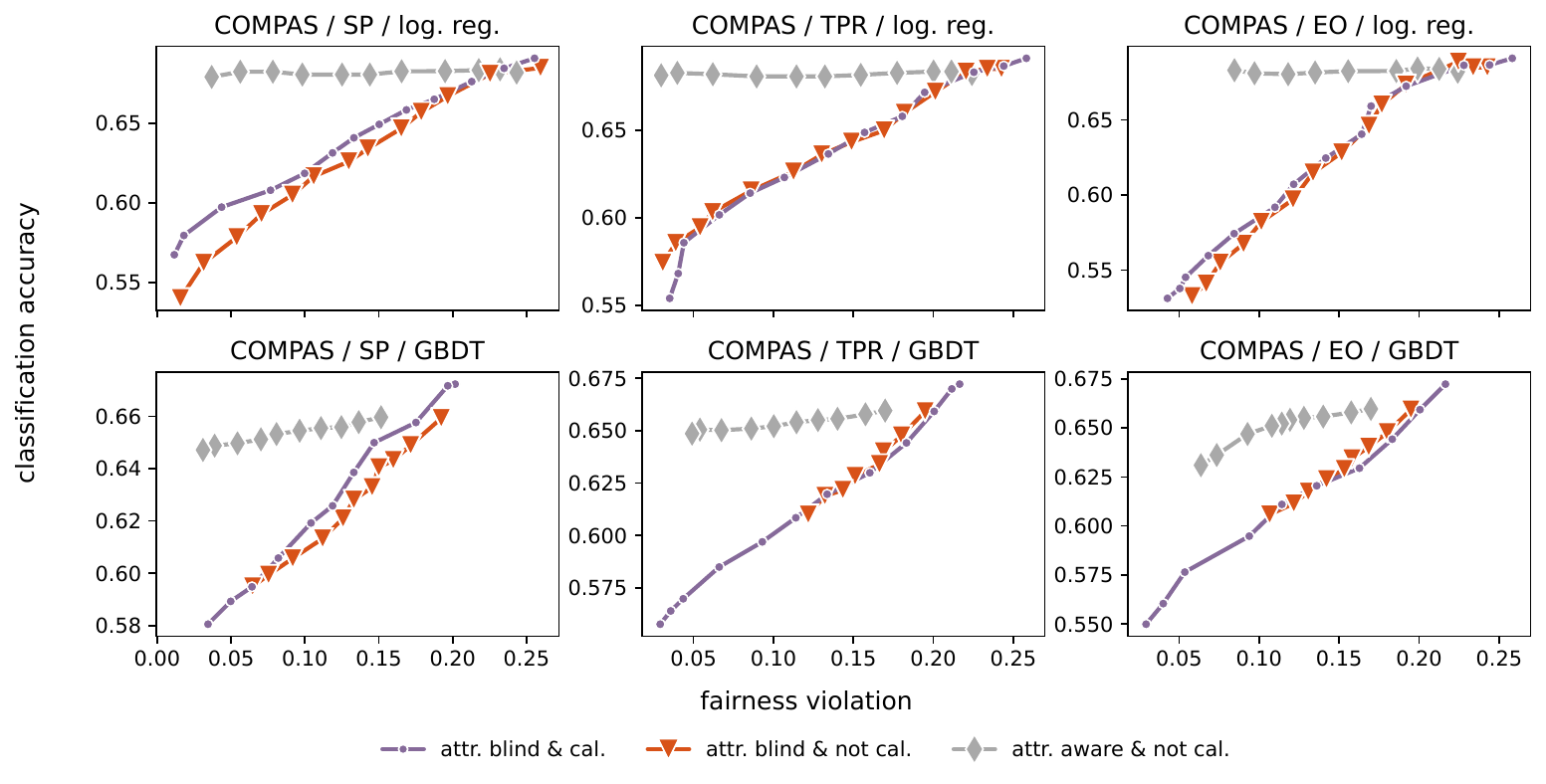}
  \caption{Ablation studies for LinearPost on COMPAS\@.}
  \label{fig:abl.compas}
\end{figure}

\begin{figure}[p]
  \centering
  \includegraphics[width=1\linewidth]{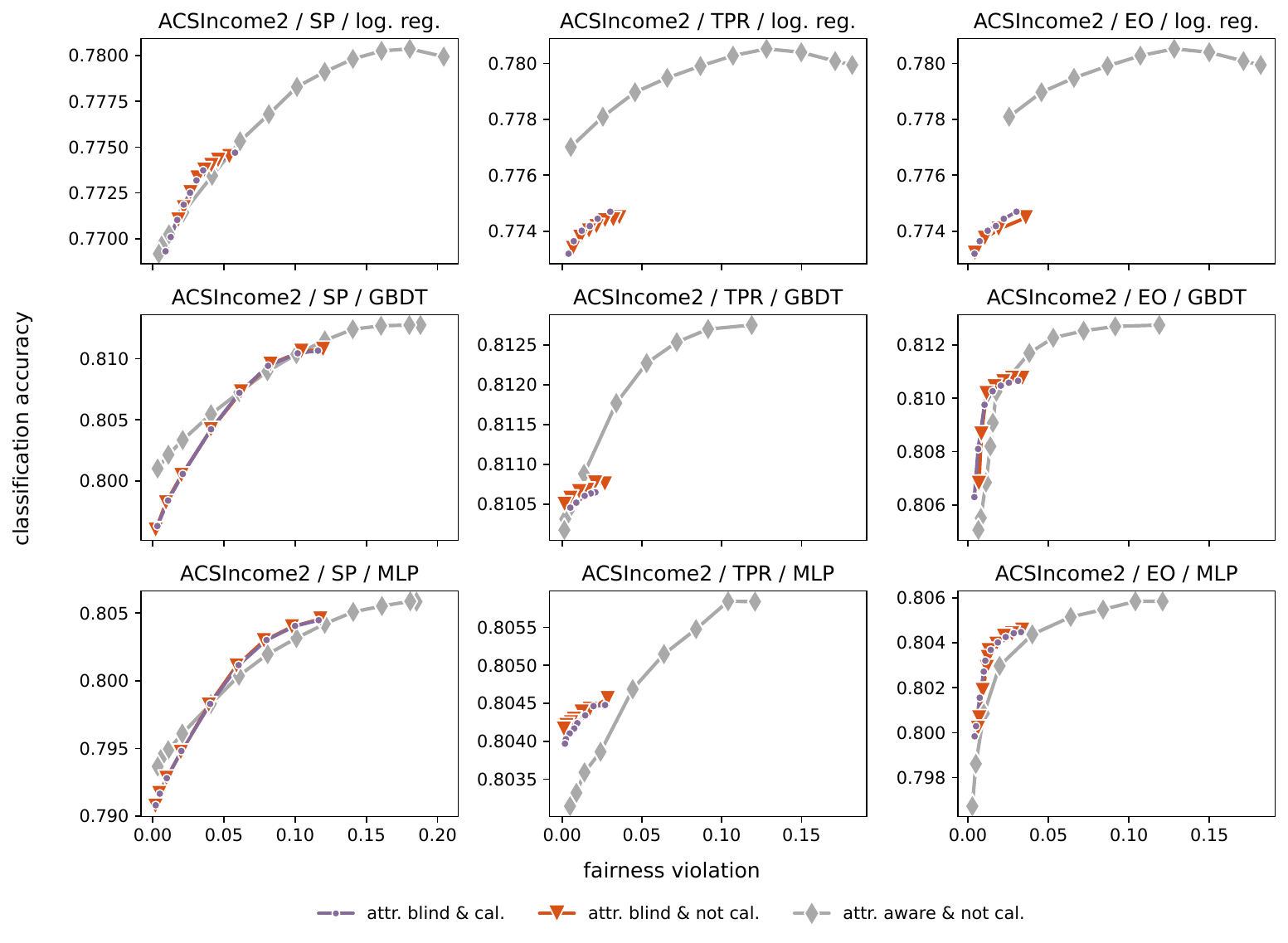}
  \caption{Ablation studies for LinearPost on ACSIncome2.}
  \label{fig:abl.acsincome2}
\end{figure}

\begin{figure}[p]
  \centering
  \includegraphics[width=1\linewidth]{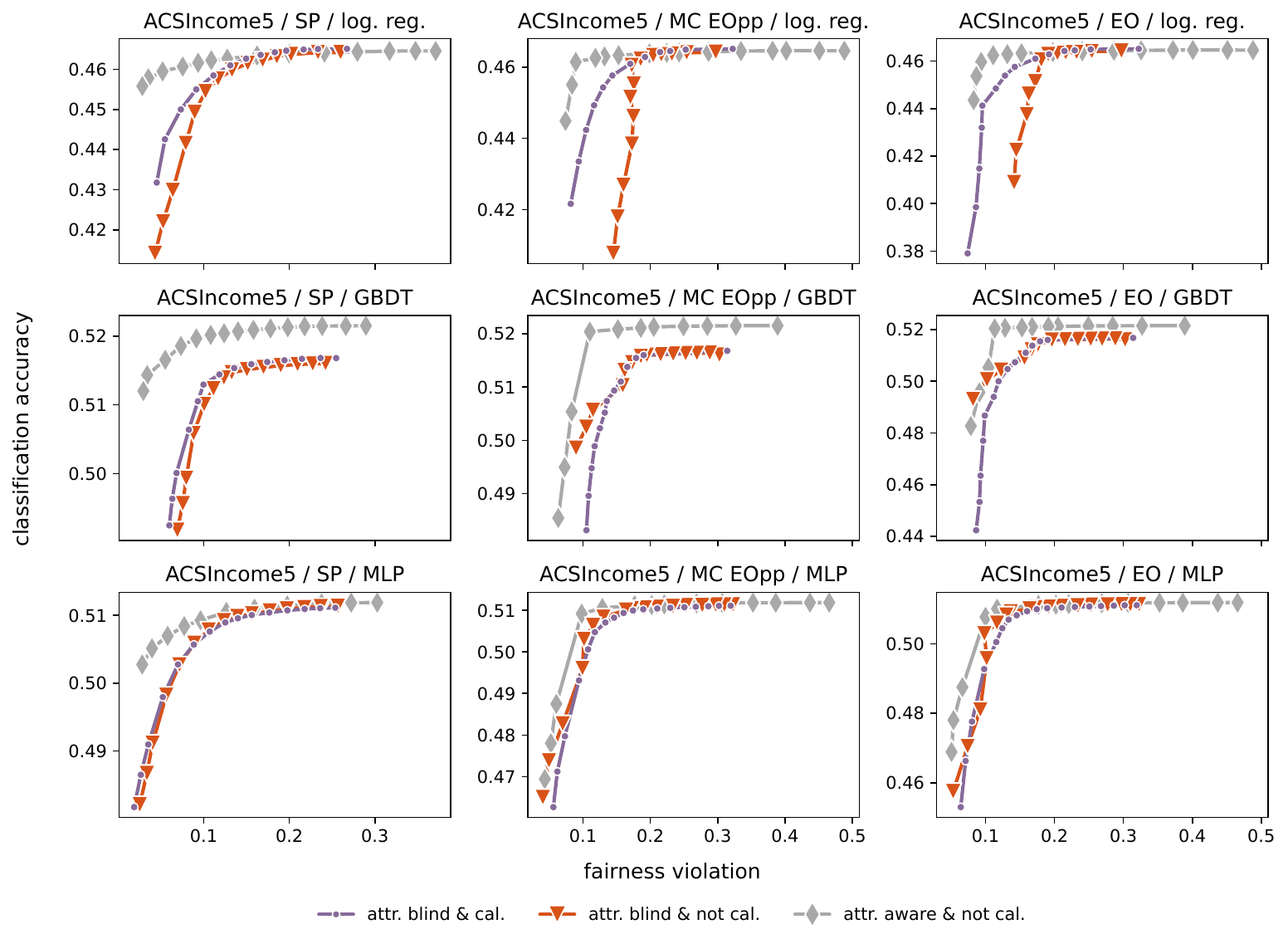}
  \caption{Ablation studies for LinearPost on ACSIncome5.}
  \label{fig:abl.acsincome5}
\end{figure}

\begin{figure}[p]
  \centering
  \includegraphics[width=1\linewidth]{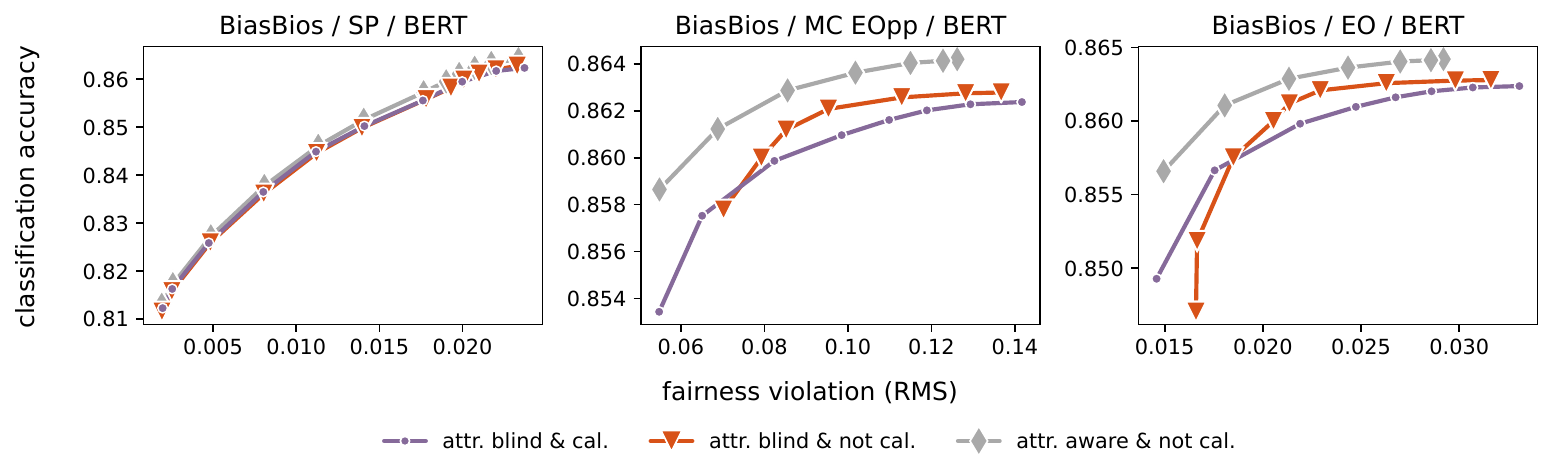}
  \caption{Ablation studies for LinearPost on BiasBios.}
  \label{fig:abl.biasbios}
\end{figure}

\end{document}